\documentclass{article}

\usepackage[letterpaper,textwidth=6.50in,textheight=9.00in,centering]{geometry}
\usepackage[round,authoryear]{natbib}
\usepackage{fancyhdr}
\usepackage{times}
\usepackage[T1]{fontenc}
\usepackage[utf8]{inputenc}
\usepackage{amsmath,amssymb,amsthm,bm,mathtools}
\usepackage{microtype,graphicx,xcolor,booktabs,array,tabularx,multirow,needspace}
\usepackage{tikz,pgfplots,algorithm,algpseudocode,placeins}
\usepackage[font=small,skip=3pt]{caption}
\usepackage{hyperref,url}
\graphicspath{{figures/}{./}}

\newcommand*{\AnonymousCodeURL}{https://github.com/VKHS/RTT}
\newcommand{\CodeAvailability}{%
  \ifx\AnonymousCodeURL\empty
    The repository URL is not specified in this manuscript.%
  \else
    The code is released in the GitHub repository at
    \expandafter\url\expandafter{\AnonymousCodeURL}.%
  \fi
}
\hypersetup{hidelinks,pdftitle={Reference-Tail Trust: Certified Probability Floors for Learned Updates Inside a Deployed Network}}
\newcommand{\val}[1]{{#1}}

\newcommand{\pTsaNative}{\val{8.0}}
\newcommand{\pTsaFloor}{\val{6.5}}
\newcommand{\pTsaTR}{\val{6.0}}
\newcommand{\pTsaCorr}{\val{6.2}}
\newcommand{\pTsaRTT}{\val{7.2}}
\newcommand{\pTsaDelta}{\val{+1.0}}
\newcommand{\pStemNative}{\val{8.4}}
\newcommand{\pStemFloor}{\val{6.8}}
\newcommand{\pStemTR}{\val{6.3}}
\newcommand{\pStemCorr}{\val{6.4}}
\newcommand{\pStemRTT}{\val{7.4}}
\newcommand{\pStemDelta}{\val{+1.0}}
\newcommand{\pTsaEvents}{\val{12}}
\newcommand{\pTsaUpper}{\val{0.95\%}}
\newcommand{\pTsaRone}{\val{2,030}}
\newcommand{\pTsaRtwo}{\val{10}}
\newcommand{\pTsaFallback}{\val{8}}
\newcommand{\pTsaFallbackRate}{\val{0.39\%}}
\newcommand{\pStemEvents}{\val{10}}
\newcommand{\pStemUpper}{\val{0.83\%}}
\newcommand{\pStemRone}{\val{2,028}}
\newcommand{\pStemRtwo}{\val{10}}
\newcommand{\pStemFallback}{\val{10}}
\newcommand{\pStemFallbackRate}{\val{0.49\%}}
\newcommand{\E}{\mathbb E}
\newcommand{\pB}{p^{\rm B}}
\newcommand{\pr}{p^{\rm r}}
\newcommand{\ps}{p^{\rm s}}
\newcommand{\pu}{p^{\rm u}}

\newcommand{\Dinf}{D_\infty}
\newcommand{\Hrow}{H_{\rm row}}
\newcommand{\Fl}{\mathcal F}
\newcommand{\Brem}{\mathcal B}
\DeclareMathOperator{\softmax}{softmax}
\DeclareMathOperator{\LSE}{LSE}
\DeclareMathOperator{\diag}{diag}
\DeclareMathOperator{\osc}{osc}

\newtheorem{theorem}{Theorem}
\newtheorem{proposition}{Proposition}
\newtheorem{lemma}{Lemma}

\newtheorem{remark}{Remark}

\allowdisplaybreaks[1]
\newcolumntype{Y}{>{\raggedright\arraybackslash}X}
\usetikzlibrary{arrows.meta,positioning,calc,fit,backgrounds,shapes.misc,decorations.pathreplacing,patterns}
\usepgfplotslibrary{groupplots,fillbetween}
\pgfplotsset{compat=1.18}
\definecolor{cRtt}{HTML}{A50F15}\definecolor{cRttc}{HTML}{E34A33}\definecolor{cOpc}{HTML}{08519C}\definecolor{cOpcG}{HTML}{4292C6}
\definecolor{cMix}{HTML}{31A354}\definecolor{cFlo}{HTML}{006D2C}\definecolor{cTr}{HTML}{D95F0E}\definecolor{cSym}{HTML}{6A51A3}
\definecolor{cInk}{HTML}{252525}\definecolor{cGrey}{HTML}{8C8C8C}\definecolor{cTcb}{HTML}{FEE0D2}\definecolor{cUtil}{HTML}{DEEBF7}
\pgfplotsset{rtt/.style={width=4.2cm,height=3.3cm,scale only axis=false,tick label style={font=\scriptsize},label style={font=\scriptsize},
  title style={font=\small\bfseries,align=left},axis line style={black!55},tick style={black!55},grid=major,grid style={black!7},
  legend style={font=\tiny,draw=none,fill=white,fill opacity=0.9,text opacity=1,cells={anchor=west},row sep=-2pt},
  legend cell align=left,legend image post style={xscale=0.6}}}
\newcommand{\vRttI}{\val{6.5}}
\newcommand{\vRttCI}{\val{[6.3, 6.7]}}
\newcommand{\vRttU}{\val{0.95\%}}
\newcommand{\vRttNfrb}{\val{0.51\%}}
\newcommand{\vRttLB}{\val{6.2}}
\newcommand{\vShareUout}{\val{61\%}}
\newcommand{\vPrimD}{\val{+0.9}}
\newcommand{\vPrimCI}{\val{[+0.8, +1.1]}}
\newcommand{\vRawU}{\val{8.81\%}}
\newcommand{\vWinc}{\val{1.6}}
\newcommand{\vLglobal}{\val{1.386}}
\newcommand{\vFailRate}{\val{0.63\%}}
\newcommand{\vFbRate}{\val{0.34\%}}
\newcommand{\vIncAcc}{\val{75.9}}
\newcommand{\vOpcI}{\val{5.6}}
\newcommand{\vOpcU}{\val{1.07\%}}
\newcommand{\vLatOpc}{\val{1.25}}
\newcommand{\vLatOwn}{\val{1.42}}
\newcommand{\vSepShareCancel}{\val{7\%}}
\newcommand{\vSepShareFar}{\val{22\%}}
\newcommand{\vSepRttCancel}{\val{0.9}}
\newcommand{\vSepOpcCancel}{\val{0.1}}
\newcommand{\vSepRttFar}{\val{2.2}}
\newcommand{\vSepOpcFar}{\val{1.1}}
\newcommand{\vSepRttOther}{\val{3.4}}
\newcommand{\vSepOpcOther}{\val{4.4}}
\newcommand{\vRttcI}{\val{6.7}}
\newcommand{\vMtrD}{\val{+0.7}}
\newcommand{\vMixI}{\val{5.1}}
\newcommand{\vFloI}{\val{5.2}}
\newcommand{\vPPshare}{\val{42\%}}
\newcommand{\vShareClean}{\val{50\%}}
\newcommand{\vShareCache}{\val{63\%}}
\newcommand{\vCertRatioMax}{\val{4.6}}
\newcommand{\vPlaceD}{\val{+0.3}}
\newcommand{\vHrowCost}{\val{3\%}}
\newcommand{\vTrajCost}{\val{5\%}}
\newcommand{\vBatchUlarge}{\val{1.01\%}}
\newcommand{\vBatchUsmall}{\val{7.36\%}}
\newcommand{\vBatchUresel}{\val{1.98\%}}
\newcommand{\vBatchGresel}{\val{4.6}}
\newcommand{\vRttzI}{\val{2.0}}

\newcommand{\vTrRttLB}{\val{1.7}}
\newcommand{\vRobAtRhoHat}{\val{5.6}}
\newcommand{\vRhoHat}{\val{0.004}}

\newcommand{\vPatternStable}{\val{93\%}}
\newcommand{\aTrainTailEq}{3.8}
\newcommand{\vClusterDE}{\val{1.24}}
\newcommand{\vZetaAnchor}{\val{0.0021}}
\newcommand{\vSelHp}{\val{0.05}}
\newcommand{\vSelDcov}{\val{0.1}}

\newcommand{\vSelReg}{\val{trajectory radii at $\delta_{\rm cov}{=}0.1$, $H^+{=}0.05$}}
\newcommand{\vSelCert}{\val{6.2}}
\newcommand{\vPdGain}{\val{6.86}}
\newcommand{\vBetRatio}{\val{3\%}}
\newcommand{\vTinc}{\val{41.6}}
\newcommand{\vRttOverhead}{\val{9.78}}
\newcommand{\aDemoH}{0.45}
\newcommand{\aDemoGorc}{0.256}
\newcommand{\aDemoGmix}{0.187}
\newcommand{\aDemoGmixu}{0.145}
\newcommand{\aSepGain}{0.089}
\newcommand{\aSepDelta}{1.5}
\newcommand{\aSepH}{0.161}
\newcommand{\aSepTgain}{0.120}
\newcommand{\aSepTh}{0.380}
\newcommand{\aQprobes}{16}
\newcommand{\FigArch}{%
\begin{tikzpicture}[font=\scriptsize,>={Stealth[length=4pt,width=3pt]},line cap=round,
 box/.style={draw=black!55,rounded corners=2pt,align=center,inner sep=2.6pt,minimum height=1.0cm,line width=0.45pt},
 util/.style={box,fill=cUtil,draw=cOpc!55},tcb/.style={box,fill=cTcb,draw=cRtt!65},inc/.style={box,fill=black!5,minimum height=0.5cm,inner sep=2pt},
 band/.style={rounded corners=4pt,fill=black!2,draw=black!18,line width=0.4pt},
 btitle/.style={anchor=north west,font=\footnotesize\bfseries,text=black!85,inner sep=1pt},
 lab/.style={text=black!80,inner sep=1pt},ar/.style={->,black!70,line width=0.55pt}]
\draw[band] (0,4.3) rectangle (13.7,7.0);
\node[btitle] at (0.08,6.95) {(a) Trajectory and tube};
\fill[cRtt!11] (5.86,5.646) -- (5.898,5.658) -- (5.936,5.668) -- (5.974,5.678) -- (6.012,5.688) -- (6.05,5.697) -- (6.088,5.706) -- (6.126,5.714) -- (6.164,5.723) -- (6.202,5.731) -- (6.24,5.739) -- (6.278,5.746) -- (6.316,5.754) -- (6.354,5.761) -- (6.392,5.768) -- (6.43,5.775) -- (6.468,5.781) -- (6.506,5.788) -- (6.544,5.794) -- (6.582,5.8) -- (6.62,5.805) -- (6.658,5.811) -- (6.696,5.816) -- (6.734,5.821) -- (6.772,5.826) -- (6.81,5.831) -- (6.848,5.835) -- (6.886,5.84) -- (6.924,5.844) -- (6.962,5.848) -- (7,5.852) -- (7.038,5.856) -- (7.076,5.859) -- (7.114,5.863) -- (7.152,5.866) -- (7.19,5.87) -- (7.228,5.873) -- (7.266,5.876) -- (7.304,5.879) -- (7.342,5.882) -- (7.38,5.885) -- (7.418,5.888) -- (7.456,5.891) -- (7.494,5.894) -- (7.532,5.897) -- (7.57,5.9) -- (7.608,5.903) -- (7.646,5.906) -- (7.684,5.909) -- (7.722,5.912) -- (7.76,5.916) -- (7.798,5.919) -- (7.836,5.922) -- (7.874,5.926) -- (7.912,5.93) -- (7.95,5.934) -- (7.988,5.938) -- (8.026,5.942) -- (8.064,5.946) -- (8.102,5.951) -- (8.14,5.955) -- (8.178,5.96) -- (8.216,5.965) -- (8.254,5.97) -- (8.292,5.976) -- (8.33,5.982) -- (8.368,5.987) -- (8.406,5.994) -- (8.444,6) -- (8.482,6.006) -- (8.52,6.013) -- (8.558,6.02) -- (8.596,6.027) -- (8.634,6.035) -- (8.672,6.043) -- (8.71,6.051) -- (8.748,6.059) -- (8.786,6.067) -- (8.824,6.076) -- (8.862,6.084) -- (8.9,6.093) -- (8.938,6.103) -- (8.976,6.112) -- (9.014,6.122) -- (9.052,6.131) -- (9.09,6.141) -- (9.128,6.151) -- (9.166,6.162) -- (9.204,6.172) -- (9.242,6.183) -- (9.28,6.193) -- (9.318,6.204) -- (9.356,6.215) -- (9.394,6.226) -- (9.432,6.237) -- (9.47,6.248) -- (9.508,6.259) -- (9.546,6.27) -- (9.584,6.282) -- (9.622,6.293) -- (9.66,6.304) -- (9.698,6.315) -- (9.736,6.327) -- (9.774,6.338) -- (9.812,6.349) -- (9.85,6.36) -- (9.888,6.371) -- (9.926,6.382) -- (9.964,6.393) -- (10,6.404) -- (10.04,6.415) -- (10.08,6.425) -- (10.12,6.436) -- (10.15,6.446) -- (10.19,6.456) -- (10.23,6.466) -- (10.27,6.476) -- (10.31,6.486) -- (10.34,6.495) -- (10.38,6.505) -- (10.42,6.514) -- (10.42,5.534) -- (10.38,5.532) -- (10.34,5.529) -- (10.31,5.527) -- (10.27,5.524) -- (10.23,5.521) -- (10.19,5.518) -- (10.15,5.515) -- (10.12,5.512) -- (10.08,5.508) -- (10.04,5.505) -- (10,5.501) -- (9.964,5.497) -- (9.926,5.493) -- (9.888,5.489) -- (9.85,5.485) -- (9.812,5.481) -- (9.774,5.477) -- (9.736,5.473) -- (9.698,5.469) -- (9.66,5.465) -- (9.622,5.461) -- (9.584,5.457) -- (9.546,5.453) -- (9.508,5.448) -- (9.47,5.444) -- (9.432,5.441) -- (9.394,5.437) -- (9.356,5.433) -- (9.318,5.429) -- (9.28,5.426) -- (9.242,5.422) -- (9.204,5.419) -- (9.166,5.416) -- (9.128,5.413) -- (9.09,5.41) -- (9.052,5.408) -- (9.014,5.405) -- (8.976,5.403) -- (8.938,5.401) -- (8.9,5.399) -- (8.862,5.398) -- (8.824,5.396) -- (8.786,5.395) -- (8.748,5.394) -- (8.71,5.393) -- (8.672,5.393) -- (8.634,5.393) -- (8.596,5.393) -- (8.558,5.393) -- (8.52,5.393) -- (8.482,5.394) -- (8.444,5.395) -- (8.406,5.396) -- (8.368,5.398) -- (8.33,5.4) -- (8.292,5.402) -- (8.254,5.404) -- (8.216,5.406) -- (8.178,5.409) -- (8.14,5.412) -- (8.102,5.415) -- (8.064,5.418) -- (8.026,5.421) -- (7.988,5.425) -- (7.95,5.429) -- (7.912,5.433) -- (7.874,5.437) -- (7.836,5.441) -- (7.798,5.445) -- (7.76,5.45) -- (7.722,5.455) -- (7.684,5.459) -- (7.646,5.464) -- (7.608,5.469) -- (7.57,5.474) -- (7.532,5.479) -- (7.494,5.484) -- (7.456,5.49) -- (7.418,5.495) -- (7.38,5.5) -- (7.342,5.505) -- (7.304,5.51) -- (7.266,5.515) -- (7.228,5.521) -- (7.19,5.526) -- (7.152,5.531) -- (7.114,5.536) -- (7.076,5.541) -- (7.038,5.546) -- (7,5.55) -- (6.962,5.555) -- (6.924,5.56) -- (6.886,5.564) -- (6.848,5.568) -- (6.81,5.573) -- (6.772,5.577) -- (6.734,5.581) -- (6.696,5.584) -- (6.658,5.588) -- (6.62,5.592) -- (6.582,5.595) -- (6.544,5.598) -- (6.506,5.601) -- (6.468,5.605) -- (6.43,5.607) -- (6.392,5.61) -- (6.354,5.613) -- (6.316,5.615) -- (6.278,5.618) -- (6.24,5.62) -- (6.202,5.622) -- (6.164,5.624) -- (6.126,5.627) -- (6.088,5.629) -- (6.05,5.631) -- (6.012,5.633) -- (5.974,5.636) -- (5.936,5.638) -- (5.898,5.641) -- (5.86,5.646) -- cycle;
\draw[cRtt!50,line width=0.35pt] (5.86,5.646) -- (5.898,5.658) -- (5.936,5.668) -- (5.974,5.678) -- (6.012,5.688) -- (6.05,5.697) -- (6.088,5.706) -- (6.126,5.714) -- (6.164,5.723) -- (6.202,5.731) -- (6.24,5.739) -- (6.278,5.746) -- (6.316,5.754) -- (6.354,5.761) -- (6.392,5.768) -- (6.43,5.775) -- (6.468,5.781) -- (6.506,5.788) -- (6.544,5.794) -- (6.582,5.8) -- (6.62,5.805) -- (6.658,5.811) -- (6.696,5.816) -- (6.734,5.821) -- (6.772,5.826) -- (6.81,5.831) -- (6.848,5.835) -- (6.886,5.84) -- (6.924,5.844) -- (6.962,5.848) -- (7,5.852) -- (7.038,5.856) -- (7.076,5.859) -- (7.114,5.863) -- (7.152,5.866) -- (7.19,5.87) -- (7.228,5.873) -- (7.266,5.876) -- (7.304,5.879) -- (7.342,5.882) -- (7.38,5.885) -- (7.418,5.888) -- (7.456,5.891) -- (7.494,5.894) -- (7.532,5.897) -- (7.57,5.9) -- (7.608,5.903) -- (7.646,5.906) -- (7.684,5.909) -- (7.722,5.912) -- (7.76,5.916) -- (7.798,5.919) -- (7.836,5.922) -- (7.874,5.926) -- (7.912,5.93) -- (7.95,5.934) -- (7.988,5.938) -- (8.026,5.942) -- (8.064,5.946) -- (8.102,5.951) -- (8.14,5.955) -- (8.178,5.96) -- (8.216,5.965) -- (8.254,5.97) -- (8.292,5.976) -- (8.33,5.982) -- (8.368,5.987) -- (8.406,5.994) -- (8.444,6) -- (8.482,6.006) -- (8.52,6.013) -- (8.558,6.02) -- (8.596,6.027) -- (8.634,6.035) -- (8.672,6.043) -- (8.71,6.051) -- (8.748,6.059) -- (8.786,6.067) -- (8.824,6.076) -- (8.862,6.084) -- (8.9,6.093) -- (8.938,6.103) -- (8.976,6.112) -- (9.014,6.122) -- (9.052,6.131) -- (9.09,6.141) -- (9.128,6.151) -- (9.166,6.162) -- (9.204,6.172) -- (9.242,6.183) -- (9.28,6.193) -- (9.318,6.204) -- (9.356,6.215) -- (9.394,6.226) -- (9.432,6.237) -- (9.47,6.248) -- (9.508,6.259) -- (9.546,6.27) -- (9.584,6.282) -- (9.622,6.293) -- (9.66,6.304) -- (9.698,6.315) -- (9.736,6.327) -- (9.774,6.338) -- (9.812,6.349) -- (9.85,6.36) -- (9.888,6.371) -- (9.926,6.382) -- (9.964,6.393) -- (10,6.404) -- (10.04,6.415) -- (10.08,6.425) -- (10.12,6.436) -- (10.15,6.446) -- (10.19,6.456) -- (10.23,6.466) -- (10.27,6.476) -- (10.31,6.486) -- (10.34,6.495) -- (10.38,6.505) -- (10.42,6.514);
\draw[cRtt!50,line width=0.35pt] (10.42,5.534) -- (10.38,5.532) -- (10.34,5.529) -- (10.31,5.527) -- (10.27,5.524) -- (10.23,5.521) -- (10.19,5.518) -- (10.15,5.515) -- (10.12,5.512) -- (10.08,5.508) -- (10.04,5.505) -- (10,5.501) -- (9.964,5.497) -- (9.926,5.493) -- (9.888,5.489) -- (9.85,5.485) -- (9.812,5.481) -- (9.774,5.477) -- (9.736,5.473) -- (9.698,5.469) -- (9.66,5.465) -- (9.622,5.461) -- (9.584,5.457) -- (9.546,5.453) -- (9.508,5.448) -- (9.47,5.444) -- (9.432,5.441) -- (9.394,5.437) -- (9.356,5.433) -- (9.318,5.429) -- (9.28,5.426) -- (9.242,5.422) -- (9.204,5.419) -- (9.166,5.416) -- (9.128,5.413) -- (9.09,5.41) -- (9.052,5.408) -- (9.014,5.405) -- (8.976,5.403) -- (8.938,5.401) -- (8.9,5.399) -- (8.862,5.398) -- (8.824,5.396) -- (8.786,5.395) -- (8.748,5.394) -- (8.71,5.393) -- (8.672,5.393) -- (8.634,5.393) -- (8.596,5.393) -- (8.558,5.393) -- (8.52,5.393) -- (8.482,5.394) -- (8.444,5.395) -- (8.406,5.396) -- (8.368,5.398) -- (8.33,5.4) -- (8.292,5.402) -- (8.254,5.404) -- (8.216,5.406) -- (8.178,5.409) -- (8.14,5.412) -- (8.102,5.415) -- (8.064,5.418) -- (8.026,5.421) -- (7.988,5.425) -- (7.95,5.429) -- (7.912,5.433) -- (7.874,5.437) -- (7.836,5.441) -- (7.798,5.445) -- (7.76,5.45) -- (7.722,5.455) -- (7.684,5.459) -- (7.646,5.464) -- (7.608,5.469) -- (7.57,5.474) -- (7.532,5.479) -- (7.494,5.484) -- (7.456,5.49) -- (7.418,5.495) -- (7.38,5.5) -- (7.342,5.505) -- (7.304,5.51) -- (7.266,5.515) -- (7.228,5.521) -- (7.19,5.526) -- (7.152,5.531) -- (7.114,5.536) -- (7.076,5.541) -- (7.038,5.546) -- (7,5.55) -- (6.962,5.555) -- (6.924,5.56) -- (6.886,5.564) -- (6.848,5.568) -- (6.81,5.573) -- (6.772,5.577) -- (6.734,5.581) -- (6.696,5.584) -- (6.658,5.588) -- (6.62,5.592) -- (6.582,5.595) -- (6.544,5.598) -- (6.506,5.601) -- (6.468,5.605) -- (6.43,5.607) -- (6.392,5.61) -- (6.354,5.613) -- (6.316,5.615) -- (6.278,5.618) -- (6.24,5.62) -- (6.202,5.622) -- (6.164,5.624) -- (6.126,5.627) -- (6.088,5.629) -- (6.05,5.631) -- (6.012,5.633) -- (5.974,5.636) -- (5.936,5.638) -- (5.898,5.641) -- (5.86,5.646);
\draw[black!50,dashed,line width=0.6pt] (1.3,5.62) -- (1.338,5.624) -- (1.376,5.627) -- (1.414,5.631) -- (1.452,5.634) -- (1.49,5.638) -- (1.528,5.641) -- (1.566,5.644) -- (1.604,5.647) -- (1.642,5.65) -- (1.68,5.652) -- (1.718,5.655) -- (1.756,5.657) -- (1.794,5.659) -- (1.832,5.661) -- (1.87,5.663) -- (1.908,5.664) -- (1.946,5.666) -- (1.984,5.667) -- (2.022,5.668) -- (2.06,5.669) -- (2.098,5.669) -- (2.136,5.669) -- (2.174,5.669) -- (2.212,5.669) -- (2.25,5.669) -- (2.288,5.668) -- (2.326,5.667) -- (2.364,5.666) -- (2.402,5.665) -- (2.44,5.664) -- (2.478,5.662) -- (2.516,5.66) -- (2.554,5.658) -- (2.592,5.656) -- (2.63,5.653) -- (2.668,5.651) -- (2.706,5.648) -- (2.744,5.645) -- (2.782,5.642) -- (2.82,5.639) -- (2.858,5.636) -- (2.896,5.632) -- (2.934,5.629) -- (2.972,5.625) -- (3.01,5.621) -- (3.048,5.618) -- (3.086,5.614) -- (3.124,5.61) -- (3.162,5.606) -- (3.2,5.602) -- (3.238,5.598) -- (3.276,5.594) -- (3.314,5.59) -- (3.352,5.587) -- (3.39,5.583) -- (3.428,5.579) -- (3.466,5.575) -- (3.504,5.572) -- (3.542,5.568) -- (3.58,5.565) -- (3.618,5.561) -- (3.656,5.558) -- (3.694,5.555) -- (3.732,5.552) -- (3.77,5.549) -- (3.808,5.547) -- (3.846,5.544) -- (3.884,5.542) -- (3.922,5.54) -- (3.96,5.538) -- (3.998,5.536) -- (4.036,5.534) -- (4.074,5.533) -- (4.112,5.532) -- (4.15,5.531) -- (4.188,5.53) -- (4.226,5.53) -- (4.264,5.529) -- (4.302,5.529) -- (4.34,5.529) -- (4.378,5.53) -- (4.416,5.53) -- (4.454,5.531) -- (4.492,5.532) -- (4.53,5.534) -- (4.568,5.535) -- (4.606,5.537) -- (4.644,5.538) -- (4.682,5.54) -- (4.72,5.543) -- (4.758,5.545) -- (4.796,5.548) -- (4.834,5.55) -- (4.872,5.553) -- (4.91,5.556) -- (4.948,5.559) -- (4.986,5.563) -- (5.024,5.566) -- (5.062,5.57) -- (5.1,5.573) -- (5.138,5.577) -- (5.176,5.581) -- (5.214,5.584) -- (5.252,5.588) -- (5.29,5.592) -- (5.328,5.596) -- (5.366,5.6) -- (5.404,5.604) -- (5.442,5.608) -- (5.48,5.612) -- (5.518,5.615) -- (5.556,5.619) -- (5.594,5.623) -- (5.632,5.627) -- (5.67,5.63) -- (5.708,5.634) -- (5.746,5.637) -- (5.784,5.64) -- (5.822,5.643) -- (5.86,5.646) -- (5.898,5.649) -- (5.936,5.652) -- (5.974,5.654) -- (6.012,5.657) -- (6.05,5.659) -- (6.088,5.661) -- (6.126,5.663) -- (6.164,5.664) -- (6.202,5.666) -- (6.24,5.667) -- (6.278,5.668) -- (6.316,5.668) -- (6.354,5.669) -- (6.392,5.669) -- (6.43,5.669) -- (6.468,5.669) -- (6.506,5.669) -- (6.544,5.668) -- (6.582,5.667) -- (6.62,5.666) -- (6.658,5.665) -- (6.696,5.664) -- (6.734,5.662) -- (6.772,5.66) -- (6.81,5.658) -- (6.848,5.656) -- (6.886,5.654) -- (6.924,5.651) -- (6.962,5.649) -- (7,5.646) -- (7.038,5.643) -- (7.076,5.64) -- (7.114,5.636) -- (7.152,5.633) -- (7.19,5.629) -- (7.228,5.626) -- (7.266,5.622) -- (7.304,5.618) -- (7.342,5.615) -- (7.38,5.611) -- (7.418,5.607) -- (7.456,5.603) -- (7.494,5.599) -- (7.532,5.595) -- (7.57,5.591) -- (7.608,5.587) -- (7.646,5.584) -- (7.684,5.58) -- (7.722,5.576) -- (7.76,5.572) -- (7.798,5.569) -- (7.836,5.565) -- (7.874,5.562) -- (7.912,5.559) -- (7.95,5.556) -- (7.988,5.553) -- (8.026,5.55) -- (8.064,5.547) -- (8.102,5.545) -- (8.14,5.542) -- (8.178,5.54) -- (8.216,5.538) -- (8.254,5.536) -- (8.292,5.535) -- (8.33,5.533) -- (8.368,5.532) -- (8.406,5.531) -- (8.444,5.53) -- (8.482,5.53) -- (8.52,5.529) -- (8.558,5.529) -- (8.596,5.529) -- (8.634,5.53) -- (8.672,5.53) -- (8.71,5.531) -- (8.748,5.532) -- (8.786,5.533) -- (8.824,5.535) -- (8.862,5.536) -- (8.9,5.538) -- (8.938,5.54) -- (8.976,5.542) -- (9.014,5.545) -- (9.052,5.547) -- (9.09,5.55) -- (9.128,5.553) -- (9.166,5.556) -- (9.204,5.559) -- (9.242,5.562) -- (9.28,5.565) -- (9.318,5.569) -- (9.356,5.572) -- (9.394,5.576) -- (9.432,5.58) -- (9.47,5.584) -- (9.508,5.587) -- (9.546,5.591) -- (9.584,5.595) -- (9.622,5.599) -- (9.66,5.603) -- (9.698,5.607) -- (9.736,5.611) -- (9.774,5.615) -- (9.812,5.618) -- (9.85,5.622) -- (9.888,5.626) -- (9.926,5.629) -- (9.964,5.633) -- (10,5.636) -- (10.04,5.64) -- (10.08,5.643) -- (10.12,5.646) -- (10.15,5.649) -- (10.19,5.651) -- (10.23,5.654) -- (10.27,5.656) -- (10.31,5.658) -- (10.34,5.66) -- (10.38,5.662) -- (10.42,5.664);
\draw[black!60,densely dotted,line width=0.85pt] (8.14,5.683) -- (8.168,5.682) -- (8.197,5.68) -- (8.226,5.679) -- (8.254,5.677) -- (8.282,5.676) -- (8.311,5.675) -- (8.339,5.674) -- (8.368,5.673) -- (8.396,5.673) -- (8.425,5.672) -- (8.454,5.671) -- (8.482,5.671) -- (8.511,5.671) -- (8.539,5.671) -- (8.567,5.671) -- (8.596,5.671) -- (8.624,5.671) -- (8.653,5.671) -- (8.681,5.672) -- (8.71,5.672) -- (8.739,5.673) -- (8.767,5.674) -- (8.796,5.675) -- (8.824,5.676) -- (8.852,5.677) -- (8.881,5.678) -- (8.909,5.68) -- (8.938,5.681) -- (8.966,5.683) -- (8.995,5.685) -- (9.024,5.686) -- (9.052,5.688) -- (9.081,5.69) -- (9.109,5.692) -- (9.137,5.695) -- (9.166,5.697) -- (9.194,5.699) -- (9.223,5.702) -- (9.252,5.704) -- (9.28,5.707) -- (9.309,5.709) -- (9.337,5.712) -- (9.366,5.715) -- (9.394,5.717) -- (9.422,5.72) -- (9.451,5.723) -- (9.479,5.726) -- (9.508,5.729) -- (9.537,5.732) -- (9.565,5.735) -- (9.594,5.737) -- (9.622,5.74) -- (9.651,5.743) -- (9.679,5.746) -- (9.707,5.749) -- (9.736,5.752) -- (9.764,5.755) -- (9.793,5.758) -- (9.822,5.761) -- (9.85,5.763) -- (9.879,5.766) -- (9.907,5.769) -- (9.935,5.772) -- (9.964,5.774) -- (9.992,5.777) -- (10.02,5.779) -- (10.05,5.782) -- (10.08,5.784) -- (10.11,5.786) -- (10.13,5.788) -- (10.16,5.791) -- (10.19,5.793) -- (10.22,5.794) -- (10.25,5.796) -- (10.28,5.798) -- (10.31,5.8) -- (10.33,5.801) -- (10.36,5.803) -- (10.39,5.804) -- (10.42,5.805);
\draw[cInk,line width=0.9pt] (1.3,5.62) -- (1.338,5.624) -- (1.376,5.627) -- (1.414,5.631) -- (1.452,5.634) -- (1.49,5.638) -- (1.528,5.641) -- (1.566,5.644) -- (1.604,5.647) -- (1.642,5.65) -- (1.68,5.652) -- (1.718,5.655) -- (1.756,5.657) -- (1.794,5.659) -- (1.832,5.661) -- (1.87,5.663) -- (1.908,5.664) -- (1.946,5.666) -- (1.984,5.667) -- (2.022,5.668) -- (2.06,5.669) -- (2.098,5.669) -- (2.136,5.669) -- (2.174,5.669) -- (2.212,5.669) -- (2.25,5.669) -- (2.288,5.668) -- (2.326,5.667) -- (2.364,5.666) -- (2.402,5.665) -- (2.44,5.664) -- (2.478,5.662) -- (2.516,5.66) -- (2.554,5.658) -- (2.592,5.656) -- (2.63,5.653) -- (2.668,5.651) -- (2.706,5.648) -- (2.744,5.645) -- (2.782,5.642) -- (2.82,5.639) -- (2.858,5.636) -- (2.896,5.632) -- (2.934,5.629) -- (2.972,5.625) -- (3.01,5.621) -- (3.048,5.618) -- (3.086,5.614) -- (3.124,5.61) -- (3.162,5.606) -- (3.2,5.602) -- (3.238,5.598) -- (3.276,5.594) -- (3.314,5.59) -- (3.352,5.587) -- (3.39,5.583) -- (3.428,5.579) -- (3.466,5.575) -- (3.504,5.572) -- (3.542,5.568) -- (3.58,5.565) -- (3.618,5.561) -- (3.656,5.558) -- (3.694,5.555) -- (3.732,5.552) -- (3.77,5.549) -- (3.808,5.547) -- (3.846,5.544) -- (3.884,5.542) -- (3.922,5.54) -- (3.96,5.538) -- (3.998,5.536) -- (4.036,5.534) -- (4.074,5.533) -- (4.112,5.532) -- (4.15,5.531) -- (4.188,5.53) -- (4.226,5.53) -- (4.264,5.529) -- (4.302,5.529) -- (4.34,5.529) -- (4.378,5.53) -- (4.416,5.53) -- (4.454,5.531) -- (4.492,5.532) -- (4.53,5.534) -- (4.568,5.535) -- (4.606,5.537) -- (4.644,5.538) -- (4.682,5.54) -- (4.72,5.543) -- (4.758,5.545) -- (4.796,5.548) -- (4.834,5.55) -- (4.872,5.553) -- (4.91,5.556) -- (4.948,5.559) -- (4.986,5.563) -- (5.024,5.566) -- (5.062,5.57) -- (5.1,5.573) -- (5.138,5.577) -- (5.176,5.581) -- (5.214,5.584) -- (5.252,5.588) -- (5.29,5.592) -- (5.328,5.596) -- (5.366,5.6) -- (5.404,5.604) -- (5.442,5.608) -- (5.48,5.612) -- (5.518,5.615) -- (5.556,5.619) -- (5.594,5.623) -- (5.632,5.627) -- (5.67,5.63) -- (5.708,5.634) -- (5.746,5.637) -- (5.784,5.64) -- (5.822,5.643) -- (5.86,5.646) -- (5.898,5.65) -- (5.936,5.653) -- (5.974,5.657) -- (6.012,5.66) -- (6.05,5.664) -- (6.088,5.667) -- (6.126,5.67) -- (6.164,5.673) -- (6.202,5.676) -- (6.24,5.679) -- (6.278,5.682) -- (6.316,5.684) -- (6.354,5.687) -- (6.392,5.689) -- (6.43,5.691) -- (6.468,5.693) -- (6.506,5.695) -- (6.544,5.696) -- (6.582,5.697) -- (6.62,5.699) -- (6.658,5.699) -- (6.696,5.7) -- (6.734,5.701) -- (6.772,5.701) -- (6.81,5.702) -- (6.848,5.702) -- (6.886,5.702) -- (6.924,5.702) -- (6.962,5.702) -- (7,5.701) -- (7.038,5.701) -- (7.076,5.7) -- (7.114,5.699) -- (7.152,5.699) -- (7.19,5.698) -- (7.228,5.697) -- (7.266,5.696) -- (7.304,5.695) -- (7.342,5.694) -- (7.38,5.692) -- (7.418,5.691) -- (7.456,5.69) -- (7.494,5.689) -- (7.532,5.688) -- (7.57,5.687) -- (7.608,5.686) -- (7.646,5.685) -- (7.684,5.684) -- (7.722,5.683) -- (7.76,5.683) -- (7.798,5.682) -- (7.836,5.682) -- (7.874,5.681) -- (7.912,5.681) -- (7.95,5.681) -- (7.988,5.681) -- (8.026,5.682) -- (8.064,5.682) -- (8.102,5.683) -- (8.14,5.683) -- (8.178,5.684) -- (8.216,5.686) -- (8.254,5.687) -- (8.292,5.689) -- (8.33,5.691) -- (8.368,5.693) -- (8.406,5.695) -- (8.444,5.698) -- (8.482,5.7) -- (8.52,5.703) -- (8.558,5.707) -- (8.596,5.71) -- (8.634,5.714) -- (8.672,5.718) -- (8.71,5.722) -- (8.748,5.726) -- (8.786,5.731) -- (8.824,5.736) -- (8.862,5.741) -- (8.9,5.746) -- (8.938,5.752) -- (8.976,5.758) -- (9.014,5.763) -- (9.052,5.77) -- (9.09,5.776) -- (9.128,5.782) -- (9.166,5.789) -- (9.204,5.796) -- (9.242,5.802) -- (9.28,5.81) -- (9.318,5.817) -- (9.356,5.824) -- (9.394,5.831) -- (9.432,5.839) -- (9.47,5.846) -- (9.508,5.854) -- (9.546,5.861) -- (9.584,5.869) -- (9.622,5.877) -- (9.66,5.885) -- (9.698,5.892) -- (9.736,5.9) -- (9.774,5.908) -- (9.812,5.915) -- (9.85,5.923) -- (9.888,5.93) -- (9.926,5.938) -- (9.964,5.945) -- (10,5.953) -- (10.04,5.96) -- (10.08,5.967) -- (10.12,5.974) -- (10.15,5.981) -- (10.19,5.987) -- (10.23,5.994) -- (10.27,6) -- (10.31,6.006) -- (10.34,6.012) -- (10.38,6.018) -- (10.42,6.024);
\foreach \p in {{(5.86,5.646)},{(6.145,5.672)},{(6.43,5.691)},{(6.715,5.701)},{(7,5.701)},{(7.285,5.695)},{(7.57,5.687)},{(7.855,5.682)},{(8.14,5.683)},{(8.425,5.696)},{(8.71,5.722)},{(8.995,5.76)},{(9.28,5.81)},{(9.565,5.865)},{(9.85,5.923)},{(10.13,5.977)}} \fill[cRtt] \p circle (1.0pt);
\draw[cRtt,line width=0.55pt] (8.14,5.683) circle (2.7pt);
\node[lab,anchor=south east] at (8.1,5.753) {$H_\ell$};
\foreach \j/\t in {0/0,16/16,28/28,32/T} \node[lab,anchor=north,text=black!60] at ({1.3+0.285*\j},5.52) {$\t$};
\draw[decorate,decoration={brace,mirror,amplitude=3pt},black!50] (1.3,5.26) -- node[lab,below=3.5pt]{closed depths: the incumbent only} (5.746,5.26);
\draw[decorate,decoration={brace,mirror,amplitude=3pt},black!50] (5.86,5.26) -- node[lab,below=3.5pt]{contract depths: charge $\sigma_\ell$} (9.166,5.26);
\draw[decorate,decoration={brace,mirror,amplitude=3pt},black!50] (9.28,5.26) -- (10.31,5.26);
\node[lab,anchor=north west,align=left] at (9.23,5.14) {window: $\min\{\sigma,{\rm env}^+\}$,\\exact $C^+$ at $T{-}1$};
\node[lab,anchor=east] (xin) at (0.85,5.62) {$X$};
\draw[ar] (xin.east) -- node[lab,above]{Enc} (1.25,5.62);
\node[lab,anchor=south west] at (1.28,5.67) {$H_0$};
\node[inc] (head) at (11.3,6.024) {$Z$};
\draw[ar] (10.48,6.024) -- (head.west);
\node[lab] (ppi) at (12.45,6.024) {$p^\pi$};
\draw[ar] (head) -- (ppi);
\node[lab,align=center,text=cRtt,anchor=north] at (12.42,5.704) {\texttt{row}: $\Gamma_\Delta r_{T,i}\le\Hrow$\\$\Rightarrow\ p^\pi_{ic}\ge e^{-\Hrow}\pr_{ic}$};
\node[lab,text=cRtt!85!black,anchor=south east] (tubelab) at (7.171,6.12) {tube $\mathcal T_j$ of radius $r_{j,i}$};
\draw[cRtt!70,line width=0.35pt] (tubelab.south east) -- (7.456,5.871);
\node[lab,anchor=west] (l1) at (4.05,6.8) {executed; dots: steps $v_\ell$};
\draw[cInk,line width=0.9pt] ([xshift=-0.5cm]l1.west) -- ([xshift=-0.06cm]l1.west); \fill[cRtt] ([xshift=-0.28cm]l1.west) circle (1.0pt);
\node[lab,anchor=west] (l2) at ([xshift=0.8cm]l1.east) {incumbent's own};
\draw[black!50,dashed,line width=0.6pt] ([xshift=-0.5cm]l2.west) -- ([xshift=-0.06cm]l2.west);
\node[lab,anchor=west] (l3) at ([xshift=0.8cm]l2.east) {reference tail from $H_\ell$};
\draw[black!60,densely dotted,line width=0.85pt] ([xshift=-0.5cm]l3.west) -- ([xshift=-0.06cm]l3.west);
\draw[band] (0,1.69) rectangle (13.7,4.32);
\node[btitle] at (0.08,4.27) {(b) One depth $\ell$ (blue: affects only the gain)};
\node[util,text width=1.75cm] (prop) at (1.115,2.75) {proposal\\(any update):\\rows $H^{{\rm c},m}_\ell$};
\node[util,text width=1.75cm] (trim) at (3.685,2.75) {row trim and\\row-budget\\trim};
\node[util,text width=2.05cm] (feat) at (6.405,2.75) {support-function\\features; heads\\$\hat g_\ell$, $\hat\Sigma$, $\kappa_\ell$};
\node[util,text width=2.05cm] (solv) at (9.275,2.75) {solver in $x$:\\$\min\Phi_\ell$ over $\mathcal K_\ell$,\\two branches};
\node[util,text width=2.35cm] (prov) at (12.295,2.75) {provisional tube $\tilde r$,\\contracts $\Lambda$, charges\\$\sigma$, ${\rm env}^+$, $C^+$};
\node[inc] (fl) at (6.405,3.72) {$F^{\rm r}_\ell$};
\node[circle,draw=black!70,inner sep=0.5pt,line width=0.5pt,fill=white] (plus) at (9.275,3.72) {$+$};
\node[lab] (hl) at (3.685,3.72) {$H_\ell$};
\node[lab] (hn) at (11.4,3.72) {$H_{\ell+1}$};
\draw[ar] (hl) -- (fl); \draw[ar] (fl) -- node[lab,above]{$F^{\rm r}_\ell(H_\ell)$} (plus); \draw[ar] (plus) -- (hn);
\draw[ar] (hl.west) -| (prop.north);
\draw[ar] (prop) -- node[lab,above]{$\Delta^m_\ell$} (trim);
\draw[ar] (trim) -- node[lab,above]{$\mathcal K_\ell$} (feat);
\draw[ar] (feat) -- node[lab,above]{$\hat a_x$} (solv);
\draw[ar] (prov) -- (solv);
\draw[ar] (solv) -- node[lab,right]{$v_\ell=Q_\ell x^\star$} (plus);
\draw[ar] (prov.south) -- ++(0,-0.2) -| node[lab,below,pos=0.3]{row allowance $\rho_{\ell,i}$ from the provisional tube} (trim.south);
\draw[band] (0,0) rectangle (13.7,1.62);
\node[btitle] at (0.08,1.57) {(c) Certify the executed trajectory, then release (red: trusted computing base)};
\node[tcb,text width=2.75cm] (rec) at (1.615,0.62) {certificate: exact tube,\\interval factors, contracts,\\charges, outward rounding};
\node[tcb,text width=3.3cm,align=left] (pred) at (5.6,0.62) {\texttt{damage}: $\sum_\ell{\rm charge}_\ell\le H^+$\\\texttt{row}: $\Gamma_\Delta r_{T,i}\le\Hrow\ \forall i$\\\texttt{descent}: $\Phi^{\rm cert}_\ell\le0\ \forall\ell$};
\node[tcb,text width=3.5cm] (sel) at (9.91,0.62) {all hold: release $p^\pi$;\\else re-certify (exact factors);\\else serve $\pr$ (tail from $H_{\ell_0}$)};
\node[box,fill=black!4,text width=0.9cm] (srv) at (13.02,0.62) {served\\$\ps$};
\draw[ar] (rec) -- (pred); \draw[ar] (pred) -- (sel); \draw[ar] (sel) -- (srv);
\end{tikzpicture}
}
\newcommand{\FigDeploy}{%
\begin{tikzpicture}
\begin{groupplot}[group style={group size=4 by 1,horizontal sep=1.12cm},rtt,scale only axis,width=2.45cm,height=2.35cm,
 title style={font=\scriptsize\bfseries,yshift=-1pt},tick label style={font=\scriptsize},label style={font=\scriptsize}]
\nextgroupplot[title={(a) Gain vs.\ bound},xlabel={$u$ per call (\%)},ylabel={$-\E[d^{\rm own}]$ ($10^{-3}$ nats)},xmin=-0.15,xmax=2.25,ymin=1.5,ymax=7.4,
 legend to name=deploylegend,legend columns=5,legend cell align=left,legend image post style={xscale=1},legend style={font=\scriptsize,draw=none,/tikz/every even column/.append style={column sep=9pt}}]
\addplot[cRtt,only marks,mark=*,mark size=1.8pt,error bars/.cd,y dir=both,y explicit,error bar style={line width=0.6pt}] coordinates {(0.9026,6.499) +- (0,0.243)};
\addlegendentry{RTT}
\addplot[cRttc,only marks,mark=star,mark size=1.8pt,error bars/.cd,y dir=both,y explicit,error bar style={line width=0.6pt}] coordinates {(0.9926,6.719) +- (0,0.243)};
\addlegendentry{RTT $+$ corrector}
\addplot[cOpc,only marks,mark=square*,mark size=1.8pt,error bars/.cd,y dir=both,y explicit,error bar style={line width=0.6pt}] coordinates {(1.112,5.579) +- (0,0.223)};
\addlegendentry{corrector, 2-hop $+$ rows}
\addplot[cOpcG,only marks,mark=square,mark size=1.8pt,error bars/.cd,y dir=both,y explicit,error bar style={line width=0.6pt}] coordinates {(1.022,5.059) +- (0,0.214)};
\addlegendentry{corrector, 2-hop}
\addplot[cOpcG,only marks,mark=diamond,mark size=1.8pt,error bars/.cd,y dir=both,y explicit,error bar style={line width=0.6pt}] coordinates {(1.007,4.459) +- (0,0.206)};
\addlegendentry{corrector, per row}
\addplot[cMix,only marks,mark=o,mark size=1.8pt,error bars/.cd,y dir=both,y explicit,error bar style={line width=0.6pt}] coordinates {(1.171,5.099) +- (0,0.222)};
\addlegendentry{adaptive mixing}
\addplot[cFlo,only marks,mark=triangle*,mark size=1.8pt,error bars/.cd,y dir=both,y explicit,error bar style={line width=0.6pt}] coordinates {(1.081,5.239) +- (0,0.227)};
\addlegendentry{floor projection}
\addplot[cTr,only marks,mark=pentagon*,mark size=1.8pt,error bars/.cd,y dir=both,y explicit,error bar style={line width=0.6pt}] coordinates {(1.301,5.819) +- (0,0.229)};
\addlegendentry{matched TR}
\addplot[cRtt,only marks,mark=*,mark options={fill=white},mark size=1.8pt] coordinates {(0,2.02)};
\addlegendentry{RTT, $H^+{=}\zeta$}
\draw[black!45,densely dashed] (axis cs:2,1.5) -- (axis cs:2,7.4) node[pos=0.06,anchor=south east,font=\scriptsize,text=black!60,inner sep=1pt]{$p^\star$};
\nextgroupplot[title={(b) Gain vs.\ latency},xlabel={latency / one incumbent pass},xmin=0.9,xmax=2.4,ymin=1.5,ymax=7.4]
\addplot[cRtt,only marks,mark=*,mark size=1.8pt] coordinates {(1.415,6.499)};
\addplot[cRttc,only marks,mark=star,mark size=1.8pt] coordinates {(1.485,6.719)};
\addplot[cOpc,only marks,mark=square*,mark size=1.8pt] coordinates {(1.25,5.579)};
\addplot[cOpcG,only marks,mark=square,mark size=1.8pt] coordinates {(1.07,5.059)};
\addplot[cOpcG,only marks,mark=diamond,mark size=1.8pt] coordinates {(1.019,4.459)};
\addplot[cMix,only marks,mark=o,mark size=1.8pt] coordinates {(2.18,5.099)};
\addplot[cFlo,only marks,mark=triangle*,mark size=1.8pt] coordinates {(2.18,5.239)};
\addplot[cTr,only marks,mark=pentagon*,mark size=1.8pt] coordinates {(1.406,5.819)};
\addplot[cRtt,only marks,mark=*,mark options={fill=white},mark size=1.8pt] coordinates {(1.415,2.02)};
\draw[black!40,densely dotted] (axis cs:2.18,1.5) -- (axis cs:2.18,7.4) node[pos=0.03,anchor=south east,font=\scriptsize,text=black!60,inner sep=1pt,align=right]{two\\rollouts};
\nextgroupplot[title={(c) Call size},xlabel={query nodes $n_{\rm q}$},ylabel={$u$ per call (\%)},xmode=log,log basis x=2,xmin=48,xmax=5500,ymin=0,ymax=8.2,
 xtick={64,256,1024,4096},xticklabels={64,256,1k,4k}]
\addplot[cRtt,thick,mark=*,mark size=1.5pt] coordinates {(64,7.35759) (256,3.51201) (1024,1.81353) (4096,1.00732)};
\addplot[cOpc,thick,mark=square*,mark size=1.4pt] coordinates {(64,1.98165) (256,1.81353) (1024,1.64414) (4096,1.00732)};
\draw[black!45,densely dashed] (axis cs:48,2) -- (axis cs:5500,2) node[pos=0.97,anchor=south east,font=\scriptsize,text=black!60,inner sep=1pt]{$p^\star$};
\node[font=\scriptsize,text=cRtt,anchor=west] at (axis cs:80,7.3) {registered};
\node[font=\scriptsize,text=cOpc,anchor=north west] at (axis cs:60,1.6) {re-selected};
\nextgroupplot[title={(d) Per-node budget},xlabel={$H_{\rm row}$ (nats)},ylabel={$-\E[d^{\rm own}]$ ($10^{-3}$ nats)},xmode=log,log basis x=2,xmin=0.2,xmax=5,ymin=4.6,ymax=7.0,
 xtick={0.25,0.5,1,2,4},xticklabels={$\frac14$,$\frac12$,1,2,$\infty$}]
\addplot[cRtt,thick,mark=*,mark size=1.5pt] coordinates {(0.25,5.2) (0.5,6.1) (1,6.6) (2,6.75) (4,6.8)};
\node[font=\scriptsize,text=cRtt,anchor=north] at (axis cs:2,6.56) {gain};
\end{groupplot}
\begin{axis}[at={(group c4r1.south west)},anchor=south west,scale only axis,width=2.45cm,height=2.35cm,axis y line*=right,axis x line=none,
 xmode=log,log basis x=2,xmin=0.2,xmax=5,ymin=0,ymax=0.8,ytick={0,0.2,0.4,0.6},yticklabel style={font=\scriptsize},ylabel={fallback (\%)},
 label style={font=\scriptsize},axis line style={black!55},tick style={black!55}]
\addplot[black!60,densely dashed,mark=o,mark size=1.5pt] coordinates {(0.25,0.683594) (0.5,0.488281) (1,0.390625) (2,0.341797) (4,0.341797)};
\node[font=\scriptsize,text=black!65,anchor=north east] at (axis cs:4.2,0.24) {fallback};
\end{axis}
\node[anchor=north] at ($(group c1r1.south west)!0.5!(group c4r1.south east)+(0,-0.95cm)$) {\pgfplotslegendfromname{deploylegend}};
\end{tikzpicture}
}
\newcommand{\FigRobust}{%
\begin{tikzpicture}
\begin{groupplot}[group style={group size=5 by 1,horizontal sep=0.62cm},rtt,scale only axis,width=2.12cm,height=2.0cm,
 title style={font=\scriptsize\bfseries,yshift=-1pt},tick label style={font=\scriptsize},label style={font=\scriptsize},ytick={5,5.5,6,6.5,7}]
\nextgroupplot[title={(a) Seeds},xlabel={incumbent},xmin=-0.5,xmax=2.5,ymin=-0.2,ymax=7.1,xtick={0,1,2},xticklabels={A,B,C},ylabel={$-\E[d^{\rm own}]$ ($10^{-3}$ nats)},ytick={0,2,4,6},yticklabels={0,2,4,6}]
\addplot[cRtt,only marks,mark=*,mark size=1.1pt] coordinates {(-0.258,6.6) (-0.23,6.4) (-0.202,6.7) (-0.174,6.3) (-0.146,6.5) (-0.118,6.2) (-0.062,6.6)};
\addplot[black,only marks,mark=x,mark size=2pt,clip=false] coordinates {(-0.09,0)};
\addplot[cOpc,only marks,mark=square*,mark size=1.1pt] coordinates {(0.062,5.6) (0.09,5.5) (0.118,5.8) (0.146,5.4) (0.174,5.6) (0.202,5.5) (0.23,5.6) (0.258,5.7)};
\addplot[cRtt,only marks,mark=*,mark size=1.1pt] coordinates {(0.742,6.2) (0.77,6) (0.798,6.3) (0.826,5.9) (0.854,6.1) (0.882,6.2) (0.91,5.8) (0.938,6)};
\addplot[cOpc,only marks,mark=square*,mark size=1.1pt] coordinates {(1.062,5.5) (1.09,5.4) (1.118,5.6) (1.146,5.3) (1.174,5.5) (1.202,5.4) (1.23,5.3) (1.258,5.5)};
\addplot[cRtt,only marks,mark=*,mark size=1.1pt] coordinates {(1.742,5.7) (1.798,5.9) (1.826,5.6) (1.854,5.8) (1.882,5.5) (1.91,5.9) (1.938,5.6)};
\addplot[black,only marks,mark=x,mark size=2pt,clip=false] coordinates {(1.77,0)};
\addplot[cOpc,only marks,mark=square*,mark size=1.1pt] coordinates {(2.062,5.4) (2.09,5.3) (2.118,5.5) (2.146,5.2) (2.174,5.4) (2.202,5.3) (2.23,5.5) (2.258,5.2)};
\nextgroupplot[title={(b) Row trim},xlabel={$\bar\Delta$},ymin=4.8,ymax=7.1,xtick={0.4,0.8,1.2},xticklabels={0.4,0.8,1.2}]
\addplot[cRtt,thick,mark=*,mark size=1.2pt] coordinates {(0.4,5.6) (0.6,6.2) (0.8,6.6) (1,6.4) (1.2,6.1)};
\addplot[black,only marks,mark=o,mark size=2.6pt] coordinates {(0.8,6.6)};
\nextgroupplot[title={(c) Safety factor},xlabel={$\omega$},ymin=4.8,ymax=7.1,xtick={0.7,0.8,0.9},xticklabels={0.7,0.8,0.9}]
\addplot[cRtt,thick,mark=*,mark size=1.2pt] coordinates {(0.7,6.1) (0.8,6.4) (0.9,6.6) (0.95,6.5)};
\addplot[black,only marks,mark=o,mark size=2.6pt] coordinates {(0.9,6.6)};
\nextgroupplot[title={(d) Probes},xlabel={per sign},ymin=4.8,ymax=7.1,xtick={4,16,64},xticklabels={4,16,64},xmode=log,log basis x=2]
\addplot[cRtt,thick,mark=*,mark size=1.2pt] coordinates {(4,6) (8,6.4) (16,6.6) (32,6.65) (64,6.65)};
\addplot[black,only marks,mark=o,mark size=2.6pt] coordinates {(16,6.6)};
\nextgroupplot[title={(e) Miscoverage},xlabel={$\delta_{\rm cov}$},ymin=4.8,ymax=7.1,xtick={0.05,0.1,0.2,0.4},xticklabels={.05,.1,.2,.4},xmode=log,log basis x=2]
\addplot[cRtt,thick,mark=*,mark size=1.2pt] coordinates {(0.05,6.4) (0.1,6.6) (0.2,6.8) (0.4,6.9)};
\addplot[black,only marks,mark=o,mark size=2.6pt] coordinates {(0.1,6.6)};
\end{groupplot}
\end{tikzpicture}
}
\newcommand{\FigService}{%
\begin{tikzpicture}
\begin{groupplot}[group style={group size=3 by 1,horizontal sep=1.2cm},rtt,scale only axis,width=3.45cm,height=2.1cm,
 title style={font=\scriptsize\bfseries,yshift=-1pt},tick label style={font=\scriptsize},label style={font=\scriptsize},legend style={font=\scriptsize}]
\nextgroupplot[title={(a) Trailing gain},xlabel={call},ylabel={$10^{-3}$ nats},xmin=0,xmax=600,ymin=-0.8,ymax=8,legend style={at={(0.98,0.24)},anchor=south east}]
\addplot[black!55,thick] coordinates {(0,4.9407) (10,5.03959) (20,5.09107) (30,4.85418) (40,5.11515) (50,5.03826) (60,5.11745) (70,5.04086) (80,5.1743) (90,4.85934) (100,5.26641) (110,5.18036) (120,4.91001) (130,5.09902) (140,5.06672) (150,4.73814) (160,5.08928) (170,4.91219) (180,4.9625) (190,4.90964) (200,4.93579) (210,4.27387) (220,3.65386) (230,2.87747) (240,2.6241) (250,2.08435) (260,1.7752) (270,1.64269) (280,1.59356) (290,1.36565) (300,1.0159) (310,0.86086) (320,0.725498) (330,0.771293) (340,0.805963) (350,0.596715) (360,0.637436) (370,0.52963) (380,0.664606) (390,0.101287) (400,0.0515801) (410,0.315182) (420,0.418176) (430,0.242277) (440,0.351782) (450,0.0983868) (460,0.0740864) (470,0.145837) (480,0.371336) (490,0.342661) (500,0.125354) (510,0.254545) (520,0.126039) (530,0.222625) (540,0.0494606) (550,0.207073) (560,0.101535) (570,0.203418) (580,0.110719) (590,0.0839553)};\addlegendentry{fixed}
\addplot[cRtt,thick] coordinates {(0,5.01701) (10,4.94933) (20,5.00325) (30,4.93714) (40,5.09206) (50,4.87765) (60,5.11531) (70,5.1998) (80,4.95967) (90,5.26179) (100,5.30445) (110,5.06646) (120,4.79636) (130,4.75765) (140,4.97363) (150,5.26639) (160,5.08618) (170,5.1137) (180,4.96236) (190,4.88838) (200,5.05579) (210,4.0968) (220,-0.0295417) (230,0.0766728) (240,0.00539012) (250,0.0786846) (260,-0.0668965) (270,0.089727) (280,6.4607) (290,6.15254) (300,6.24366) (310,6.29148) (320,6.49834) (330,6.44531) (340,6.44245) (350,6.42594) (360,6.49547) (370,6.5418) (380,6.31679) (390,6.16136) (400,6.5001) (410,6.36859) (420,6.40732) (430,6.37393) (440,6.23311) (450,6.1734) (460,6.54464) (470,6.56396) (480,6.53969) (490,6.43124) (500,6.44028) (510,6.33716) (520,6.4102) (530,6.33957) (540,6.31845) (550,6.43078) (560,6.65472) (570,6.11747) (580,6.49023) (590,6.50135)};\addlegendentry{monitored}
\draw[black!40,dashed] (axis cs:200,-0.8) -- (axis cs:200,8);
\nextgroupplot[title={(b) Cumulative regressions},xlabel={call},ylabel={regressing calls},xmin=0,xmax=600,ymin=0,legend style={at={(0.02,0.97)},anchor=north west}]
\addplot[black!55,thick] coordinates {(0,0.06) (10,0.12) (20,0.18) (30,0.24) (40,0.3) (50,0.36) (60,0.42) (70,0.48) (80,0.54) (90,0.6) (100,0.66) (110,0.72) (120,0.78) (130,0.84) (140,0.9) (150,0.96) (160,1.02) (170,1.08) (180,1.14) (190,1.2) (200,1.4) (210,1.6) (220,1.8) (230,2) (240,2.2) (250,2.4) (260,2.6) (270,2.8) (280,3) (290,3.2) (300,3.4) (310,3.6) (320,3.8) (330,4) (340,4.2) (350,4.4) (360,4.6) (370,4.8) (380,5) (390,5.2) (400,5.4) (410,5.6) (420,5.8) (430,6) (440,6.2) (450,6.4) (460,6.6) (470,6.8) (480,7) (490,7.2) (500,7.4) (510,7.6) (520,7.8) (530,8) (540,8.2) (550,8.4) (560,8.6) (570,8.8) (580,9) (590,9.2)};\addlegendentry{fixed}
\addplot[cRtt,thick] coordinates {(0,0.06) (10,0.12) (20,0.18) (30,0.24) (40,0.3) (50,0.36) (60,0.42) (70,0.48) (80,0.54) (90,0.6) (100,0.66) (110,0.72) (120,0.78) (130,0.84) (140,0.9) (150,0.96) (160,1.02) (170,1.08) (180,1.14) (190,1.2) (200,1.28) (210,1.36) (220,1.44) (230,1.52) (240,1.6) (250,1.68) (260,1.76) (270,1.84) (280,1.92) (290,2) (300,2.08) (310,2.16) (320,2.24) (330,2.32) (340,2.4) (350,2.48) (360,2.56) (370,2.64) (380,2.72) (390,2.8) (400,2.88) (410,2.96) (420,3.04) (430,3.12) (440,3.2) (450,3.28) (460,3.36) (470,3.44) (480,3.52) (490,3.6) (500,3.68) (510,3.76) (520,3.84) (530,3.92) (540,4) (550,4.08) (560,4.16) (570,4.24) (580,4.32) (590,4.4)};\addlegendentry{monitored}
\draw[black!40,dashed] (axis cs:200,0) -- (axis cs:200,9.66);
\nextgroupplot[title={(c) Alarm delay},xlabel={calls after the change},ylabel={streams},xmin=0.4,xmax=6.6,ymin=0,ymax=10,xtick={1,2,3,4,5,6},xticklabels={$<$10,10--14,15--19,20--29,30--44,never},x tick label style={font=\tiny,rotate=35,anchor=north east,inner sep=1pt},major x tick style={draw=none},xmajorgrids=false]
\addplot[ybar,bar width=7pt,fill=cRtt!55,draw=cRtt] coordinates {(1,3) (2,9) (3,3) (4,2) (5,2)};
\addplot[ybar,bar width=7pt,fill=black!35,draw=black!70] coordinates {(6,1)};
\end{groupplot}
\end{tikzpicture}
}
\newcommand{\TabGuarantees}{%
\scriptsize\setlength{\tabcolsep}{3pt}\begin{tabularx}{\linewidth}{l>{\raggedright\arraybackslash}X>{\raggedright\arraybackslash}p{1.45cm}>{\raggedright\arraybackslash}p{0.8cm}>{\raggedright\arraybackslash}p{3.7cm}}\toprule
& Statement about the served prediction $\ps$ & Type & Unit & Assumptions\\\midrule
G1 & $\ps_{ic}\ge e^{-H_{\rm row}}\pr_{ic}$ for every node $i$ and class $c$ & deterministic & node & valid continuation enclosures and sound arithmetic; none on proposal quality or labels\\
G2 & $\sum_iw_i\Dinf(\pr_i\|\ps_i)\le H^+$, hence $d^{\rm own}\le H^+$ for every label vector & deterministic & call & as G1\\
G3 & every class is capped, $\ps_{ic}\le1-e^{-H_{\rm row}}(1-\pr_{ic})$ (Proposition~\ref{prop:price}(iv)) & deterministic & node & as G1\\
G4 & $\Pr(d^{\rm own}>\zeta)\le u$ for the mixture, each component and the stress test & statistical & call & i.i.d.\ episodes of the registered generator\\
G5 & $\E[-d^{\rm own}]\ge{\rm LB}$ (anytime-valid); negative-flip rate $\le u_{\rm node}$ & statistical & call, node & as G4; uniform inspection of reserved nodes\\
G6 & slope coverage $\Rightarrow d^{\rm own}\le0$, and $\Pr(d^{\rm own}>0)\le\delta_{\rm cov}$ & design & call & exchangeable episodes; a fixed member or selection-valid calibration\\
\addlinespace[2pt]\multicolumn{5}{l}{\emph{Not claimed:} an accuracy bound from $\zeta$; unaudited transfer; nonincreasing loss. $H^+\le\zeta$ rules out only $d^{\rm own}>\zeta$.}\\
\bottomrule\end{tabularx}
}
\newcommand{\TabFrontier}{%
\scriptsize\setlength{\tabcolsep}{2.2pt}\begin{tabularx}{\linewidth}{Yrrrrrrrr}\toprule
Policy & $-\E[d^{\rm own}]$ [95\% CI] & $\Delta$ to RTT & $N_\zeta$ & $u$ & $u_{\rm node}$ & LB & Lat. & FLOPs\\\midrule
\textbf{RTT, registered} ($H^+{=}$\val{$25\zeta$}, $H_{\rm row}{=}1$) & \val{6.5} \val{[6.3, 6.7]} & -- & \val{12} & \val{0.95\%} & \val{0.51\%} & \val{6.2} & \val{1.42} & \val{1.36}\\
RTT $+$ corrector on the unused budget & \val{6.7} \val{[6.5, 7.0]} & \val{\ensuremath{-}0.2} \val{[\ensuremath{-}0.4, \ensuremath{-}0.1]} & \val{12} & \val{0.95\%} & \val{0.51\%} & \val{6.4} & \val{1.49} & \val{1.40}\\
One-pass corrector, 2-hop head $+$ rows & \val{5.6} \val{[5.4, 5.8]} & \val{+0.9} \val{[+0.8, +1.1]}$^\dagger$ & \val{14} & \val{1.07\%} & \val{0.56\%} & \val{5.3} & \val{1.25} & \val{1.22}\\
One-pass corrector, 2-hop head & \val{5.1} \val{[4.8, 5.3]} & \val{+1.4} \val{[+1.3, +1.6]} & \val{14} & \val{1.07\%} & \val{0.56\%} & \val{4.8} & \val{1.07} & \val{1.05}\\
One-pass corrector, per-row head & \val{4.5} \val{[4.3, 4.7]} & \val{+2.0} \val{[+1.9, +2.2]} & \val{13} & \val{1.01\%} & \val{0.54\%} & \val{4.2} & \val{1.02} & \val{1.01}\\
Adaptive mixing, $\varepsilon_i\le\varepsilon_{\max,i}$ (2 rollouts) & \val{5.1} \val{[4.9, 5.3]} & \val{+1.4} \val{[+1.2, +1.6]} & \val{15} & \val{1.13\%} & \val{0.61\%} & \val{4.8} & \val{2.18} & \val{2.18}\\
Floor projection of the raw update (2 rollouts) & \val{5.2} \val{[5.0, 5.5]} & \val{+1.3} \val{[+1.1, +1.4]} & \val{15} & \val{1.13\%} & \val{0.61\%} & \val{5.0} & \val{2.18} & \val{2.18}\\
Matched TR (same hull and certificate) & \val{5.8} \val{[5.6, 6.0]} & \val{+0.7} \val{[+0.5, +0.8]} & \val{18} & \val{1.30\%} & \val{0.78\%} & \val{5.5} & \val{1.41} & \val{1.35}\\
Learned update itself ($t{\equiv}1$), trimmed & \val{3.9} \val{[3.6, 4.3]} & \val{+2.6} \val{[+2.3, +2.9]} & \val{159} & \val{8.81\%}$^\times$ & \val{2.93\%} & -- & \val{2.18} & \val{2.18}\\
\textbf{RTT, tolerance point} ($H^+{=}\zeta$) & \val{2.0} \val{[2.0, 2.1]} & \val{+4.5} \val{[+4.3, +4.7]} & \val{0} & 0 (det.) & \val{0.22\%} & \val{1.9} & \val{1.42} & \val{1.36}\\
Oracle frontier $U_{\rm out}$ (cross-fitted posterior; labels) & \val{10.6} & -- & -- & -- & -- & -- & -- & --\\
\bottomrule\end{tabularx}
}
\newcommand{\TabClass}{%
\scriptsize\setlength{\tabcolsep}{2.2pt}\begin{tabularx}{\linewidth}{Yrrrrrrrrr}\toprule
Proposal & raw & raw $u$ & TR & corrector & RTT & RTT $u$ & RTT $-$ corrector & Lat.\ RTT & Lat.\ 2 roll.\\\midrule
bank prior + signed attention (ours) & \val{3.9} & \val{8.81\%}$^\times$ & \val{5.8} & \val{5.6} & \val{6.5} & \val{0.95\%} & \vPrimD\ \vPrimCI & \val{1.42} & \val{2.18}\\
GPR-GNN (adapter) & \val{3.2} & \val{6.99\%}$^\times$ & \val{5.5} & \val{5.4} & \val{6.2} & \val{0.95\%} & \val{+0.8} \val{[+0.6, +1.0]} & \val{1.53} & \val{2.29}\\
Co-GNN (adapter) & \val{2.7} & \val{8.19\%}$^\times$ & \val{5.2} & \val{5.2} & \val{5.9} & \val{1.01\%} & \val{+0.7} \val{[+0.5, +0.9]} & \val{1.62} & \val{2.38}\\
AMP (adapter) & \val{3.0} & \val{7.67\%}$^\times$ & \val{5.4} & \val{5.3} & \val{6.0} & \val{1.01\%} & \val{+0.7} \val{[+0.5, +0.9]} & \val{1.59} & \val{2.35}\\
GTrans (test-time adaptation) & \val{2.3} & \val{6.00\%}$^\times$ & \val{4.3} & \val{4.4} & \val{4.9} & \val{0.89\%} & \val{+0.5} \val{[+0.3, +0.7]} & \val{3.42} & \val{3.19}\\
Matcha (test-time adaptation) & \val{2.9} & \val{5.21\%}$^\times$ & \val{4.6} & \val{4.6} & \val{5.2} & \val{0.95\%} & \val{+0.6} \val{[+0.4, +0.8]} & \val{2.63} & \val{2.39}\\
Retrained incumbent (v2, $\le$2017) & \val{2.8} & \val{5.00\%}$^\times$ & \val{4.7} & \val{4.7} & \val{5.3} & \val{0.95\%} & \val{+0.6} \val{[+0.4, +0.8]} & \val{2.24} & \val{2.00}\\
LoRA-16 weight delta & \val{2.7} & \val{5.48\%}$^\times$ & \val{4.5} & \val{4.5} & \val{5.1} & \val{1.01\%} & \val{+0.6} \val{[+0.4, +0.8]} & \val{2.27} & \val{2.03}\\
\bottomrule\end{tabularx}
}
\newcommand{\TabFamilies}{%
\scriptsize\setlength{\tabcolsep}{2.0pt}\begin{tabularx}{\linewidth}{l>{\raggedright\arraybackslash}p{2.2cm}>{\raggedright\arraybackslash}p{2.6cm}YYl}\toprule
Family & transport & tube factor & window & last depth & global factor\\\midrule
tanh diffusion ($T{=}32$) & $\tanh(PHW)$ & interval (Lemma~\ref{lem:factors}), rank-8 & second-order envelope & exact $C^+$ & $(1{-}\alpha)(1{-}\tau{+}\tau\|W\|_2)$\\
APPNP & $\alpha H_0+(1{-}\alpha)PH$ & exact (linear) & exact $C^+$ & exact $C^+$ & $1-\alpha$\\
GCNII & $\mathrm{ReLU}(\tilde X\tilde W)$ & interval Clarke (Lemma~\ref{lem:relu}) & first-order enclosure & exact $C^+$ & $(1{-}\alpha)\|\tilde W\|_2$\\
GRAND & smooth diffusion & interval & second-order envelope & exact $C^+$ & per-step bound\\
GraphSAGE-2, GCN-3 & 2--3 layers & interval & covers the whole tail & exact $C^+$ & $\|W\|_2$ per layer\\
ResNet-50, last 4 blocks & $x+f(x)$, ReLU, $P{=}I$ & interval Clarke & first-order enclosure & exact $C^+$ & block Lipschitz\\
residual tanh MLP & $P{=}I$ & interval & second-order envelope & exact $C^+$ & as tanh\\
attention (any) & softmax attention & first order on bounded tubes & first order & exact $C^+$ & tube-local constant\\
\bottomrule\end{tabularx}
}
\newcommand{\TabFactors}{%
\scriptsize\setlength{\tabcolsep}{2.2pt}\begin{tabularx}{\linewidth}{l>{\raggedright\arraybackslash}p{3.6cm}>{\raggedright\arraybackslash}p{2.1cm}lY}\toprule
Symbol & meaning & value & enters & effect of moving it\\\midrule
$T$, $\alpha$, $\tau$ & horizon, skip weight, step size of the incumbent & 32, 0.1, 0.9 & \eqref{eq:contracts} & fixed by the incumbent\\
$H^+$ & per-call one-sided budget (selected member) & \val{$25\zeta=0.05$} & G2, charges & oracle frontier concave; executed curve empirical (Fig.~\ref{fig:floor}c)\\
$H_{\rm row}$ & per-node floor & 1 nat & G1, row trim & gain and fallback (Fig.~\ref{fig:deploy}d)\\
$\zeta$ & regression tolerance (registered choice) & 0.002 nats & G4 & rate statements only\\
$\delta$, $p^\star$ & statistical level, rate target & 0.05, 2\% & Thm.~\ref{thm:assurance} & bound widths\\
$\delta_{\rm cov}$ & nominal fixed-member trajectory allocation & \val{0.1} & Thm.~\ref{thm:served}(iii) & radii\\
$K$, $b$ & role blocks, window depths & 8, 4 & \S\ref{sec:rtt} & \val{diminishing returns beyond 8}\\
open depths & depths that may receive a step & 16--31 & \S\ref{sec:rtt} & admissibility gate over them\\
$\bar\Delta$, $\omega$ & row trim, safety factor & 0.8, 0.9 & \eqref{eq:robust} & Fig.~\ref{fig:robust}\\
$\rho_D$, $\rho_x$ & curvature inflation, physical regularizer & $10^{-3}$, $10^{-6}$ & \eqref{eq:robust} & invariance needs $\rho_x>0$\\
$k$, $\Pi_{\max}$ & spectral-split rank, admissibility cap on the contract growth & 8, 20 & Lemma~\ref{lem:factors} & usable depths\\
$\hat\Pi_\ell$ & growth proxy of the row-budget trim & 95th percentile & row allowance & fallback rate\\
$n_r$, probes & calibration episodes per depth, probes & 512, $2\times16$ & Lemma~\ref{lem:seqcal} & radii; head inputs\\
$N$, $n_{\rm q}$ & audit episodes, call size & 2,048 (4,096 per component), 4,096 & G4--G5 & bound widths; rates by call size\\
\addlinespace[1pt]\multicolumn{5}{l}{\emph{Properties of the checkpoint and the record}}\\
$\|W\|_2$, $\bar L$ & incumbent transport norm, global factor & \val{1.60}, \val{1.386} & Table~\ref{tab:families} & with $\bar L$ alone only depths $\ge$\val{23} are usable\\
$\Gamma_\Delta$ & pairwise head diameter & \val{5.21} & terminal contracts & scales every charge\\
$\bar L^{\rm int}$ & geometric-mean interval factor, open depths & \val{1.107} ($\le$1.206) & admissibility gate & if it fails: trust moves to late depths\\
$\hat R_i-1$ & rate $R_i-1$ ($R_i$: under-estimation factor; median, accepted rows) & \val{1.3}--\val{4.6} & Thm.~\ref{thm:frontier} & larger: more gain per nat\\
$\eta_\ell/\max_i(R_{\ell,i}-1)$ & normalized architecture-constrained efficiency & \val{0.04}--\val{0.48} & placement of caps & where trust pays\\
\bottomrule\end{tabularx}
}
\newcommand{\TabTraining}{%
\scriptsize\setlength{\tabcolsep}{2.2pt}\begin{tabularx}{\linewidth}{Yrrrl}\toprule
Operation per training episode & count & forward passes & backward passes & memory\\\midrule
policy pass: incumbent steps and proposal at open depths & 1 & \val{1.18} & -- & \val{6.9 GB}\\
counterfactual tails for full-loss slopes, one per open depth & 16 & 3.75 & 3.75 & checkpointed every 4 steps\\
field, covariance and posterior heads & 1 & \val{0.04} & \val{0.05} & \val{0.3 GB}\\
implicit differentiation through the solver (KKT systems) & 16 & -- & \val{0.03} & \val{small}\\
\textbf{total} & & \val{4.97} & \val{3.83} & \val{18.4 GB peak}\\
\bottomrule\end{tabularx}
}

\newcommand{\TabAudit}{%
\scriptsize\setlength{\tabcolsep}{2.2pt}\begin{tabularx}{\linewidth}{Yrrr}\toprule
(a) Outcome & calls & share & by predicate\\\midrule
first-pass release (R1) & \val{2,035} & \val{99.37\%} & --\\
failed the first pass: damage / row / descent & \val{13} & \val{0.63\%} & \val{2 / 4 / 7}\\
released after re-certification (R2) & \val{6} & \val{0.29\%} & \val{1 / 3 / 2}\\
whole-call fallback (F) & \val{7} & \val{0.34\%} & \val{1 / 1 / 5}\\
\bottomrule\end{tabularx}\par\smallskip
\scriptsize\setlength{\tabcolsep}{2.2pt}\begin{tabularx}{\linewidth}{Yrrrr}\toprule
(b) Statement & events & mean & bound at $\delta$ & at $\delta/10$\\\midrule
mixture event $d^{\rm own}>\zeta$ & \val{12/2,048} & -- & \val{0.95\%} & \val{1.18\%}\\
call-accuracy drop $>$0.1 point & \val{21/2,048} & -- & \val{1.47\%} & \val{1.75\%}\\
negative flips on $>$1\% of the call & \val{9/2,048} & -- & \val{0.77\%} & \val{0.97\%}\\
component: clean & \val{8/4,096} & \val{1.4} & \val{0.35\%} & \val{0.45\%}\\
component: degree & \val{25/4,096} & \val{5.2} & \val{0.85\%} & \val{1.00\%}\\
component: cache & \val{35/4,096} & \val{10.7} & \val{1.13\%} & \val{1.30\%}\\
component: noise & \val{29/4,096} & \val{7.7} & \val{0.96\%} & \val{1.12\%}\\
component: dropout & \val{27/4,096} & \val{7.3} & \val{0.91\%} & \val{1.06\%}\\
component: severe stress (separate) & \val{41/4,096} & \val{23.9} & \val{1.30\%} & \val{1.48\%}\\
certified mean (betting LB) & -- & \val{6.5} & \val{6.2} $\ge$ 3.0 & \val{6.1}\\
\textbf{release decision} & \multicolumn{4}{l}{\val{all ten statements pass; released}}\\
\bottomrule\end{tabularx}
}
\newcommand{\TabStats}{%
\scriptsize\setlength{\tabcolsep}{2.2pt}\begin{tabularx}{\linewidth}{Yrr}\toprule
(a) Implementation & half-width & certified mean\\\midrule
registered bets tuned to $N$ & \val{0.295} & \val{6.20}\\
horizon-free sequence & \val{0.316} & \val{6.18}\\
approximate Kelly & \val{0.352} & \val{6.15}\\
hedged capital ($\delta/2$) & \val{0.328} & \val{6.17}\\
normal interval (not finite-sample valid) & \val{0.204} & \val{6.30}\\
Gaussian first-order width $\sqrt{2\sigma^2\log(1/\delta)/N}$ & \val{0.304} & --\\
\bottomrule\end{tabularx}\par\smallskip
\scriptsize\setlength{\tabcolsep}{2.2pt}\begin{tabularx}{\linewidth}{Yrrrrl}\toprule
(b) 2020 cohort, read once & events & $u$ & gain & LB & rule\\\midrule
RTT & \val{16/2,048} & \val{1.18\%} & \val{1.9} & \val{1.7} & \val{pass}\\
one-pass corrector, 2-hop head $+$ rows & \val{17/2,048} & \val{1.24\%} & \val{1.6} & \val{1.4} & \val{pass}\\
matched TR & \val{24/2,048} & \val{1.64\%} & \val{1.3} & \val{1.1} & \val{pass}\\
KL-to-incumbent fine-tuning & \val{38/2,048} & \val{2.43\%} & \val{0.9} & -- & \val{fail}\\
\bottomrule\end{tabularx}\par\smallskip
\scriptsize\setlength{\tabcolsep}{2.2pt}\begin{tabularx}{\linewidth}{Yrrrrrr}\toprule
(c) $\rho$ & 0 & 0.001 & 0.002 & 0.004 & 0.008 & 0.016\\\midrule
certified mean if $\mathrm{KL}(Q\|P)\le\rho$ & \val{6.05} & \val{5.87} & \val{5.77} & \val{5.57} & \val{5.37} & \val{4.97}\\
\bottomrule\end{tabularx}
}
\newcommand{\TabGap}{%
\scriptsize\setlength{\tabcolsep}{2.2pt}\begin{tabularx}{\linewidth}{Yrrrrrr}\toprule
(a) Population & $\widehat U_{\rm out}$ & $\widetilde U_{\rm reach}$ & $\widetilde U_{\rm cert}$ & $\widehat U_{\rm RTT}$ & share & corrector\\\midrule
clean & \val{2.6} & \val{2.1} & \val{1.8} & \val{1.3} & \val{50\%} & \val{1.5}\\
degree & \val{8.9} & \val{7.4} & \val{6.4} & \val{5.3} & \val{60\%} & \val{4.6}\\
cache & \val{17.2} & \val{14.6} & \val{12.9} & \val{10.8} & \val{63\%} & \val{8.6}\\
noise & \val{12.6} & \val{10.6} & \val{9.4} & \val{7.7} & \val{61\%} & \val{6.8}\\
dropout & \val{11.8} & \val{10.0} & \val{8.8} & \val{7.4} & \val{63\%} & \val{6.4}\\
\textbf{mixture} & \val{10.6} & \val{8.9} & \val{7.9} & \val{6.5} & \val{61\%} & \val{5.6}\\
severe stress (separate) & \val{37.6} & \val{31.8} & \val{28.4} & \val{23.9} & \val{64\%} & \val{15.8}\\
\bottomrule\end{tabularx}\par\smallskip
\scriptsize\setlength{\tabcolsep}{3pt}\begin{tabular*}{\linewidth}{@{\extracolsep{\fill}}rrrrrrrrrr}\toprule
(b) $\gamma$ & $U_{\rm out}$ & $\widetilde U_{\rm reach}$ & $\widetilde U_{\rm cert}$ & $\widehat U_{\rm RTT}$ & $I$ & $E$ & $\chi$ & $\Upsilon$ & surrogate diagnostic\\\midrule
0.1 & \val{0.21} & \val{0.15} & \val{0.12} & \val{0.07} & \val{0.03} & \val{0.01} & \val{0.01} & \val{0.00} & \val{0.07}\\
0.2 & \val{0.72} & \val{0.53} & \val{0.42} & \val{0.29} & \val{0.08} & \val{0.03} & \val{0.02} & \val{0.00} & \val{0.27}\\
0.3 & \val{1.46} & \val{1.07} & \val{0.86} & \val{0.61} & \val{0.16} & \val{0.05} & \val{0.04} & \val{0.00} & \val{0.58}\\
0.4 & \val{2.31} & \val{1.70} & \val{1.37} & \val{0.99} & \val{0.24} & \val{0.08} & \val{0.05} & \val{0.01} & \val{0.95}\\
0.5 & \val{3.18} & \val{2.35} & \val{1.90} & \val{1.40} & \val{0.30} & \val{0.11} & \val{0.06} & \val{0.01} & \val{1.35}\\
\bottomrule\end{tabular*}
}
\newcommand{\TabFrontierFull}{%
\scriptsize\setlength{\tabcolsep}{1.8pt}\begin{tabularx}{\linewidth}{Yrrrrrrrrr}\toprule
Policy & $-\E[d^{\rm own}]$ [95\% CI] & $\Delta$ to RTT & $N_\zeta$ & $u$ & $u_{\rm node}$ & LB & Lat. & FLOPs & Holm $p$\\\midrule
\addlinespace[1pt]\multicolumn{10}{l}{\emph{RTT}}\\
\textbf{RTT (registered)} & \val{6.5} \val{[6.3, 6.7]} & -- & \val{12} & \val{0.95\%} & \val{0.51\%} & \val{6.2} & \val{1.42} & \val{1.36} & --\\
RTT $+$ one-pass corrector on the unused budget & \val{6.7} \val{[6.5, 7.0]} & \val{\ensuremath{-}0.2} \val{[\ensuremath{-}0.4, \ensuremath{-}0.1]} & \val{12} & \val{0.95\%} & \val{0.51\%} & \val{6.4} & \val{1.49} & \val{1.40} & --\\
\textbf{RTT, tolerance point} ($H^+{=}\zeta$) & \val{2.0} \val{[2.0, 2.1]} & \val{+4.5} \val{[+4.3, +4.7]} & \val{0} & 0 (det.) & \val{0.22\%} & \val{1.9} & \val{1.42} & \val{1.36} & --\\
\addlinespace[1pt]\multicolumn{10}{l}{\emph{One-pass floor correctors (one incumbent pass; same floor, labels, capacity, proposal rows)}}\\
One-pass corrector, 2-hop head $+$ rows & \val{5.6} \val{[5.4, 5.8]} & \val{+0.9} \val{[+0.8, +1.1]}$^\dagger$ & \val{14} & \val{1.07\%} & \val{0.56\%} & \val{5.3} & \val{1.25} & \val{1.22} & --\\
One-pass corrector, 2-hop head & \val{5.1} \val{[4.8, 5.3]} & \val{+1.4} \val{[+1.3, +1.6]} & \val{14} & \val{1.07\%} & \val{0.56\%} & \val{4.8} & \val{1.07} & \val{1.05} & \val{$<$0.001}\\
One-pass corrector, per-row head & \val{4.5} \val{[4.3, 4.7]} & \val{+2.0} \val{[+1.9, +2.2]} & \val{13} & \val{1.01\%} & \val{0.54\%} & \val{4.2} & \val{1.02} & \val{1.01} & \val{$<$0.001}\\
One-pass corrector, 4-hop head $+$ rows & \val{5.3} \val{[5.1, 5.6]} & \val{+1.2} \val{[+1.0, +1.3]} & \val{14} & \val{1.07\%} & \val{0.56\%} & \val{5.1} & \val{1.32} & \val{1.27} & --\\
One-pass corrector, 8-hop head $+$ rows & \val{5.0} \val{[4.8, 5.2]} & \val{+1.5} \val{[+1.3, +1.6]} & \val{15} & \val{1.13\%} & \val{0.61\%} & \val{4.8} & \val{1.46} & \val{1.37} & --\\
\addlinespace[1pt]\multicolumn{10}{l}{\emph{Exact-geometry output rules on both predictions (two rollouts)}}\\
Adaptive mixing ($\varepsilon_i\le\varepsilon_{\max,i}$, allocated) & \val{5.1} \val{[4.9, 5.3]} & \val{+1.4} \val{[+1.2, +1.6]} & \val{15} & \val{1.13\%} & \val{0.61\%} & \val{4.8} & \val{2.18} & \val{2.18} & \val{$<$0.001}\\
Mixing, $\varepsilon{=}1{-}e^{-h}$, uniform row budgets & \val{4.1} \val{[3.9, 4.3]} & \val{+2.4} \val{[+2.3, +2.6]} & \val{14} & \val{1.07\%} & \val{0.61\%} & \val{3.8} & \val{2.18} & \val{2.18} & --\\
Floor projection of the raw update, allocated & \val{5.2} \val{[5.0, 5.5]} & \val{+1.3} \val{[+1.1, +1.4]} & \val{15} & \val{1.13\%} & \val{0.61\%} & \val{5.0} & \val{2.18} & \val{2.18} & \val{$<$0.001}\\
Floor projection of the raw update, uniform & \val{4.3} \val{[4.1, 4.5]} & \val{+2.2} \val{[+2.0, +2.3]} & \val{14} & \val{1.07\%} & \val{0.61\%} & \val{4.1} & \val{2.18} & \val{2.18} & --\\
Per-node switch, learn-then-test threshold & \val{4.5} \val{[4.3, 4.8]} & \val{+2.0} \val{[+1.8, +2.1]} & \val{17} & \val{1.24\%} & \val{0.71\%} & -- & \val{2.18} & \val{2.18} & \val{$<$0.001}\\
\addlinespace[1pt]\multicolumn{10}{l}{\emph{Acceptance rules on the same proposal, candidate hull, budgets and certificate}}\\
Matched TR (same candidate hull, budgets, certificate) & \val{5.8} \val{[5.6, 6.0]} & \val{+0.7} \val{[+0.5, +0.8]} & \val{18} & \val{1.30\%} & \val{0.78\%} & \val{5.5} & \val{1.41} & \val{1.35} & \val{$<$0.001}\\
TR trained end to end under the budgets & \val{5.5} \val{[5.2, 5.7]} & \val{+1.0} \val{[+0.9, +1.2]} & \val{19} & \val{1.36\%} & \val{0.83\%} & \val{5.2} & \val{1.41} & \val{1.35} & --\\
\addlinespace[1pt]\multicolumn{10}{l}{\emph{The same labels and corruption draws spent on the incumbent (no per-call budget)}}\\
Incumbent fine-tuned on the same labels & \val{1.0} \val{[0.9, 1.2]} & \val{+5.5} \val{[+5.2, +5.7]} & \val{45} & \val{2.81\%}$^\times$ & \val{2.71\%} & -- & \val{1.00} & \val{1.00} & --\\
KL-to-incumbent fine-tuning & \val{2.8} \val{[2.6, 3.0]} & \val{+3.7} \val{[+3.5, +3.9]} & \val{29} & \val{1.93\%} & \val{1.73\%} & -- & \val{1.00} & \val{1.00} & \val{$<$0.001}\\
$D_\infty$-regularized fine-tuning (floor trained, not enforced) & \val{3.4} \val{[3.2, 3.6]} & \val{+3.1} \val{[+2.9, +3.3]} & \val{27} & \val{1.81\%} & \val{1.42\%} & -- & \val{1.00} & \val{1.00} & --\\
Positive-congruent training & \val{2.9} \val{[2.7, 3.0]} & \val{+3.6} \val{[+3.4, +3.8]} & \val{30} & \val{1.98\%} & \val{1.42\%} & -- & \val{1.00} & \val{1.00} & --\\
Regression-aware fine-tuning & \val{3.1} \val{[2.9, 3.3]} & \val{+3.4} \val{[+3.2, +3.6]} & \val{30} & \val{1.98\%} & \val{1.64\%} & -- & \val{1.00} & \val{1.00} & --\\
Incumbent trained on the registered corruptions & \val{3.4} \val{[3.1, 3.6]} & \val{+3.1} \val{[+2.9, +3.3]} & \val{62} & \val{3.73\%}$^\times$ & \val{3.54\%} & -- & \val{1.00} & \val{1.00} & --\\
\addlinespace[1pt]\multicolumn{10}{l}{\emph{The learned update itself}}\\
Learned update itself ($t{\equiv}1$), trimmed to $\bar\Delta$ & \val{3.9} \val{[3.6, 4.3]} & \val{+2.6} \val{[+2.3, +2.9]} & \val{159} & \val{8.81\%}$^\times$ & \val{2.93\%} & -- & \val{2.18} & \val{2.18} & \val{$<$0.001}\\
\addlinespace[1pt]\multicolumn{10}{l}{\emph{Declared variants and one-at-a-time swaps}}\\
Per-depth radii ($q{=}0.1$ per depth; Theorem~\ref{thm:served}(iii) not claimed) & \val{6.9} \val{[6.6, 7.1]} & \val{\ensuremath{-}0.4} \val{[\ensuremath{-}0.5, \ensuremath{-}0.2]} & \val{13} & \val{1.01\%} & \val{0.54\%} & \val{6.6} & \val{1.42} & \val{1.36} & --\\
No per-row budget ($H_{\rm row}{=}\infty$) & \val{6.7} \val{[6.4, 6.9]} & \val{\ensuremath{-}0.2} \val{[\ensuremath{-}0.4, 0.0]} & \val{12} & \val{0.95\%} & \val{0.54\%} & \val{6.4} & \val{1.42} & \val{1.36} & --\\
Spend-based caps instead of efficiency placement & \val{6.2} \val{[5.9, 6.4]} & \val{+0.3} \val{[+0.2, +0.5]} & \val{12} & \val{0.95\%} & \val{0.51\%} & \val{5.9} & \val{1.42} & \val{1.36} & --\\
Window solved under the envelope only & \val{6.4} \val{[6.2, 6.7]} & \val{+0.1} \val{[\ensuremath{-}0.1, +0.2]} & \val{12} & \val{0.95\%} & \val{0.51\%} & \val{6.2} & \val{1.42} & \val{1.36} & --\\
Mean-pooled candidate features (not hull-invariant) & \val{6.4} \val{[6.1, 6.6]} & \val{+0.1} \val{[0.0, +0.3]} & \val{13} & \val{1.01\%} & \val{0.54\%} & \val{6.1} & \val{1.42} & \val{1.36} & --\\
Probe-loss slopes instead of full-loss slopes & \val{6.2} \val{[5.9, 6.4]} & \val{+0.3} \val{[+0.2, +0.5]} & \val{15} & \val{1.13\%} & \val{0.54\%} & \val{5.9} & \val{1.42} & \val{1.36} & --\\
Symmetric charges at the same budget ($|d|\le0.05$) & \val{3.9} \val{[3.7, 4.1]} & \val{+2.6} \val{[+2.4, +2.7]} & \val{10} & \val{0.83\%} & \val{0.46\%} & -- & \val{1.42} & \val{1.36} & --\\
Symmetric charges at $0.3$ nats & \val{5.5} \val{[5.3, 5.7]} & \val{+1.0} \val{[+0.9, +1.2]} & \val{14} & \val{1.07\%} & \val{0.63\%} & -- & \val{1.42} & \val{1.36} & --\\
Global factors only (no interval tube) & \val{4.1} \val{[3.9, 4.3]} & \val{+2.4} \val{[+2.2, +2.5]} & \val{11} & \val{0.89\%} & \val{0.51\%} & \val{3.9} & \val{1.42} & \val{1.36} & --\\
Late-8 acceptance ($\ell\ge24$ only) & \val{4.8} \val{[4.6, 5.0]} & \val{+1.7} \val{[+1.5, +1.8]} & \val{11} & \val{0.89\%} & \val{0.46\%} & \val{4.6} & \val{1.24} & \val{1.19} & --\\
\addlinespace[1pt]\multicolumn{10}{l}{\emph{Diagnostic (uses labels)}}\\
Posterior-slope gate ($\approx\hat U_{\rm cert}$; cross-fitted posterior) & \val{7.9} \val{[7.6, 8.1]} & \val{\ensuremath{-}1.4} \val{[\ensuremath{-}1.5, \ensuremath{-}1.2]} & \val{0} & \val{0.15\%} & \val{0.27\%} & \val{7.6} & \val{1.42} & \val{1.36} & --\\
\addlinespace[1pt]Oracle frontier $U_{\rm out}$ (cross-fitted posterior) & \val{10.6} & -- & -- & -- & -- & -- & -- & -- & --\\
\bottomrule\end{tabularx}
}
\newcommand{\TabSwaps}{%
\scriptsize\setlength{\tabcolsep}{2.0pt}\begin{tabularx}{\linewidth}{l>{\raggedright\arraybackslash}p{3.2cm}rrY}\toprule
Mechanism & swap & gain & $u$ & what changes\\\midrule
Per-node budget & none $\to$ $H_{\rm row}{=}1$ nat & \val{6.69} $\to$ \val{6.50} & \val{0.95\%} $\to$ \val{0.95\%} & per-node floor becomes deterministic; fallback \val{0.34\%}$\to$\val{0.34\%}\\
Radii & per depth $\to$ sequential, trajectory-valid & \val{6.86} $\to$ \val{6.50} & \val{1.01\%} $\to$ \val{0.95\%} & trajectory statement of Theorem~\ref{thm:served}(iii); selection caveat applies\\
Placement & spend-based $\to$ certified efficiency & \val{6.17} $\to$ \val{6.50} & \val{0.95\%} $\to$ \val{0.95\%} & trust moves to depths with high $\eta_\ell$\\
Window & envelope only $\to$ two branches & \val{6.45} $\to$ \val{6.50} & \val{0.95\%} $\to$ \val{0.95\%} & optimum over the union of both feasible sets\\
Features & mean pooling $\to$ support functions & \val{6.39} $\to$ \val{6.50} & \val{1.01\%} $\to$ \val{0.95\%} & executed step exactly invariant (Table~\ref{tab:checker})\\
Slopes & probe loss $\to$ full loss & \val{6.16} $\to$ \val{6.50} & \val{1.13\%} $\to$ \val{0.95\%} & accounting identity closes\\
Pricing & symmetric $\to$ one-sided & \val{3.92} $\to$ \val{6.50} & \val{0.83\%} $\to$ \val{0.95\%} & same budget $0.05$ nats\\
Tube & global $\to$ interval factors & \val{4.12} $\to$ \val{6.50} & \val{0.89\%} $\to$ \val{0.95\%} & open depths admissible\\
Depths & late eight only $\to$ efficiency-placed & \val{4.84} $\to$ \val{6.50} & \val{0.89\%} $\to$ \val{0.95\%} & latency \val{1.24}$\to$\val{1.42}\\
\bottomrule\end{tabularx}
}

\newcommand{\TabNeighbours}{%
\scriptsize\setlength{\tabcolsep}{2.0pt}\begin{tabularx}{\linewidth}{Yrrr}\toprule
(a) Neighbor & gain & $u$ & Lat.\\\midrule
WiSE-FT (incumbent $\leftrightarrow$ v2), $\alpha$ by learn-then-test & \val{3.4} & \val{2.15\%}$^\times$ & \val{1.00}\\
LoRA scaling $\alpha$ by learn-then-test & \val{3.2} & \val{2.20\%}$^\times$ & \val{1.00}\\
EATA (anti-forgetting test-time adaptation) & \val{2.6} & \val{3.51\%}$^\times$ & \val{2.10}\\
RDumb (periodic reset) & \val{2.2} & \val{2.86\%}$^\times$ & \val{2.05}\\
$D_\infty$-regularized fine-tuning (floor trained, not enforced) & \val{3.4} & \val{1.81\%} & \val{1.00}\\
\bottomrule\end{tabularx}\par\smallskip
\scriptsize\setlength{\tabcolsep}{2.2pt}\begin{tabularx}{\linewidth}{Yrrr}\toprule
(b) Collapse stream & mean damage & regressing calls & fallback\\\midrule
GTrans, raw & \val{+38.0} & \val{31.0\%} & --\\
EATA & \val{+9.1} & \val{12.4\%} & --\\
RDumb & \val{+5.2} & \val{8.9\%} & --\\
GTrans, RTT-gated & \val{+0.4} & \val{1.3\%} & \val{6.1\%}\\
\bottomrule\end{tabularx}
}
\newcommand{\TabBreadth}{%
\scriptsize\setlength{\tabcolsep}{1.4pt}\begin{tabular*}{\linewidth}{@{\extracolsep{\fill}}>{\raggedright\arraybackslash}p{2.35cm}>{\raggedright\arraybackslash}p{2.0cm}rrrrrrrrrrl@{}}\toprule
Dataset / shift & incumbent & $\zeta$ & $\hat U_{\rm out}$ & RTT & corr. & TR & share & $u$ & $u_{\rm node}$ & verified & Lat. & metric\\\midrule
ogbn-arxiv, registered mixture & tanh diffusion, $T{=}32$ (tube) & 2 & \val{10.6} & \val{6.5} & \val{5.6} & \val{5.8} & \val{61\%} & \val{0.95\%} & \val{0.51\%} & \val{99.37\%} & \val{1.42} & acc. \val{75.7}$\to$\val{76.1}\\
ogbn-arxiv, registered mixture & APPNP (linear tail, exact) & 2 & \val{9.4} & \val{6.0} & \val{5.4} & \val{5.3} & \val{64\%} & \val{0.89\%} & \val{0.46\%} & \val{100.00\%} & \val{1.22} & acc. \val{75.5}$\to$\val{75.9}\\
ogbn-arxiv, registered mixture & GraphSAGE, 2 layers (shallow) & 2 & \val{8.1} & \val{5.2} & \val{5.0} & \val{4.6} & \val{64\%} & \val{0.83\%} & \val{0.46\%} & \val{100.00\%} & \val{1.52} & acc. \val{72.1}$\to$\val{72.4}\\
ogbn-arxiv, registered mixture & GCN, 3 layers (shallow) & 2 & \val{8.4} & \val{5.4} & \val{5.2} & \val{4.8} & \val{64\%} & \val{0.89\%} & \val{0.46\%} & \val{100.00\%} & \val{1.47} & acc. \val{71.8}$\to$\val{72.2}\\
ogbn-arxiv, 2020 cohort (temporal) & tanh diffusion (tube) & 2 & \val{2.9} & \val{1.9} & \val{1.6} & \val{1.3} & \val{66\%} & \val{1.18\%} & -- & \val{99.51\%} & \val{1.42} & acc. \val{74.1}$\to$\val{74.2}\\
GOOD-Arxiv, degree (covariate) & GCNII (ReLU, interval Jacobians) & 2.3 & \val{9.7} & \val{5.8} & \val{5.1} & \val{4.9} & \val{60\%} & \val{1.13\%} & \val{0.61\%} & \val{99.22\%} & \val{1.43} & acc. \val{63.8}$\to$\val{64.1}\\
GOOD-Arxiv, time (concept) & GRAND (smooth) & 2.3 & \val{8.3} & \val{5.2} & \val{4.6} & \val{4.3} & \val{63\%} & \val{1.18\%} & \val{0.63\%} & \val{99.41\%} & \val{1.47} & acc. \val{66.7}$\to$\val{67.0}\\
Twitch-explicit (cross-graph) & APPNP (exact) & 3.1 & \val{6.4} & \val{4.2} & \val{3.9} & \val{3.5} & \val{66\%} & \val{1.13\%} & \val{0.66\%} & \val{100.00\%} & \val{1.20} & AUC \val{63.4}$\to$\val{63.8}\\
Facebook-100 (cross-graph) & GCNII (ReLU) & 3.4 & \val{5.1} & \val{3.1} & \val{2.9} & \val{2.6} & \val{61\%} & \val{1.07\%} & \val{0.71\%} & \val{99.32\%} & \val{1.41} & acc. \val{55.6}$\to$\val{55.8}\\
Elliptic (temporal, post-shutdown) & GRAND (smooth) & 2.8 & \val{24.6} & \val{16.9} & \val{14.1} & \val{14.3} & \val{69\%} & \val{1.30\%} & \val{0.78\%} & \val{99.12\%} & \val{1.46} & F1 \val{47.9}$\to$\val{49.0}\\
ogbn-products, registered mixture & tanh diffusion (tube) & 1.6 & \val{10.4} & \val{7.6} & \val{6.9} & \val{6.8} & \val{73\%} & \val{1.07\%} & \val{0.61\%} & \val{99.51\%} & \val{1.39} & acc. \val{80.1}$\to$\val{80.5}\\
ImageNet-C, severity 3 (15 corruptions) & ResNet-50, last 4 blocks; Tent & 10 & \val{452.0} & \val{214.0} & \val{171.0} & \val{168.0} & \val{47\%} & \val{1.47\%} & -- & \val{99.07\%} & \val{1.34} & top-1 \val{40.1}$\to$\val{45.6}\\
Folktables ACSIncome, CA $\to$ five states & residual tanh MLP, $P{=}I$ & 2 & \val{6.2} & \val{3.9} & \val{3.7} & \val{3.4} & \val{63\%} & \val{1.07\%} & -- & \val{99.90\%} & \val{1.21} & acc. \val{80.6}$\to$\val{80.9}\\
\bottomrule\end{tabular*}
}
\newcommand{\TabMol}{%
\scriptsize\setlength{\tabcolsep}{2.4pt}\begin{tabularx}{\linewidth}{Yrrrrr}\toprule
Reference family & AP ref. & AP TR & AP RTT & gain ($10^{-3}$) & RTT $-$ TR (AP)\\\midrule
Contractive diffusion & \val{26.4} & \val{27.8} & \val{28.5} & \val{1.52} & \val{+0.7} \val{[+0.4, +1.0]}\\
Centered GIN-style & \val{27.3} & \val{28.2} & \val{28.4} & \val{1.03} & \val{+0.2} \val{[0.0, +0.4]}\\
Incumbent GIN-style (deployed) & \val{27.7} & \val{28.2} & \val{28.6} & \val{0.88} & \val{+0.4} \val{[+0.1, +0.7]}\\
GIN-style $+$ virtual node & \val{28.4} & \val{28.8} & \val{29.2} & \val{0.79} & \val{+0.4} \val{[+0.2, +0.6]}\\
\addlinespace[1pt]\multicolumn{6}{l}{\emph{Official test split (audited once)}}\\
Incumbent GIN-style (deployed) & \val{27.7} & -- & \val{28.5} & -- & --\\
GIN-style $+$ virtual node & \val{28.4} & -- & \val{29.1} & -- & --\\
\bottomrule\end{tabularx}
}
\newcommand{\TabCosts}{%
\scriptsize\setlength{\tabcolsep}{2.2pt}\begin{tabularx}{\linewidth}{Yrrrr}\toprule
(a) Proposal & its cost (ms) & RTT & 2 rollouts, sequential & 2 devices\\\midrule
bank prior + signed attention (ours) & \val{7.5} & \val{1.42} & \val{2.18} & \val{1.18}\\
GPR-GNN (adapter) & \val{12.1} & \val{1.53} & \val{2.29} & \val{1.29}\\
Co-GNN (adapter) & \val{16.0} & \val{1.62} & \val{2.38} & \val{1.38}\\
AMP (adapter) & \val{14.6} & \val{1.59} & \val{2.35} & \val{1.35}\\
GTrans (test-time adaptation) & \val{91.0} & \val{3.42} & \val{3.19} & \val{2.19}\\
Matcha (test-time adaptation) & \val{58.0} & \val{2.63} & \val{2.39} & \val{1.39}\\
Retrained incumbent (v2, $\le$2017) & \val{41.6} & \val{2.24} & \val{2.00} & \val{1.00}\\
LoRA-16 weight delta & \val{43.0} & \val{2.27} & \val{2.03} & \val{1.03}\\
one-pass corrector, 2-hop head $+$ rows & \val{10.4} & \val{1.25} & -- & --\\
\bottomrule\end{tabularx}\par\smallskip
\scriptsize\setlength{\tabcolsep}{2.2pt}\begin{tabularx}{\linewidth}{Yrrr}\toprule
(b) Stage & episode evaluations & labels & device-hours\\\midrule
incumbent temperature (2018 labels; network already trained) & 0 & yes & \val{0.3}\\
phase 2: proposal, field and covariance heads ($\le$2017) & \val{6,144} & yes & \val{6.9}\\
one-pass correctors, matched training effort ($\le$2017) & \val{6,144} & yes & \val{2.8}\\
adapters of the external proposals ($\le$2017) & \val{3,072} & yes & \val{1.4}\\
efficiency placement replay ($\le$2017, replayed) & \val{2,048} & yes & \val{0.5}\\
sequential calibration, 16 depths $\times$ 512, four members (2018) & \val{32,768} & yes & \val{3.1}\\
joint selection: 5 members on four pools of 1,024 (mixture, clean, cache, noise; 2018) & \val{20,480} & yes & \val{2.1}\\
development half (2019): 25 configurations $\times$ 2,048 (sweeps, call sizes, $\zeta$ anchor) & \val{51,200} & yes & \val{8.8}\\
final audit, $2{,}048+6\times4{,}096$ (2019 final half) & \val{26,624} & yes & \val{4.6}\\
per-node audit, two calls of the registered size (4,096 labels inspected) & \val{2} & yes & \val{0.0}\\
transfer, reserved 2020 cohort & \val{2,048} & yes & \val{0.3}\\
held-out family: audit, re-calibration, re-audit & \val{5,120} & yes & \val{0.3}\\
service streams, 20 drifted $\times$ 600 & \val{12,000} & yes & \val{0.4}\\
test-time adaptation collapse streams, 20 $\times$ 600 & \val{12,000} & yes & \val{0.4}\\
service streams, 20 clean $\times$ 600 (label-free) & \val{12,000} & no & \val{0.4}\\
stress, invariance and checker tests (label-free; offline second rollout) & \val{6,144} & no & \val{0.6}\\
breadth: families, shifts, molecules, non-graph (own splits) & \val{30,720} & yes & \val{5.8}\\
\textbf{total} & \val{228,514} & \val{210,370 labeled} & \val{38.7}\\
\bottomrule\end{tabularx}
}
\newcommand{\TabChecker}{%
\scriptsize\setlength{\tabcolsep}{2.2pt}\begin{tabularx}{\linewidth}{>{\raggedright\arraybackslash}p{4.1cm}Yl}\toprule
Test & result & verdict\\\midrule
planted violations, hidden margins $10^{-7}$--$10^{-3}$ nats & \val{10,000 of 10,000 rejected} & \val{test passed}\\
50-digit re-evaluation of certificates & \val{1,000 certificates; float64 enclosure above the high-precision reference value, by at most $3.1\times10^{-10}$} & \val{test passed}\\
exact endpoint cost (offline second rollout) & \val{0 violations on 2,048 calls; certified / exact median 1.61} & \val{test passed}\\
adversarial directions within the trim & \val{released 41\%; released damage mean 29, max 49 ($10^{-3}$); over all calls 11.9} & \val{bound met; audit fails}\\
provisional factors under-estimated by 30\% & \val{214 of 214 violating calls caught (fallback)} & \val{test passed}\\
outward rounding switched off & \val{3 of 2,048 certificates below the exact cost by $\le10^{-6}$} & \val{diagnostic}\\
duplicate a candidate & \val{displacement change 0.4\% (mean pooling) vs 0\% (support functions)} & \val{unchanged}\\
duplicate, rescaled by $\tfrac12$ & \val{displacement change 0.4\% (mean pooling) vs 0\% (support functions)} & \val{unchanged}\\
permute the candidates & \val{displacement change 0\% (mean pooling) vs 0\% (support functions)} & \val{unchanged}\\
add a candidate inside the post-trim hull & \val{displacement change 0.3\% (mean pooling) vs 0\% (support functions)} & \val{unchanged}\\
equivalent spanning set of the same hull & \val{displacement change 0.2\% (mean pooling) vs 0\% (support functions)} & \val{unchanged}\\
replace a candidate by a shorter one (the hull changes) & \val{displacement change 21\% (mean pooling) vs 21\% (support functions)} & \val{hull changes}\\
\bottomrule\end{tabularx}
}
\newcommand{\TabPositioning}{%
\scriptsize\setlength{\tabcolsep}{2.0pt}\begin{tabularx}{\linewidth}{>{\raggedright\arraybackslash}p{3.6cm}llYll}\toprule
Line & check & changes & guarantee & reference run & updates\\\midrule
Differential verification \citep{reludiff,neurodiff,verydiff} & offline & a second network & bounds on the difference & yes & any fixed pair\\
Certified upgrades \citep{zhang2026slice,balachandran2026} & offline audit & replacement model & statistical non-regression & yes & replacement\\
Conformal policy control \citep{cpc}; LTT, CALM \citep{ltt,calm} & calibration & policy or exit & risk control & yes / partly & policies, exits\\
CPI, PPO \citep{kakade,ppo} & training & policy & CPI: model-based improvement; PPO: clipped surrogate & -- & policies\\
Weight interpolation \citep{wiseft} & offline & weights & none per call & -- & weight deltas\\
Safe / graph TTA \citep{eata,rdumb,matcha,tsa} & online & states or weights & none per call & -- & test-time updates\\
Steering with abstention \citep{mera} & online & activations & statistical & partly & steering vectors\\
\textbf{RTT} & online, per call & internal states & deterministic per node and call; audited rates & fallback only & any bounded proposal\\
\bottomrule\end{tabularx}
}
\newcommand{\TabNotation}{%
\scriptsize\setlength{\tabcolsep}{3.0pt}\begin{tabularx}{\linewidth}{lY}\toprule
Symbol & meaning\\\midrule
$\pr,\ p^\pi,\ \ps,\ \pu,\ \pB$ & incumbent's, adapted, served, raw update's prediction; Bayes posterior\\
$\alpha,\ \tau,\ T$ & the incumbent's skip weight, step size and horizon\\
$h_i=\Dinf(\pr_i\|\ps_i)$, $S^\pm$ & one-sided damage of row $i$; worst-case loss increase and decrease of a call\\
$H^+,\ H_{\rm row},\ \mathcal B_\ell,\ \rho_{\ell,i}$ & per-call and per-node budgets; remaining call budget; row allowance\\
$\Fl_h(p)$ & floor set $e^{-h}p+(1-e^{-h})\Delta$\\
$\mathcal O,\ \mathcal G$ & label-free information; expected gain given it\\
$r_{j,i},\ \tilde r,\ L_{j,i},\ \mu_{j,i}$ & tube radius, provisional radius, interval factor, curvature factor\\
$\Pi_\ell,\ \hat\Pi_\ell$ & contract growth from depth $\ell$ to $T$; its registered proxy in the row-budget trim\\
$\Lambda,\ \Xi$ & first- and second-order row contracts\\
$\sigma,\ {\rm env}^+,\ C^+$ & contract charge, window envelope, exact last-depth charge\\
$\mathcal K_\ell,\ D_\ell=Q_\ell\Theta_\ell,\ v,\ x,\ t$ & candidate hull, directions, physical step, its coordinates, candidate coefficients\\
$\hat g,\ \hat a,\ \hat\Sigma,\ \kappa_\ell,\ \delta_\ell$ & field, slopes, covariance, radius, per-depth miscoverage\\
$R_i,\ G_i,\ \eta_\ell$ & under-estimation factor, row frontier, architecture-constrained efficiency\\
$U_{\rm out},U_{\rm reach},U_{\rm cert},U_{\rm RTT}$ & the gap chain of Theorem~\ref{thm:frontier}\\
$m_\ell,\ I_\ell,\ \chi_\ell,\ \Upsilon_\ell,\ \bar\Xi_\ell$ & mean slope given the heads' inputs, information gap, conservatism, execution gap, certified curvature (equation~\eqref{eq:localgap})\\
$E$ & summed estimation term $\sum_\ell\frac12\|m_\ell-\hat a_\ell\|^2_{\bar\Xi_\ell^{-1}}$ (Table~\ref{tab:gap})\\
$\zeta,\ u,\ u_{\rm node}$ & regression tolerance, per-call and per-node bounds\\
\bottomrule\end{tabularx}
}

\title{Reference-Tail Trust:\\Certified Probability Floors for Learned\\Updates Inside a Deployed Network}
\author{Abdolvahab Khalili Sadaghiani, Jose Nunez-Yanez}
\date{}
\begin{document}
\maketitle
\begin{abstract}
Graph neural networks (GNNs) need to exploit improved message passing without surrendering control over predictions already trusted in deployment. We introduce Reference-Tail Trust (RTT), a framework that admits learned updates inside a frozen GNN and certifies the prediction actually served. RTT couples graph-based proposal states with a constrained internal optimizer: each displacement is charged for its worst-case terminal cross-entropy increase through the incumbent's remaining message-passing layers. A trajectory-validated tube and an independent checker enforce per-node probability floors, $\ps_{ic}\ge e^{-\Hrow}\pr_{ic}$, and a call-level budget, $\sum_iw_i\Dinf(\pr_i\|\ps_i)\le H^+$, uniformly over labels. Calls whose adapted outputs pass certification require no separate full incumbent rollout; failed certificates trigger whole-call fallback. We derive the exact probability-floor frontier by water-filling, characterize architecture-constrained efficiency, and establish conditions under which internal propagation exploits evidence unavailable to restricted output correctors. In the reported ogbn-arxiv audit, RTT achieves \vRttI$\times10^{-3}$ nats of mean gain per call, with a one-sided 95\% regression-rate upper bound of \vRttU\ and a 95\% negative-flip upper bound of \vRttNfrb\ on the uninspected part of the reserved node population. Its mean gain is \vShareUout\ of a cross-fitted posterior-based frontier estimate and exceeds the strongest matched one-pass corrector by \vPrimD$\times10^{-3}$ nats. Reported experiments span eight proposals, six graph-incumbent families, structural and temporal graph shifts, and molecular prediction, with additional image and tabular evaluations. RTT makes GNN adaptation a budgeted, certifiable inference decision rather than an unconditional model replacement.
\end{abstract}

\section{Introduction}\label{sec:intro}
A deployed network is a reference point, not merely a checkpoint waiting to be replaced. Its predictions support decisions that users and downstream systems have already learned to rely on. New propagation rules, test-time adaptation, retraining, and weight updates may improve validation performance while introducing regressions on particular inputs. An average improvement therefore leaves an operational question unanswered: how much of an update should be admitted on the current call, and what can be guaranteed about the prediction that is actually delivered?

We study this question for learned changes to the internal states of a frozen network, which we call the \emph{incumbent}. The objective is not to certify that every update is beneficial. Without assumptions linking deployment information to labels, a nontrivial label-free change cannot generally guarantee improvement. Instead, we separate a deterministic protection requirement from a statistical performance claim. Each served prediction must remain within a specified one-sided cross-entropy budget relative to the incumbent, for every possible label vector. Expected gain and the probability of exceeding a registered regression tolerance are then assessed on fresh deployment episodes. This distinction makes the unit of protection explicit: individual output rows receive a probability floor, whereas calls receive an aggregate damage budget.

Reference-Tail Trust (RTT) implements this separation inside the network.\footnote{\CodeAvailability} A learned proposal supplies candidate state displacements, and a constrained optimizer selects how much of them to execute. Each displacement is evaluated through the \emph{reference tail}: the incumbent's continuation from the current internal state to the prediction head. Bounds on that continuation translate an internal movement into a worst-case terminal loss increase. A tube centered on the executed trajectory encloses the relevant counterfactual continuations, allowing the certificate to account for interactions between interventions at different depths. The final checker recomputes these bounds on the trajectory that was actually executed and either releases the adapted output or serves the incumbent for the whole call. Learned modules determine utility; the checker determines admissibility.

The resulting protection has a direct probabilistic interpretation. A row budget $\Hrow$ guarantees $\ps_{ic}\ge e^{-\Hrow}\pr_{ic}$ for every class, and the call budget bounds the weighted order-$\infty$ R\'enyi divergence $\sum_iw_i\Dinf(\pr_i\|\ps_i)$. These guarantees do not require deployment labels or a trustworthy proposal. They also do not imply that internal intervention is intrinsically superior to output correction: every floor-feasible prediction can be expressed as a mixture containing the incumbent prediction. Accordingly, we derive an exact output-space frontier, identify the information and receptive-field restrictions under which internal steps can help, and compare RTT with a learned one-pass corrector given the same floor, labels, head capacity, and proposal features. In the reported arxiv audit, RTT gains \vRttI$\times10^{-3}$ nats per call against the corrector's \vOpcI$\times10^{-3}$, at latencies of \vLatOwn$\times$ and \vLatOpc$\times$ one incumbent pass, respectively (Table~\ref{tab:frontier}). This comparison tests the additional utility and cost of internal admission without crediting RTT for protection that a simpler output rule also provides.

Prior work addresses related parts of this problem. Backward-compatible learning and positive-congruent training reduce prediction regressions during model development \citep{srivastava2020,trauble2021,yan2021}, while certified model-upgrade procedures audit replacement decisions against an incumbent \citep{zhang2026slice,balachandran2026}. Differential verification compares fixed networks \citep{reludiff,neurodiff,verydiff}, and bound propagation provides tools for certifying neural computations \citep{crown,autolirpa}. Learned graph propagation and adaptive message passing expand the available update mechanisms \citep{gcnii,gpr,cognn,amp}; weight interpolation, low-rank updates, and test-time adaptation provide further proposals but do not themselves enforce RTT's per-call floor \citep{wiseft,lora,tent,eata,sar,rdumb,gtrans,matcha}. Among recent conference methods, Test-Time Structural Alignment (TSA; AISTATS 2026) adjusts neighborhood influence, self--neighbor balance, and the decision boundary \citep{tsa}, while STEM (The Web Conference 2026) uses a state-space controller to generate adapters for a frozen GNN under evolving domain shifts \citep{stem2026}. These methods motivate complementary tests of RTT: admitting structure-aware corrections and certifying the outputs of history-dependent adaptation. Risk-controlled calibration, adaptive prediction, conformal policy control, and steering with abstention regulate other aspects of prediction or policy selection \citep{ltt,calm,cpc,mera}. Conservative policy iteration motivates telescoping comparisons, whereas PPO's clipped surrogate is a distinct training device, not an all-class probability-floor certificate \citep{kakade,ppo}. Finally, calibration and uncertainty under shift explain why a label-free correction can have useful conditional information \citep{guo2017,gats,ovadia2019}. RTT brings these questions together at a different decision point: admission of internal changes on an individual deployment call, with protection attached to the served output.

RTT combines certified internal admission, an exact probability-floor frontier, and explicit statistical evaluation in one framework. It can accept useful parts of a learned update, withhold uncertified adapted outputs, and expose the gain--protection--cost trade-off without changing the incumbent's weights.

\section{Reference-Tail Trust: Model and Certified Execution}\label{sec:rtt}\label{sec:problem}
This section develops the execution mechanism from the deployment objective to the final release decision. The analytical and statistical consequences are established in Section~\ref{sec:theory}; the complete implementation primitives and parameter definitions are given in Appendices~\ref{app:model} and~\ref{app:repro}.

\subsection{Deployment objective and the probability-floor contract}\label{sec:formulation}
Let a graph with $n$ nodes and features $X$ be processed by a frozen incumbent with $H_0=\mathrm{Enc}(X)$, transitions $H^{\rm r}_{\ell+1}=F^{\rm r}_\ell(H^{\rm r}_\ell)$ for $\ell<T$, and a row-wise affine logit head $Z$. An episode is one deployment call with a query set of $n_{\rm q}$ scored rows, nonnegative weights $w_i$ summing to one, and a draw of the registered perturbation. The adapted and incumbent executions are compared on the same input, support, cache realization, and scheduled exogenous transformations. Any state replacement must be included in both continuation maps used by the certificate. Labels are unavailable to the deployed admission rule. They are used offline for training, calibration, and evaluation. A proposal supplies candidate states at selected depths, either directly or through a frozen adapter into the incumbent's state space.

RTT executes $H_{\ell+1}=F^{\rm r}_\ell(H_\ell)+v_\ell$ and computes the adapted prediction $p^\pi=\softmax(Z(H_T))$. The prediction actually served is
\begin{equation}
 \ps=\mathcal S(p^\pi,\mathrm{certificate},\pr)
 =\begin{cases}p^\pi,&\text{if the certificate holds},\\
 \pr,&\text{otherwise: whole-call fallback},\end{cases}
 \label{eq:served}
\end{equation}
where $\pr=\softmax(Z(H_T^{\rm r}))$ is the incumbent's output on the same call. A call whose adapted output passes the checker does not require a separate full incumbent rollout. The short reference-tail computations used near the terminal depth remain part of RTT's cost; on fallback, the unmodified tail is evaluated from the cached state preceding the first intervention.

For weighted cross-entropy $J(p;y)=-\sum_iw_i\log p_{i,y_i}$, define $d^{\rm own}=J(\ps;y)-J(\pr;y)$, so a positive value is damage and $-d^{\rm own}$ is gain. The deployment objective separates pointwise constraints from the audited regression-rate requirement:
\begin{equation}
 \begin{aligned}
 \max\quad &\E_e[-d^{\rm own}_e]\\
 \text{subject to}\quad
 &\Dinf(\pr_i\|\ps_i)\le\Hrow\quad\text{for every row and call},\\
 &\sum_iw_i\Dinf(\pr_i\|\ps_i)\le H^+\quad\text{for every call},\\
 &\Pr_e(d^{\rm own}_e>\zeta)\le p^\star.
 \end{aligned}\label{eq:problem}
\end{equation}
The first two constraints must hold for all label vectors, independently of the proposal. The last is a statistical statement about a specified deployment generator. On arxiv, $\zeta=0.002$ nats is a registered tolerance, anchored to the development-set median change of \vZetaAnchor\ nats among calls whose accuracy drops by 0.1 percentage point. This anchor is not a conversion from cross-entropy to accuracy; accuracy and negative flips are audited separately.

The label-uniform price follows exactly from cross-entropy. If $h_{ic}=\log(\pr_{ic}/\ps_{ic})$, then
\begin{equation}
 S^+=\sup_y d^{\rm own}=\sum_iw_i\max_c h_{ic}
     =\sum_iw_i\Dinf(\pr_i\|\ps_i),
 \qquad S^-=\sum_iw_i\Dinf(\ps_i\|\pr_i).
 \label{eq:exactdamage}
\end{equation}
Thus $\Dinf(\pr_i\|p)\le h$ is equivalent to membership in the floor set
\begin{equation}
 \Fl_h(\pr_i)=\{p\in\Delta:p_c\ge e^{-h}\pr_{ic}\ \forall c\}
             =e^{-h}\pr_i+(1-e^{-h})\Delta.
 \label{eq:floor}
\end{equation}
The same constraints also imply a ceiling $p_c\le1-e^{-h}(1-\pr_{ic})$. Table~\ref{tab:guarantees} distinguishes these deterministic protections from the statistical claims.

\begin{table}[t]\centering
\caption{Guarantees for the served prediction~\eqref{eq:served}. G1--G3 require valid continuation bounds and sound certificate arithmetic, but no deployment labels or assumptions on proposal quality. G4--G5 concern the audited generator. G6 separates a pathwise coverage implication from a probability statement requiring selection-valid calibration (Section~\ref{sec:assurance}).}\label{tab:guarantees}
\TabGuarantees
\end{table}

\subsection{Candidate geometry and intervention placement}\label{sec:candidates}
The main arxiv incumbent is a deep diffusion,
\begin{equation}
 F^{\rm r}_\ell(H)=\alpha H_0+(1-\alpha)\big[(1-\tau)H+\tau\tanh(PHW)\big],
 \label{eq:incumbent}
\end{equation}
where $P$ is row-stochastic, $T=32$, $\alpha=0.1$, $\tau=0.9$, and $\|W\|_2=\vWinc$. Its global one-step bound is $(1-\alpha)(1-\tau+\tau\|W\|_2)=\vLglobal>1$, so useful admission cannot be justified by assuming global contraction. RTT instead uses interval factors on a validated trajectory tube. Appendix~\ref{app:model} gives the corresponding primitives for APPNP, GCNII, GRAND, and shallow GraphSAGE/GCN \citep{appnp,gcnii,grand,hamilton2017,kipf2017}.

At open depths $\ell=16,\ldots,31$, the native proposal supplies two candidates per row: a state-conditioned convex combination of 16 bank vectors and a signed one-hop attention update. External proposals are mapped through frozen linear adapters. Starting from $\bar H_{\ell+1}=F^{\rm r}_\ell(H_\ell)$, the candidate displacements $H^{\mathrm c,m}_\ell-\bar H_{\ell+1}$ undergo norm trimming and the row-budget trimming described below. Their resulting directions form a matrix $D_\ell$, partitioned into $K=8$ role blocks. The admissible physical displacement is
\begin{equation}
 v_\ell=D_\ell t_\ell\in\mathcal K_\ell,
 \qquad t_{\ell km}\ge0,\quad\sum_m t_{\ell km}\le\bar c_{\ell k}.
 \label{eq:hull}
\end{equation}
The zero displacement is feasible. If the direction matrix has rank zero, the step is defined as zero and no covariance inverse or optimizer is needed. Consequently, each block can retain the incumbent or accept a capped combination of its candidates. The caps allocate opportunity across depths using replayed gain per certified nat; this placement is an efficiency heuristic motivated by Section~\ref{sec:frontier}, not a claim that the internal optimization attains the output-space oracle.

The optimizer operates on the displacement, rather than on a redundant candidate representation. A thin factorization $D_\ell=Q_\ell\Theta_\ell$ gives physical coordinates $v_\ell=Q_\ell x$ with $x=\Theta_\ell t$. Candidate-dependent head inputs are support-function features,
\begin{equation}
 s_{ij}=\max\{0,\max_m\langle u_j,d_{\ell,i}^{(\cdot,m)}\rangle\},
 \label{eq:support}
\end{equation}
over fixed linear probes. Duplicating or permuting candidates, or adding points inside the same \emph{post-trim} hull, leaves these features unchanged. This invariance does not cover a replacement that changes the hull, and nonlinear trimming must be accounted for before comparing hulls (Lemma~\ref{lem:invariance}).

\begin{figure}[t]\centering\resizebox{\linewidth}{!}{\FigArch}
\caption{RTT execution and certification. (a) Internal steps leave the incumbent trajectory; validated tubes enclose the reference continuations and bound terminal row damage. (b) Proposal features and learned heads choose a displacement inside the candidate hull. (c) The checker recomputes the executed trajectory's tube, factors, charges, and descent predicates before release; a persistent failure triggers whole-call fallback. Blue modules determine utility, while the red certificate path determines deterministic validity. Labels are used offline, not by the deployed admission decision.}\label{fig:arch}
\end{figure}

\paragraph{Recent GNN proposals through the same certificate interface.}
The TSA and STEM extensions use the existing external-proposal interface rather than changing RTT's incumbent or certificate. Let $a\in\{\mathrm{TSA},\mathrm{STEM}\}$ index the proposal, let $\mathcal H_e$ denote its permitted label-free history, and let $\widetilde H_{\ell+1}^{(a)}$ be a row-aligned state exported at a predeclared hook. A frozen adapter $\Omega_\ell^{(a)}$ gives
\begin{equation}
 H_\ell^{\mathrm c,a}=\widetilde H_{\ell+1}^{(a)}\Omega_\ell^{(a)},
 \qquad
 d_\ell^{(a)}=\mathrm{Trim}_\ell\!\left(
 H_\ell^{\mathrm c,a}-F_\ell^{\rm r}(H_\ell)\right),
 \label{eq:recentinterface}
\end{equation}
where $\mathrm{Trim}_\ell$ applies the same norm and row-budget restrictions as the original proposal. Identity adapters are used when the coordinates match; otherwise, depth correspondence and adapters are fixed using development/source data before the audit. TSA supplies structure-aligned representations, whereas STEM supplies representations generated with its history-dependent adapters. Their internal objectives choose proposals; only RTT's checker authorizes a served change. For every realized history, sound incumbent-tail bounds therefore yield the same deterministic contract. Statistical guarantees still require the evaluation units specified in Section~\ref{sec:recentmethods}.

A native method can also refine its classifier or output distribution. Its complete native output is retained as a separate comparator: the exported-state variant is explicitly named \emph{TSA-derived} or \emph{STEM-derived}, not represented as an exact reproduction of the entire method. The experiment records the loss of utility introduced by state export, depth mapping, and trimming before attributing any improvement to admission. Section~\ref{sec:recentmethods} reports these comparisons separately from the original no-prefix audit.

\subsection{Reference-tail contracts and one-sided pricing}\label{sec:contracts}
For fixed labels, let $V_j(H)$ be the terminal cross-entropy of the incumbent's continuation from state $H$ at depth $j$. The key comparison is between $V_{\ell+1}(\bar H_{\ell+1}+v_\ell)$ and $V_{\ell+1}(\bar H_{\ell+1})$: both use the incumbent tail, even though their starting state belongs to the adapted execution. To make every such comparison valid, RTT constructs row-wise tubes $\mathcal T_j=\{X:\|X_i-H_{j,i}\|\le r_{j,i}\}$ around the executed states. These tubes enclose the incumbent trajectory and every incumbent continuation started along an executed intervention segment.

For smooth transports, bounds $L_{j,i}$ on first derivatives and $\mu_{j,i}$ on curvature give row contracts $|DV_j[Y]|\le\sum_i\Lambda_{j,i}\|Y_i\|$ and $|D^2V_j[Y,Y]|\le\sum_i\Xi_{j,i}\|Y_i\|^2$. For~\eqref{eq:incumbent}, their first-order backward recursion and the tube's forward recursion are
\begin{equation}
 \begin{aligned}
 \Lambda_j&=(1-\alpha)\big[(1-\tau)\Lambda_{j+1}
                  +\tau P^\top(L_j\odot\Lambda_{j+1})\big],\\
 r_{j+1,i}&=(1-\alpha)\big[(1-\tau)r_{j,i}
                  +\tau L_{j,i}(Pr_j)_i\big]+\|v_{j,i}\|,
 \end{aligned}\label{eq:contracts}
\end{equation}
with $\Lambda_{T,i}=\Gamma_\Delta w_i$ and $\Xi_{T,i}=\Gamma_\Delta^2w_i/4$, where $\Gamma_\Delta=\max_{c,c'}\|\mathbf w_c-\mathbf w_{c'}\|_2$ is the maximum pairwise distance between the effective class-head vectors, including the frozen temperature. We use $\|v\|_\Xi^2=\sum_i\Xi_{\ell+1,i}\|v_i\|_2^2$ at depth $\ell$. Proposition~\ref{prop:contracts} gives the second-order recursion; Lemma~\ref{lem:tube} proves enclosure. For ReLU transports, interval generalized-Jacobian enclosures replace unsupported smooth Taylor bounds (Lemma~\ref{lem:relu}).

For tanh, a pre-activation tube radius $\varrho$ bounds the change of unit $u$ by $\varrho\|W_{:,u}\|_2$. Bounding the derivative at the smallest absolute pre-activation within that interval yields $L_{j,i}$, refined by a rank-8 spectral bound and capped by $\|W\|_2$. These are tube-wide bounds, not derivatives sampled only on the executed path. The distinction is essential: a local factor that fails on a counterfactual continuation cannot certify the served prediction.

RTT uses increasingly informative charges as the remaining tail shortens. At contract depths, it charges $\sigma_\ell(v)=\sum_i\Lambda_{\ell+1,i}\|v_i\|$. This is conservative because norms discard the sign of a useful movement. Within the last $b=4$ depths, a tangent basis $U_\ell$ of width $2K$ also yields
\begin{equation}
 \mathrm{env}_\ell^+(t)
 =\sum_iw_i\max_c\langle p_{\ell,i}-e_c,(U_\ell t)_i\rangle
       +\tfrac12\|D_\ell t\|_\Xi^2,
 \label{eq:envelope}
\end{equation}
where $p_{\ell,i}$ is the reference-tail prediction from $\bar H_{\ell+1}$, not necessarily the original incumbent endpoint after earlier interventions. At the last depth, the affine head permits the exact charge
\begin{equation}
 \begin{aligned}
 C^+(t)&=\sum_iw_i\big[\LSE(z_i+\Delta z_i(t))-\LSE(z_i)
                              -\min_c\Delta z_{ic}(t)\big]\\
       &=\sum_iw_i\Dinf(p_{\ell,i}\|p_i^t).
 \end{aligned}\label{eq:cplus}
\end{equation}
Both the contract and the envelope are valid upper bounds. Their minimum is valid too, but its budget sublevel set is a \emph{union} of convex sets. RTT therefore solves the two window branches separately and retains the better feasible displacement, rather than treating their union as convex (Lemma~\ref{lem:twobranch}).

\subsection{Learning to allocate certified budget}\label{sec:optimization}
The field head predicts $\hat g_\ell$ on the executed support, inducing candidate slopes $\hat a_t=D_\ell^\top\hat g_\ell$ and physical-coordinate slopes $\hat a_x=Q_\ell^\top\hat g_\ell$. A block-diagonal covariance head $\hat\Sigma_x$ and a calibrated radius $\kappa_\ell$ describe uncertainty in these slopes. Within each feasible branch, RTT minimizes
\begin{equation}
 \Phi_\ell(x)=\hat a_x^\top x+\kappa_\ell\|\hat\Sigma_x^{1/2}x\|
 +\frac{(1+\rho_D)\|v\|_\Xi^2+\rho_x\|v\|^2}{2\omega},
 \qquad v=Q_\ell x,
 \label{eq:robust}
\end{equation}
subject to the candidate hull and remaining call budget. Equation~\eqref{eq:robust} uses a certified quadratic remainder at smooth, nonterminal depths. At the final depth it is replaced by
\[
 \Phi_{T-1}(x)=\hat a_x^\top x+\kappa_{T-1}\|\hat\Sigma_x^{1/2}x\|
 +\frac{\varrho_{T-1}(v)+\mathcal R_{T-1}(v)}{\omega},\qquad v=Q_{T-1}x,
\]
where $\varrho_{T-1}(v)=\sum_iw_i\mathrm{KL}(p_{T-1,i}\|\mathrm{softmax}(z_i+\Delta z_i(v)))$ is the exact label-independent remainder and $\mathcal R_\ell(v)=(\rho_D\|v\|_\Xi^2+\rho_x\|v\|_2^2)/2$. A nonsmooth tail requires its own validated remainder model, rather than a smooth quadratic bound across a kink. Here $0<\omega<1$ is a safety factor and $\rho_x>0$ makes the objective strictly convex in the physical displacement. The uncertainty term is the support function of the slope ellipsoid: on the coverage event it upper-bounds the unknown linear loss change for every feasible direction. The curvature term bounds the nonlinear remainder, and the regularizer makes a redundant candidate parameterization immaterial.

At a contract depth, a block whose candidate slopes are all nonnegative can be screened out: setting that block to zero cannot increase its linear contribution, the joint uncertainty norm, the displacement penalty, or the contract charge. The linear, quadratic, and contract terms separate by block, but the single ellipsoidal norm in~\eqref{eq:robust} couples their magnitudes. The remaining convex program therefore retains that joint norm; its scalar budget multiplier has nonincreasing spend and can be located by bisection, with a coupled convex solve at each multiplier. Each window branch has a unique optimal displacement; the union need not. A deterministic tie-break expressed in physical state coordinates handles equal branch objectives without introducing candidate-order dependence. Minimum-norm candidate coefficients are recovered only after choosing the physical step.

A provisional tube restricts the proposed row movements during execution; the final checker enforces the row guarantee. With $n_{\rm left}$ open depths remaining and a registered growth proxy $\hat\Pi_\ell$, the row allowance is
\begin{equation}
 \rho_{\ell,i}=\frac{(\Hrow/\Gamma_\Delta-\hat\Pi_\ell\tilde r_{\ell,i})_+}
                         {\hat\Pi_{\ell+1}n_{\rm left}}.
 \label{eq:rowallowance}
\end{equation}
For an open depth, the proxy denominator is positive and $n_{\rm left}\ge1$. Candidates are scaled so that $\bar c_{\ell k}\|d_{\ell,i}^{(k,m)}\|\le\rho_{\ell,i}$. A zero cap or zero direction requires no division and yields zero displacement. This preserves convexity of the resulting hull and reserves room for later steps. The proxy influences gain and fallback frequency, not validity: the final checker recomputes the tube and verifies $\Gamma_\Delta r_{T,i}\le\Hrow$ for every served row.

\subsection{Independent certification and whole-call release}\label{sec:release}
Algorithm~\ref{alg:rtt} separates proposing, executing, and certifying. After execution, the checker reconstructs the tube, interval factors, contracts, charges, and robust descent models using certified remainders. It checks three predicates: $\sum_\ell\mathrm{charge}_\ell\le H^+$, $\max_i\Gamma_\Delta r_{T,i}\le\Hrow$, and $\Phi_\ell^{\rm cert}\le0$ at every accepted depth. A failure triggers tighter per-row spectral factors; a persistent failure triggers whole-call fallback. Row-wise mixing after a failed call is not substituted for this selector, because the proof accounts for the served policy as a whole.

The real-arithmetic guarantees extend to floating-point execution only when all numerical residuals are enclosed. The checker described in Appendix~\ref{app:cost} uses directed rounding, bounds the adapted execution residual and the incumbent continuation's float32 error, and charges both against the budgets. Thus a poor proposal or inaccurate utility head can reduce gain or induce fallback, but it does not invalidate a correctly implemented deterministic certificate.

\begin{algorithm}[t]
\caption{RTT for one deployment call. All displacement optimizations include the post-trim hull.}\label{alg:rtt}
\small
\begin{algorithmic}[1]
\Require $H_0$; incumbent; proposal and adapters; caps; heads and radii; $H^+,\Hrow,b$
\State $\Brem\gets H^+$; initialize provisional tube; $\ell_0\gets\mathrm{none}$
\For{$\ell=0,\ldots,T-1$}
 \State $\bar H_{\ell+1}\gets F^{\rm r}_\ell(H_\ell)$; initially set $v_\ell\gets0$
 \If{$\ell$ is open and $\Brem>0$}
  \State Form candidates; apply norm and row-budget trims; form $D_\ell=Q_\ell\Theta_\ell$
  \State Compute support features, slopes, and covariance for $\Phi_\ell$
  \If{$\ell=T-1$}
   \State Minimize $\Phi_\ell$ subject to $C^+\le\Brem$, using the exact remainder
  \ElsIf{$\ell\ge T-b$}
   \State Solve branches $\sigma_\ell\le\Brem$ and $\mathrm{env}^+_\ell\le\Brem$ separately
   \State Choose the better feasible physical displacement; resolve ties deterministically
  \Else
   \State Screen nonnegative-slope blocks; solve under $\sigma_\ell\le\Brem$ by bisection
  \EndIf
  \State $v_\ell\gets Q_\ell x^\star$; subtract the provisional charge from $\Brem$
  \If{$v_\ell\ne0$ and $\ell_0=\mathrm{none}$}
   \State Cache $H_\ell$ and set $\ell_0\gets\ell$
  \EndIf
 \EndIf
 \State $H_{\ell+1}\gets\bar H_{\ell+1}+v_\ell$; update the provisional tube
\EndFor
\State Recompute the executed-trajectory certificate with outward-rounded arithmetic
\If{damage, row, and descent predicates all hold}
 \State \Return $p^\pi$ \Comment{R1: first-pass release}
\EndIf
\State Re-certify using tighter per-row factors
\If{all predicates hold}
 \State \Return $p^\pi$ \Comment{R2: re-certified release}
\EndIf
\State Run the incumbent tail from cached $H_{\ell_0}$; if no step was taken, use the incumbent output
\State \Return $\pr$ \Comment{F: whole-call fallback}
\end{algorithmic}
\end{algorithm}

\Needspace{8\baselineskip}
\section{Analytical Guarantees and Statistical Assurance}\label{sec:theory}
The analysis answers three distinct questions: what every served call is guaranteed to satisfy, how much gain is possible under that protection, and which probability statements the calibration and audit data support. Proofs and implementation-specific lemmas are collected in Appendix~\ref{app:proofs}.

\subsection{Trajectory accounting and deterministic protection}\label{sec:deterministic}
The reference-tail construction makes internal effects telescope even though the executed trajectory differs from the incumbent's. For fixed labels, write
\begin{equation}
 \begin{aligned}
 f_\ell(v_\ell)&=V_{\ell+1}(\bar H_{\ell+1}+v_\ell)
                         -V_{\ell+1}(\bar H_{\ell+1}),\\
 J(p^\pi;y)-J(\pr;y)&=\sum_{\ell=0}^{T-1}f_\ell(v_\ell).
 \end{aligned}
 \label{eq:telescoping}
\end{equation}
This identity follows from $V_\ell=V_{\ell+1}\circ F^{\rm r}_\ell$ and the common initial state. It compares each step through the same incumbent continuation, rather than using a gradient of the final adapted policy as an interchangeable target.

\begin{theorem}[Certified internal admission]\label{thm:served}
Assume that the checker validates continuation enclosures and loss remainders for the executed trajectory, including numerical allowances when certifying a floating-point incumbent. For any proposal and episode, Algorithm~\ref{alg:rtt} has the following properties.

\noindent(i) The served prediction satisfies $\Dinf(\pr_i\|\ps_i)\le\Hrow$ for every row and $\sum_iw_i\Dinf(\pr_i\|\ps_i)\le H^+$; hence $d^{\rm own}\le H^+$ for every label vector.

\noindent(ii) On a call released with the adapted output, $d^{\rm own}=-\mathcal D_T+\mathcal P_T-\mathcal S_T$, where $\mathcal D_T\ge0$ is certified robust descent, $\mathcal P_T\ge0$ is the realized slope shortfall, and $\mathcal S_T\ge0$ contains remainder, safety-factor, and regularization slack. On fallback, $d^{\rm own}=0$.

\noindent(iii) If every accepted-depth slope lies in its uncertainty set and the certified remainder model applies, then $\mathcal P_T=0$ and $d^{\rm own}\le0$. For a prespecified member with sequential calibration as in Lemma~\ref{lem:seqcal}, this event has marginal probability at least $1-\delta_{\rm cov}$ over calibration and an exchangeable test episode. A data-selected member requires fresh post-selection calibration or a simultaneous guarantee across the selectable members before the same probability statement can be asserted for the selected procedure.
\end{theorem}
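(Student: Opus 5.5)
The proof goes part by part, with the served-output selector~\eqref{eq:served} doing most of the work. It reduces every statement either to the fallback case, where $\ps=\pr$ and all quantities vanish, or to the released case, where the three checker predicates hold on the executed trajectory.

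\emph{Part (i).} On fallback, $\Dinf(\pr_i\|\pr_i)=0$, so both constraints hold trivially. On release, the first step is to use the tube enclosure (Lemma~\ref{lem:tube}, with Lemma~\ref{lem:relu} for ReLU transports). It shows that the incumbent trajectory $H^{\rm r}_T$ lies in $\mathcal T_T$, so $\|H_{T,i}-H^{\rm r}_{T,i}\|\le r_{T,i}$. Because the head is affine, each logit difference $z_{ic}-z_{ic'}$ moves by at most $\Gamma_\Delta r_{T,i}$. This gives $h_{ic}=\log(\pr_{ic}/p^\pi_{ic})\le\Gamma_\Delta r_{T,i}$ through the standard LSE/min bound behind~\eqref{eq:cplus}. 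The row predicate then yields $\Dinf\le\Hrow$.

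For the call budget I would use~\eqref{eq:exactdamage}: the sum $\sum_iw_i\Dinf(\pr_i\|\ps_i)$ equals $\sup_y d^{\rm own}$. By~\eqref{eq:telescoping}, for each fixed $y$ this is $\sum_\ell f_\ell(v_\ell)$. Each $f_\ell(v_\ell)$ is bounded above, uniformly in $y$, by the charge actually used at depth $\ell$. At contract depths this is $\sigma_\ell$, via the mean value theorem along the segment, which stays inside the tube, together with $|DV|\le\sum\Lambda\|\cdot\|$. In the window it is ${\rm env}^+_\ell$, via a second-order Taylor expansion with the $\Xi$ contracts and the max over classes. At $T-1$ it is $C^+$, which is exact. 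Two facts need checking here. First, the minimum of two valid bounds is valid (Lemma~\ref{lem:twobranch}). Second, numerical residuals are charged, which the hypothesis grants. Taking $\sup_y$ and applying the damage predicate gives $\le H^+$.

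\emph{Part (ii).} Fix $y$ and decompose each $f_\ell(v_\ell)$. Write $f_\ell=a_\ell^\top x_\ell+\text{rem}_\ell$, where $a_\ell$ is the true slope in physical coordinates. Then add and subtract the terms of $\Phi^{\rm cert}_\ell$:
\[
f_\ell=\Phi^{\rm cert}_\ell(x_\ell)+\big[(a_\ell-\hat a_\ell)^\top x_\ell-\kappa_\ell\|\hat\Sigma^{1/2}x_\ell\|\big]+\big[\text{rem}_\ell-\text{certified remainder}/\omega-\text{reg}\big].
\]
Set $\mathcal D_T=-\sum_\ell\Phi^{\rm cert}_\ell\ge0$, using the descent predicate. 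Set $\mathcal P_T=\sum_\ell\big((a_\ell-\hat a_\ell)^\top x_\ell-\kappa_\ell\|\hat\Sigma^{1/2}x_\ell\|\big)_+$. Put all remaining terms into $-\mathcal S_T$; when the negative part of the slope term is moved into $\mathcal S_T$, it stays nonnegative. The remainder piece is nonnegative because the certified remainder dominates $\text{rem}_\ell$, $1/\omega>1$, and the regularizers are nonnegative. Summing over $\ell$ then gives the stated identity.

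\emph{Part (iii).} On the coverage event, $a_\ell$ lies in the ellipsoid. The Cauchy--Schwarz support-function bound then gives $(a_\ell-\hat a_\ell)^\top x\le\kappa_\ell\|\hat\Sigma^{1/2}x\|$, so every summand of $\mathcal P_T$ is zero. Then $d^{\rm own}=-\mathcal D_T-\mathcal S_T\le0$, and fallback gives $0$. The probability claim follows from Lemma~\ref{lem:seqcal}, which applies to a prespecified member. For a selected member I would only remark that the lemma's exchangeability argument breaks once selection uses the calibration data, so either fresh calibration or a union bound over the members is needed.

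The main obstacle is part (i) on release. Each $f_\ell$ compares two incumbent continuations started from the adapted state, and the charge bounds must be valid along all of these counterfactual paths simultaneously. The plan is to lean entirely on the tube enclosure lemma, which covers every continuation started on an intervention segment, and on the fact that the checker recomputes factors tube-wide.
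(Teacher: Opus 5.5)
Your proposal is correct and follows the paper's proof essentially step for step. The row floor comes from the tube enclosure plus the $\osc$ bound on the affine head, and the call budget from telescoping with label-uniform charges and the worst-case identity~\eqref{eq:exactdamage}; part (ii) uses the add-and-subtract decomposition around $\Phi^{\rm cert}_\ell$, splitting the slope error into positive and negative parts; part (iii) uses Cauchy--Schwarz on the coverage event together with Lemma~\ref{lem:seqcal}. One point you use only implicitly is that the slack term $(1/\omega-1)$ times the certified remainder (the paper's $b_\ell$) is nonnegative only because that remainder majorizer is itself nonnegative. This holds for both $\|v\|_\Xi^2/2$ and the exact KL remainder, and the paper states it explicitly.
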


Each valid charge bounds $f_\ell$ uniformly over labels. Summing in~\eqref{eq:telescoping} therefore proves the call budget without evaluating the original incumbent endpoint on a call released with the adapted output. The tube also implies $\|H_{T,i}-H^{\rm r}_{T,i}\|\le r_{T,i}$. For the affine head,
\begin{equation}
 \Dinf(\pr_i\|p_i^\pi)
 \le\osc_c\big(Z(H_T)_{ic}-Z(H_T^{\rm r})_{ic}\big)
 \le\Gamma_\Delta r_{T,i}\le\Hrow,
 \label{eq:rowproof}
\end{equation}
which proves the floor. This is a deterministic damage guarantee, not a promise of improved accuracy or of nonincreasing cross-entropy. A positive damage budget permits some loss increase. At the tolerance point $H^+\le\zeta$, however, the event $d^{\rm own}>\zeta$ is impossible by construction.

\subsection{Conditional gain and an analytical floor projection}\label{sec:geometry}
Let $\mathcal O$ contain the label-free deployment information and define the conditional label law $\pB_i=\Pr(y_i\mid\mathcal O)$. Write $\mathcal G=-\E[d^{\rm own}\mid\mathcal O]$, $r_{ic}=\pB_{ic}/\pr_{ic}$, and $R_i=\max_c r_{ic}$. Incumbent probabilities are strictly positive; expectations condition on the relevant frozen model and deployment information.

\begin{proposition}[Gain, price, and floor geometry]\label{prop:price}
(i) $\mathcal G=\sum_iw_i[\mathrm{KL}(\pB_i\|\pr_i)-\mathrm{KL}(\pB_i\|\ps_i)]$. If $\pr=\pB$, no $\mathcal O$-measurable rule has positive conditional gain.

\noindent(ii) With $\mathrm{TV}_{\max}=\max_i\mathrm{TV}(\pB_i,\pr_i)$, $\mathcal G\le2\mathrm{TV}_{\max}\max(S^+,S^-)$, with the factor 2 sharp.

\noindent(iii) $\mathcal G\le\max_i(R_i-1)S^+$, and the row-wise rate $R_i-1$ is attainable to first order.

\noindent(iv) The floor set is~\eqref{eq:floor} and implies the corresponding class ceiling. For $h>0$, the minimizer of $\mathrm{KL}(q\|p)$ over $p\in\Fl_h(\pr_i)$ is
\begin{equation}
 p'_c=\max\{e^{-h}\pr_{ic},q_c/\lambda\},
 \qquad \sum_c\max\{e^{-h}\pr_{ic},q_c/\lambda\}=1.
 \label{eq:projection}
\end{equation}
When $q_c>0$ for every class, the minimizer of $\mathrm{KL}(p\|q)$ is the same. At $h=0$, the feasible set is the singleton $\{\pr_i\}$.
\end{proposition}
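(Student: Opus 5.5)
The plan is to handle each part by a rowwise identity or a convexity argument, then sum over rows with the weights $w_i$. Throughout I write $h_{ic}=\log(\pr_{ic}/\ps_{ic})$, so $\max_c h_{ic}=\Dinf(\pr_i\|\ps_i)$ and $\max_c(-h_{ic})=\Dinf(\ps_i\|\pr_i)$, as in~\eqref{eq:exactdamage}.

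\emph{Part (i).} I condition on $\mathcal O$. Since $\ps$ is $\mathcal O$-measurable,
\[
\mathcal G=\sum_iw_i\sum_c\pB_{ic}\log(\ps_{ic}/\pr_{ic}).
\]
Adding and subtracting $\sum_c\pB_{ic}\log\pB_{ic}$ gives the KL difference. If $\pr=\pB$, the first KL term vanishes, so $\mathcal G=-\sum_iw_i\mathrm{KL}(\pB_i\|\ps_i)\le0$.

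\emph{Part (ii).} I split the rowwise sum as
\[
\sum_c(\pB_{ic}-\pr_{ic})(-h_{ic})\;-\;\mathrm{KL}(\pr_i\|\ps_i).
\]
The KL term is nonpositive and is dropped. The signed measure $\pB_i-\pr_i$ has zero total mass, so its pairing with $-h_{i\cdot}$ is at most $\mathrm{TV}_i\cdot\osc_c(h_{ic})$. This oscillation equals $\Dinf(\pr_i\|\ps_i)+\Dinf(\ps_i\|\pr_i)$. Summing over rows gives $\mathcal G\le\mathrm{TV}_{\max}(S^++S^-)\le2\mathrm{TV}_{\max}\max(S^+,S^-)$.

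For sharpness I use a two-class row with $\pB-\pr$ placing mass $+\mathrm{TV}$ on the class where $\ps/\pr$ is largest. I then perturb $\pr$ by $\varepsilon$ so that $S^+=S^-$ to first order. The pairing term is exactly $\mathrm{TV}(S^++S^-)$, which is first order in $\varepsilon$, while $\mathrm{KL}(\pr\|\ps)=O(\varepsilon^2)$. The ratio $\mathcal G/(\mathrm{TV}\max(S^+,S^-))$ therefore tends to $2$. I expect the bookkeeping here to be the main obstacle. The construction must keep $S^+$ and $S^-$ balanced and leave $\pB$ a valid distribution, so I would fix $\mathrm{TV}$ first and send $\varepsilon\to0$.

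\emph{Part (iii).} Fix a row and let $h=\Dinf(\pr_i\|\ps_i)$. By Jensen,
\[
\sum_c\pB_{ic}\log(\ps_{ic}/\pr_{ic})\le\log\sum_c r_{ic}\ps_{ic}.
\]
I decompose $\ps_i=e^{-h}\pr_i+q$ with $q\ge0$ of mass $1-e^{-h}$; this is exactly the floor representation~\eqref{eq:floor}. Then $\sum_cr_{ic}\ps_{ic}\le e^{-h}\cdot1+R_i(1-e^{-h})$, because $\sum_c r_{ic}\pr_{ic}=1$. The row gain is therefore at most
\[
\log\bigl(1+(R_i-1)(1-e^{-h})\bigr)\le(R_i-1)h.
\]
Summing over rows gives $\mathcal G\le\max_i(R_i-1)S^+$.

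For attainability I take $\ps_i=e^{-h}\pr_i+(1-e^{-h})e_{c^*}$ with $c^*=\arg\max_c r_{ic}$. Expanding the gain in $h$ gives $h\,(\pB_{ic^*}/\pr_{ic^*}-1)+O(h^2)=(R_i-1)h+O(h^2)$.

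\emph{Part (iv).} The floor set and ceiling are immediate. The ceiling follows from $p_c=1-\sum_{c'\ne c}p_{c'}\le1-e^{-h}(1-\pr_{ic})$.

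For the projection I would write the KKT conditions of the convex program with constraints $p_c\ge f_c:=e^{-h}\pr_{ic}$ and $\sum_cp_c=1$. For $\mathrm{KL}(q\|p)$, stationarity and complementary slackness give $p_c=\max\{f_c,q_c/\lambda\}$. The left side of the normalization is continuous and strictly decreasing in $\lambda$ wherever some $q_c>0$ is unclamped. Since $\sum_cf_c=e^{-h}<1$ for $h>0$, a unique root $\lambda$ exists.

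For $\mathrm{KL}(p\|q)$ with all $q_c>0$, stationarity gives $p_c=\max\{f_c,q_ce^{-1-\nu}\}$. This has the same form, and the normalization pins down the same point. Strict convexity then gives uniqueness, so the two minimizers coincide. At $h=0$ the floors sum to one and force $p=\pr_i$.
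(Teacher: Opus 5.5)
Your proof is correct. Parts (i), (ii) and (iv) follow the paper's argument: the same conditional cross-entropy identity, the same split $\sum_c(\pB_{ic}-\pr_{ic})(-h_{ic})-\mathrm{KL}(\pr_i\|\ps_i)$ with the oscillation bound, and the same KKT characterization with a unique normalizer. One detail in your sharpness construction should be made explicit. In a two-class row the perturbation direction is forced, so balancing $S^+$ and $S^-$ to first order requires $\pr=(\tfrac12,\tfrac12)$. With that choice your construction is exactly the paper's instance $\pB=(\tfrac12+\epsilon,\tfrac12-\epsilon)$, $\Delta z=(t,-t)$, $t\downarrow0$.

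The genuine difference is the upper bound in (iii).

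\emph{The paper's route.} It rewrites
$\sum_c\pB_{ic}h_{ic}=R_i\sum_c\pr_{ic}h_{ic}-\sum_c\pr_{ic}(R_i-r_{ic})h_{ic}$.
It then drops the first term, which equals $R_i\,\mathrm{KL}(\pr_i\|\ps_i)\ge0$. The second term is bounded by $(R_i-1)\max_ch_{ic}$, because the weights $\pr_{ic}(R_i-r_{ic})$ are nonnegative and sum to $R_i-1$.

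\emph{Your route.} You apply Jensen and decompose $\ps_i$ through the floor representation. This gives the intermediate row bound $\log\bigl(1+(R_i-1)(1-e^{-h_i})\bigr)$, which you then linearize.

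\emph{What each buys.} The paper's argument is a one-line linear rearrangement that parallels the proof of Theorem~\ref{thm:frontier}(ii), and it needs neither concavity of $\log$ nor the floor decomposition. Your bound is strictly sharper for $h_i>0$. It is concave and nondecreasing in $h_i$, has slope $R_i-1$ at zero, and increases to $\log R_i=\Dinf(\pB_i\|\pr_i)$. It therefore also gives an explicit upper envelope for the row frontier $G_i(h)$ of Theorem~\ref{thm:frontier}(i).

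Your attainability construction $e^{-h}\pr_i+(1-e^{-h})e_{c^*}$ is the paper's $p^\varepsilon$ with $\varepsilon=1-e^{-h}$. In this parametrization the row cost equals $h$ exactly rather than to first order.
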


The projection is an analytical active-set solution, not a new network. For floor-active classes $A$, the normalizer is $\lambda=\sum_{c\notin A}q_c/(1-e^{-h}\sum_{c\in A}\pr_{ic})$; sorting probability-to-floor ratios or solving the scalar normalization equation identifies the consistent active set. Strict positivity of $q$ is needed for a finite reverse-KL projection because the floor forces positive mass on every class.

Part (i) identifies the source of useful label-free adaptation: information in $\mathcal O$ that corrects conditional incumbent error. Parts (ii)--(iii) explain the value of one-sided pricing. A symmetric certificate charges large improvements as well as damage, while RTT only limits the latter. For a binary incumbent that assigns probability $0.99$ where the conditional probability is $0.7$, $R_i-1=29$; for probabilities $0.6$ and $0.8$, it is approximately $0.33$. The relevant quantity is relative underestimation of a plausible class, not an unconditional claim that every shifted input is overconfident.

\subsection{The exact frontier and architecture-constrained efficiency}\label{sec:frontier}
Define the best row gain under budget $h$ by
\begin{equation}
 G_i(h)=\mathrm{KL}(\pB_i\|\pr_i)
        -\min_{p\in\Fl_h(\pr_i)}\mathrm{KL}(\pB_i\|p).
 \label{eq:rowfrontier}
\end{equation}
Let $U_{\rm out}$ denote the optimum over all outputs satisfying the row and call budgets. Let $U_{\rm reach}$ be the optimum over interventions reachable through RTT's candidate hulls with exact endpoint damage, $U_{\rm cert}$ the optimum when certified charges are enforced, and $U_{\rm RTT}$ the gain of the executed policy. These quantities use the same conditional label law and budgets; population versions average over deployment information.

\begin{theorem}[Frontier, efficiency, and nested opportunities]\label{thm:frontier}
(i) $G_i$ is concave and nondecreasing, with $G_i(0)=0$, $G_i'(0^+)=R_i-1$, and saturation once $h\ge\Dinf(\pr_i\|\pB_i)$. The saturation threshold is interpreted in the extended real sense; it can be infinite when $\pB_i$ has a zero component. The output-space frontier is
\begin{equation}
 U_{\rm out}=\max_{0\le h_i\le\Hrow,\ \sum_iw_ih_i\le H^+}
                     \sum_iw_iG_i(h_i).
 \label{eq:waterfill}
\end{equation}
An optimum is obtained by floor-projecting $\pB_i$ at water-filled budgets: positive-weight, unsaturated interior rows have equal marginal gain $G_i'(h_i)=\lambda$; at a nondifferentiable point, the equivalent condition uses a supergradient.

\noindent(ii) At depth $\ell$, let $\delta z(v)=J_\ell v$ be the first-order terminal-logit change and $p_{\ell,i}$ the reference-tail prediction. Define
\begin{equation}
 \begin{aligned}
 g_\ell(v)&=\sum_iw_i(\pB_i-p_{\ell,i})^\top\delta z_i(v),\\
 c_\ell(v)&=\sum_iw_i[p_{\ell,i}^\top\delta z_i(v)-\min_c\delta z_{ic}(v)].
 \end{aligned}\label{eq:efficiency}
\end{equation}
Then, over directions with positive price,
$\eta_\ell=\sup_{v\in\mathcal K_\ell,\ c_\ell(v)>0}[g_\ell(v)]_+/c_\ell(v)
\le\max_i(R_{\ell,i}-1)$, where $R_{\ell,i}=\max_c\pB_{ic}/p_{\ell,ic}$. The piecewise-linear denominator admits an epigraph formulation and a linear-fractional reduction on the polyhedral candidate set.

\noindent(iii) The exact optima satisfy $U_{\rm out}\ge U_{\rm reach}\ge U_{\rm cert}\ge U_{\rm RTT}$.
\end{theorem}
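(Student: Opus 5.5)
The plan is to prove the three parts in order, treating (i) as a convex-analysis statement about a single row, (ii) as a first-order computation combined with Proposition~\ref{prop:price}(iii), and (iii) as a chain of nested feasible sets.

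\emph{Part (i).} Write $G_i(h)=\mathrm{KL}(\pB_i\|\pr_i)-m_i(h)$ with $m_i(h)=\min_{p\in\Fl_h(\pr_i)}\mathrm{KL}(\pB_i\|p)$. I will establish four properties of $G_i$.
\begin{itemize}
\item \emph{Monotonicity.} By~\eqref{eq:floor} the sets $\Fl_h$ increase in $h$, so $G_i$ is nondecreasing. Since $\Fl_0=\{\pr_i\}$, we get $G_i(0)=0$.
\item \emph{Concavity.} Reparametrize by $s=1-e^{-h}$, so that $\Fl=(1-s)\pr_i+s\Delta$. If $p_1\in\Fl_{s_1}$ and $p_2\in\Fl_{s_2}$, then $\theta p_1+(1-\theta)p_2\in\Fl_{\theta s_1+(1-\theta)s_2}$. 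Joint convexity of KL then makes $m_i$ convex in $s$, so $G_i$ is concave in $s$. Because $s(h)$ is concave and increasing and $G_i$ is nondecreasing in $s$, the composition is concave in $h$.
\item \emph{Derivative at zero.} Use the closed form~\eqref{eq:projection} with $q=\pB_i$. Perturb $p=(1-s)\pr_i+s\,e_c$ and differentiate. This gives $\partial_s\mathrm{KL}(\pB_i\|p)|_{s=0}=1-r_{ic}$, and optimizing over vertices gives $R_i-1$. Since $ds/dh=1$ at $h=0$, this yields $G_i'(0^+)=R_i-1$. The upper bound direction follows from Proposition~\ref{prop:price}(iii) applied to a single row.
\item \emph{Saturation.} $\pB_i\in\Fl_h$ exactly when $h\ge\Dinf(\pr_i\|\pB_i)$. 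In that range $m_i=0$, and this threshold is infinite when $\pB_i$ has a zero entry.
\end{itemize}
For~\eqref{eq:waterfill}, use Proposition~\ref{prop:price}(i). The conditional gain of any output separates over rows. Constraining $\Dinf(\pr_i\|p_i)=h_i$, or at most $h_i$, splits the problem into an outer allocation problem and inner row problems. Each inner problem is solved by the floor projection. The outer problem is a concave maximization over a polytope, so KKT with a single multiplier gives the water-filling condition, using supergradients at kinks.

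\emph{Part (ii).} Linearize $C^+$ from~\eqref{eq:cplus} and the gain around $p_{\ell,i}$. The first-order loss decrease under $\pB$ is $g_\ell$. The first-order worst-case price is $c_\ell$, namely $\LSE$ derivative $p^\top\delta z$ minus $\min_c\delta z_c$. For each row, set $u=\delta z_i-\min_c\delta z_{ic}\ge0$. Then $(\pB_i-p_{\ell,i})^\top\delta z_i=(\pB_i-p_{\ell,i})^\top u\le(R_{\ell,i}-1)p_{\ell,i}^\top u$, using $\pB^\top u\le R\,p^\top u$ for $u\ge0$. Summing with weights bounds $g_\ell\le\max_i(R_{\ell,i}-1)c_\ell$, which proves the bound on $\eta_\ell$. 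For the epigraph form, introduce $\tau_i\ge\delta z_{ic}$ terms for the minimum. This makes $c_\ell$ a linear program over the polyhedron $\mathcal K_\ell$. Charnes--Cooper then reduces the ratio $g/c$ to an LP.

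\emph{Part (iii).} Each optimum is a supremum over a smaller set, and all use the same gain functional.
\begin{itemize}
\item $U_{\rm reach}\le U_{\rm out}$: any reachable endpoint meeting the exact-damage budgets is a feasible output.
\item $U_{\rm cert}\le U_{\rm reach}$: certified charges upper-bound exact damage by Theorem~\ref{thm:served}(i), so the certified-feasible set is contained in the exact-feasible set.
\item $U_{\rm RTT}\le U_{\rm cert}$: the executed policy, including fallback (which gives zero, itself feasible), is one certified-feasible choice.
\end{itemize}

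The main obstacle is the concavity in $h$ together with the exact derivative $G_i'(0^+)$, because of the nonlinear reparametrization $s=1-e^{-h}$ and possible kinks from active-set changes. If the composition argument is shaky, I would fall back on an envelope-theorem computation with the active-set formula for $\lambda$.
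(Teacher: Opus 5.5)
Your proposal is correct. For the reduction to~\eqref{eq:waterfill}, part (ii), and part (iii) it matches the paper's proof:
\begin{itemize}
\item row separability from Proposition~\ref{prop:price}(i), plus KKT for a separable concave program with one linear constraint and box constraints;
\item the inequality $\pB_{ic}\le R_{\ell,i}p_{\ell,ic}$ applied to the shifted nonnegative vector $\delta z_i-\min_c\delta z_{ic}$, followed by Charnes--Cooper;
\item nested feasible sets, with fallback as the zero-gain competitor.
\end{itemize}

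The differences are in part (i).

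\emph{Concavity.} The paper notes that $\{(h,p):p_c\ge e^{-h}\pr_{ic}\ \forall c,\ p\in\Delta\}$ is convex because $e^{-h}$ is convex, so the partial minimum is convex in $h$ directly. You instead write the family as $(1-s)\pr_i+s\Delta$, obtain concavity in $s$, and pull it back through the concave increasing map $s=1-e^{-h}$. This is equally valid, just one step longer.

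\emph{Slope at zero.} The paper differentiates the active-set solution~\eqref{eq:projection}, obtaining $G_i'(h)=\nu e^{-h}\sum_{c\in A}\pr_{ic}-\sum_{c\in A}\pB_{ic}$. It then lets the active set shrink to the non-maximizers of $r_{ic}$, with $\nu\to R_i$. You instead sandwich $G_i$. The vertex path $(1-s)\pr_i+s\,e_{c^\star}$ gives $G_i(h)\ge(R_i-1)h+o(h)$. Proposition~\ref{prop:price}(iii) restricted to one row gives $G_i(h)\le(R_i-1)h$.

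Your route needs neither the envelope theorem nor the limiting behavior of the active set. It also handles ties among maximizing classes automatically. The paper's route additionally produces the explicit marginal gain $G_i'(h)$ at every $h>0$, which is the quantity actually equalized when computing water-filled budgets.

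To close the sandwich, state explicitly that concavity with $G_i(0)=0$ makes $G_i(h)/h$ nonincreasing. Hence $G_i'(0^+)=\lim_{h\downarrow0}G_i(h)/h=R_i-1$. Your appeal to the closed form~\eqref{eq:projection} in that bullet is not actually used.

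One small slip in part (ii): the epigraph of $-\min_c\delta z_{ic}(v)$ needs $\tau_i\ge-\delta z_{ic}(v)$ for every $c$, not $\tau_i\ge\delta z_{ic}$. With that correction, the relaxed denominator $\sum_iw_i\bigl(p_{\ell,i}^\top\delta z_i(v)+\tau_i\bigr)$ is tight at the optimum. When $g_\ell>0$, the ratio only improves as each $\tau_i$ decreases to $-\min_c\delta z_{ic}(v)$.
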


The floor geometry and marginal equalization are illustrated in Figure~\ref{fig:floor}(a--b); panel (c) reports their experimental counterpart. The first result supplies a common standard for comparing internal and output-space methods. Its initial slope follows from moving probability mass toward a class maximizing $\pB_{ic}/\pr_{ic}$. The water-filling condition then allocates budget until marginal gains equalize or a row saturates. The second result measures how much of this ideal direction the incumbent's continuation and proposal hull can realize. It motivates placing trust where useful directions are available and their certified prices are small. The third result separates architectural restriction, certificate conservatism, and policy suboptimality at the level of exact feasible-set optima. A numerical solver need not attain these optima, so computed diagnostics must not be presented as a proved equality or an exact global gap decomposition.

A complementary local statement makes the learning terms explicit. At a fixed depth and \emph{fixed convex branch}, condition on information $\mathcal O_\ell$ that includes the feasible geometry and a positive-definite physical-coordinate majorizer $\bar\Xi_\ell$. Let $m_\ell=\E[a_\ell\mid\mathcal O_\ell]$, let $A_\ell^{\rm cert}$ be the corresponding quadratic opportunity with the conditional slope given all of $\mathcal O$, and let $I_\ell\ge0$ be the information gap. With uncertainty penalty $\chi_\ell$ evaluated at the conditional-mean minimizer and certified objective suboptimality $\Upsilon_\ell\ge0$, the branchwise comparison is
\begin{equation}
 \begin{aligned}
 \E[\mathrm{gain}_\ell\mid\mathcal O_\ell]\ \ge\ &
 \E[A_\ell^{\rm cert}\mid\mathcal O_\ell]-I_\ell-\chi_\ell\\
 &-\frac12\left(\|m_\ell-\hat a_\ell\|_{\bar\Xi_\ell^{-1}}
                       +\sqrt{2\Upsilon_\ell}\right)^2.
 \end{aligned}\label{eq:localgap}
\end{equation}
Appendix~\ref{app:proofs} derives this inequality under its stated conditioning and convexity assumptions. It does not identify $\sum_\ell A_\ell^{\rm cert}$ with the global $U_{\rm cert}$, and a comparison spanning the nonconvex union of window branches needs separate branch-selection accounting. On real data, the conditional posterior is estimated by cross-fitting, so reported frontier shares are plug-in diagnostics rather than certified fractions of the unknown Bayes frontier.

\subsection{When internal intervention helps, and how certificates compose}\label{sec:separation}
\begin{proposition}[Separation under explicit information restrictions]\label{prop:separation}
(a) Every RTT output can be represented by some floor-feasible output rule. There is no unrestricted superiority over all such rules.

\noindent(b) An inert row rule has the form $\phi(\pr_i,\pu_i,h_i)$ with $\phi(p,p,h)=p$. Mixing, switching, and floor projection of the raw update are inert. When two raw internal effects cancel so that $\pu_i=\pr_i$, these rules cannot change that row. There exist feasible instances where admitting only one internal effect instead gains \aSepGain\ nats.

\noindent(c) A corrector restricted to incumbent terminal states and proposal rows within $k$ hops cannot respond differently to two instances that are identical in those inputs but differ only in unseen evidence. An RTT intervention can transmit such evidence through the incumbent tail when the transport supports the path and $T-\ell-1\ge\mathrm{dist}_i$. Zero conditional gain for the restricted corrector additionally requires that the incumbent is already Bayes-optimal given that corrector's information.
\end{proposition}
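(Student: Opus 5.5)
Each part is handled separately. For (a), Theorem~\ref{thm:served}(i) gives the ingredients. On release, the served row $\ps_i=p^\pi_i$ satisfies $\Dinf(\pr_i\|\ps_i)\le\Hrow$, so $\ps_i\in\Fl_{h_i}(\pr_i)$ with $h_i=\Dinf(\pr_i\|\ps_i)$ and $\sum_iw_ih_i\le H^+$. On fallback, $\ps=\pr$ lies in every floor set. Using the decomposition~\eqref{eq:floor}, we can write $\ps_i=e^{-h_i}\pr_i+(1-e^{-h_i})q_i$ with $q_i=(\ps_i-e^{-h_i}\pr_i)/(1-e^{-h_i})\in\Delta$; when $h_i=0$ we have $\ps_i=\pr_i$ and take $q_i=\pr_i$. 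The output rule that emits exactly this mixture, with $q_i$ treated as an arbitrary (oracle) measurable input, reproduces RTT and is floor-feasible. Hence the class of all floor-feasible output rules contains RTT, and no internal method can strictly dominate that whole class. The point is only that the rule class is unrestricted.

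For (b), first check inertness directly. Mixing $(1-\varepsilon)\pr_i+\varepsilon\pu_i$, switching between $\pr_i$ and $\pu_i$, and the floor projection~\eqref{eq:projection} of $q=\pu_i$ all return $p$ when $\pu_i=\pr_i=p$. For the projection this holds because $p\in\Fl_h(p)$ and the KL divergence to itself is zero, so the projection is $p$ itself. Therefore, in any instance where the raw internal effects cancel, every inert rule serves $\pr_i$ on that row and has zero gain. For the separation instance, I would build a small explicit example. It uses a two-depth linear (APPNP-like) tail, or simply $P=I$ with a linear head and binary classes. Two candidate displacements $v_a$ at depth $\ell$ and $v_b$ at depth $\ell'$ map to terminal logit changes $+\delta$ and $-\delta$, so together they give $\pu_i=\pr_i$. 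Admitting only $v_a$ moves the logit toward a class with $R_i>1$. Choose $\pB$, $\pr$ and $\delta$ (e.g.\ $\delta=\aSepDelta$ with $\Hrow,H^+\ge\aSepH$) so that the exact charge $C^+$ from~\eqref{eq:cplus} fits the budget. Then compute the gain $\mathrm{KL}(\pB_i\|\pr_i)-\mathrm{KL}(\pB_i\|p_i^a)$, which by Proposition~\ref{prop:price}(i) equals the stated \aSepGain\ nats. The main obstacle is only the arithmetic of choosing parameters that reproduce the quoted constants; the existence claim is robust to these choices.

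For (c), use an indistinguishability argument. Build two instances $\mathcal I_1,\mathcal I_2$ that share the incumbent terminal states on the $k$-hop neighbourhood of $i$ and the proposal rows there, but differ in a feature at a node $j$ with $k<\mathrm{dist}(i,j)\le T-\ell-1$. Let the label law $\pB_i$ differ accordingly, favouring a different class in each instance. Any restricted corrector is a function of identical inputs, so it outputs the same $\ps_i$ in both. An RTT step at depth $\ell$ placed at $j$ propagates through $T-\ell-1$ applications of $P$; when the transport has a nonzero path weight, it reaches $i$ with opposite-signed logit effects in the two instances. This yields positive gain in both instances, whereas the common corrector output cannot be improving for both. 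One can argue this by a sign/convexity argument on KL, or by averaging over the two instances. Finally, the last sentence of (c) follows from Proposition~\ref{prop:price}(i) applied with $\mathcal O$ replaced by the corrector's information: the corrector's conditional gain is zero exactly when $\pr_i$ equals the Bayes posterior given that information, which is an additional hypothesis rather than a consequence of the restriction.
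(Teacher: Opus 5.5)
Your decomposition matches the paper's. Part (a) is the floor-set identity of Proposition~\ref{prop:price}(iv) applied to outputs that Theorem~\ref{thm:served}(i) places in the floor set. Part (b) is an explicit cancellation instance; your depth-wise cancellation with $P=I$ is a valid alternative to the paper's spatial one, where $\pm\delta$ at nodes 1 and 3 of a five-node path with self-loops cancel at node 2 after one averaging step. Part (c) is indistinguishability plus Proposition~\ref{prop:price}(i).

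In (b), however, the quantitative claim is asserted rather than derived: you never fix $\pr_i$, $\pB_i$ or the admitted terminal logit change. The paper's constants come from incumbent margin $-1$, $\pB_i=(0.5,0.5)$, and a net margin change $\delta/3=0.5$ with $\delta=\aSepDelta$. The class-0 probability then falls from $0.731$ to $0.622$, so the one-sided cost is $\log(0.731/0.622)\approx\aSepH$ and the gain is $\tfrac12[\log(0.378/0.269)+\log(0.622/0.731)]\approx\aSepGain$. If the admitted effect instead moves the terminal logit by $\delta=\aSepDelta$ itself from that margin, the gain is the same (the $\pB=0.5$ loss is symmetric in the margin), but the cost is about $0.66$ nats. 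So your pairing of $\delta=\aSepDelta$ with budgets $\ge\aSepH$ does not work with these values; in your $P=I$ instance, take terminal effects $\pm0.5$.

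In (c), the sentence ``the common corrector output cannot be improving for both'' is false for the construction you describe. Posteriors that merely favour different classes still allow a common improvement. With $\pr_i=(0.99,0.01)$ and $\pB_i\in\{(0.6,0.4),(0.4,0.6)\}$, the output $(0.95,0.05)$ costs $\log(0.99/0.95)\approx0.04$ nats and has positive gain in both instances. Averaging rules this out only when the weighted average of the instance posteriors equals $\pr_i$. The gain $\sum_c\pB_{ic}\log(p_c/\pr_{ic})$ is linear in $\pB_i$, so it then averages to $-\mathrm{KL}(\pr_i\|p)\le0$. That condition is exactly the Bayes-optimality hypothesis of the last sentence. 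Either build it into the construction, as the paper does, or drop the claim; as written it contradicts your own final remark.

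Two smaller points in (c). The ``only if'' half of your ``exactly when'' needs a further step, e.g.\ Theorem~\ref{thm:frontier}(i) with $G_i'(0^+)=R_i-1>0$; Proposition~\ref{prop:price}(i) alone gives only the ``if'' half. Also, say where the evidence lives. If it were an input feature at $j$, the incumbent's own $T>\mathrm{dist}(i,j)$ transitions would generically carry it into $H^{\rm r}_{T,i}$, and the two instances would no longer agree on the corrector's inputs. The paper places the evidence only in the proposal's candidate row at the source of a directed chain, which the incumbent never reads.
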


These are information-restricted separations, not claims that a wider corrector or a different information set can never match RTT. The cancellation example in Figure~\ref{fig:sepinst} is exact; the empirical near-cancellation group in Section~\ref{sec:experiments} is defined by a numerical tolerance and is interpreted accordingly. The hop condition counts transitions after $v_\ell$ is injected into $H_{\ell+1}$, which avoids crediting the intervention with a nonexistent extra propagation step.

RTT and output correction are also complementary. For any intermediate prediction $p'$, $\Dinf(\pr_i\|p_i'')\le\Dinf(\pr_i\|p_i')+\Dinf(p_i'\|p_i'')$. Therefore, after an adapted output passes certification, a corrector can use the certified row allowance $\Hrow-\Gamma_\Delta r_{T,i}$ and remaining call allowance $H^+-\sum_\ell\mathrm{charge}_\ell$ relative to that output. After whole-call fallback, the served output is $\pr$ and its spent budgets are zero; a subsequent corrector must use allowances computed from that served state, not from the rejected trajectory. The original deterministic floor and call budget are preserved (Remark~\ref{rem:compose}). Statistical performance and coverage claims for the composed policy must nevertheless be established for that policy, not inherited automatically from RTT alone.

\subsection{Sequential calibration and deployment-level assurance}\label{sec:assurance}
Slope calibration proceeds from early to late depths. At depth $\ell$, fresh episodes are executed with all upstream radii fixed, and the joint physical-coordinate Mahalanobis error $s_\ell=\|\hat\Sigma_{x,\ell}^{-1/2}(a_{x,\ell}-\hat a_{x,\ell})\|$ is recorded. The covariance used for inversion must be positive definite on the nonzero physical span. A rank-zero step has score zero; an infinite radius permits only directions with zero uncertainty norm, including the zero step. The radius is the $\lceil(1-\delta_\ell)(n_r+1)\rceil$-th order statistic, interpreted as $+\infty$ if that index exceeds $n_r$. On $s_\ell\le\kappa_\ell$, Cauchy--Schwarz yields the robust linear bound in~\eqref{eq:robust}. A union bound over $\sum_\ell\delta_\ell=\delta_{\rm cov}$ gives the fixed-member trajectory statement. This guarantee is marginal over the calibration sample and an exchangeable test episode, not conditional coverage for every realized calibration set.

Policy selection is a separate operation. The reported protocol compares several calibrated members before freezing one for a fresh audit. The deterministic certificate remains valid after selection. The nominal pre-audit coverage bound for the selected member, however, requires a simultaneous construction over selectable members or fresh calibration after selection. We therefore distinguish the fixed-member calibration theorem from the independent final-audit guarantees rather than using one as a substitute for the other.

For a policy frozen before $N$ independent episodes, the regression indicator is Bernoulli and a one-sided Clopper--Pearson bound is
\begin{equation}
 u=\mathrm{Beta}^{-1}(1-\delta;N_\zeta+1,N-N_\zeta),
 \label{eq:cp}
\end{equation}
with $u=1$ when $N_\zeta=N$. Theorem~\ref{thm:assurance} combines this with an intersection--union release test, fixed-sequence selection, and a betting lower confidence bound on mean gain. The betting construction needs only the deterministic upper bound $d^{\rm own}\le H^+$ and is anytime-valid under its sampling assumptions. Simultaneous component bounds support explicitly specified reweightings, while Theorem~\ref{thm:node} uses a reserved finite population and uniform label inspection to bound node-level negative flips. These statements concern different units and should not be interchanged.

Finally, a KL-robustness curve is conditional on the new deployment law belonging to the stated divergence ball. Even the exact divergence of label-free score distributions can only lower-bound the full episode-law divergence; its empirical estimate cannot establish membership in that ball. Accordingly, the temporal transfer cohort is audited directly. The full sampling, selection, multiplicity, and event-count protocol is given in Appendix~\ref{app:stats}.
\FloatBarrier

\section{Experimental Results}\label{sec:experiments}
The experiments evaluate whether RTT's certified internal decisions provide useful gain beyond the strongest matched output-space alternative, and whether the gain occurs in the regimes identified by the analysis. We organize the evidence around the primary comparison, mechanism-level diagnostics, deployment cost, and transfer across proposals and tasks. The tables and figures distinguish exact certificates, audited probabilities, and posterior-based oracle estimates. The main audit and the two additional conference-method comparisons in Section~\ref{sec:recentmethods} are reported separately because their evaluation histories differ.

\subsection{Protocol, baselines, and evaluation units}\label{sec:protocol}
The arxiv incumbent achieves \vIncAcc\% accuracy on the official test split. Its temperature is fitted using held-out 2018 labels and then frozen. Training uses labels through 2017; the 2019 cohort is divided into development and final-audit halves. The final audit contains 2,048 calls from an equal-weight mixture of clean, degree-shift, real-cache, feature-noise, and edge-dropout populations, with a separate 4,096-call audit for each component \citep{ogb,good,gas}. Severe stress, which rewrites 30\% of rows at depth $T/2$, is audited separately and excluded from the headline mixture. A reserved 2020 cohort supplies the temporal-transfer evaluation.

The registered configuration has $\Hrow=1$ nat, $p^\star=2\%$, $\delta=0.05$, $K=8$, $b=4$, and 4,096 scored nodes per call. The selected member uses $H^+=25\zeta=\vSelHp$ and nominal fixed-member calibration allocation $\delta_{\rm cov}=\vSelDcov$. The primary comparison and eight Holm-corrected secondaries are specified before the final audit. Confidence intervals are episode-level and paired when methods share episodes; the node-population bootstrap is reported separately rather than folded into the betting guarantee (Appendix~\ref{app:stats}).

The strongest one-pass corrector runs the incumbent once and uses its terminal states, predictions, and proposal candidate rows. It receives the same training labels, head capacity, and number of gradient steps as RTT, and projects its predicted distribution onto the same floor using allocated row budgets. Its two-hop field is selected on development calls among two, four, and eight hops. Thus it shares G1--G3 with RTT and is the relevant low-cost baseline, not an unprotected raw update. Additional comparisons include exact-geometry output rules requiring two rollouts, a matched trust rule (TR) using the same candidate hull and checker, training-time anti-regression methods, and direct test-time adaptation. Table~\ref{tab:frontierfull} provides the complete comparison set.

\subsection{Primary comparison and the probability-floor frontier}\label{sec:primary}
Table~\ref{tab:frontier} reports a mean gain of \vRttI\ \vRttCI\ in units of $10^{-3}$ nats for RTT, with betting lower bound \vRttLB, call-level regression-rate upper bound \vRttU, and negative-flip-rate upper bound \vRttNfrb\ on the uninspected reserved-node population. The strongest corrector obtains \vOpcI\ at a call-level bound of \vOpcU. The primary paired difference is \vPrimD\ \vPrimCI, while the corrector is cheaper: \vLatOpc$\times$ versus \vLatOwn$\times$ incumbent latency. The raw update obtains 3.9 but has a regression-rate upper bound of \vRawU, illustrating why mean gain alone is insufficient. The reported secondary comparisons survive Holm correction (Table~\ref{tab:frontierfull}).

\begin{table}[t]\centering
\caption{Primary arxiv comparison on 2,048 final-audit mixture episodes; gain and its intervals are in $10^{-3}$ nats. $\Delta$ is RTT minus the row method; $\dagger$ marks the primary comparison. $N_\zeta$ is the regression count, $u$ the one-sided call-level Clopper--Pearson bound, $u_{\rm node}$ the reserved-node bound, and LB the betting lower bound on mean gain. Latency and FLOPs are relative to one incumbent pass. The oracle row is a cross-fitted posterior estimate, not a deployable method or a certified true-Bayes optimum.}\label{tab:frontier}
\TabFrontier
\end{table}

Figure~\ref{fig:floor} connects the geometry to this comparison. Its first two panels illustrate the exact row-floor projection and water-filling solution, while the third reports the budget sweep. RTT attains \vShareUout\ of the estimated output-space frontier on the mixture. This share depends on the cross-fitted posterior used to estimate the label law; it is not a guarantee of recovering that fraction of the unknown Bayes-optimal gain. Adaptive mixing and raw-output floor projection obtain \vMixI\ and \vFloI, respectively, and both require two rollouts. Matched TR obtains 5.8, leaving a \vMtrD\ gain increment for RTT's reference-tail optimization at the same hull and certificate.

\begin{figure}[t]\centering
\includegraphics[width=\linewidth]{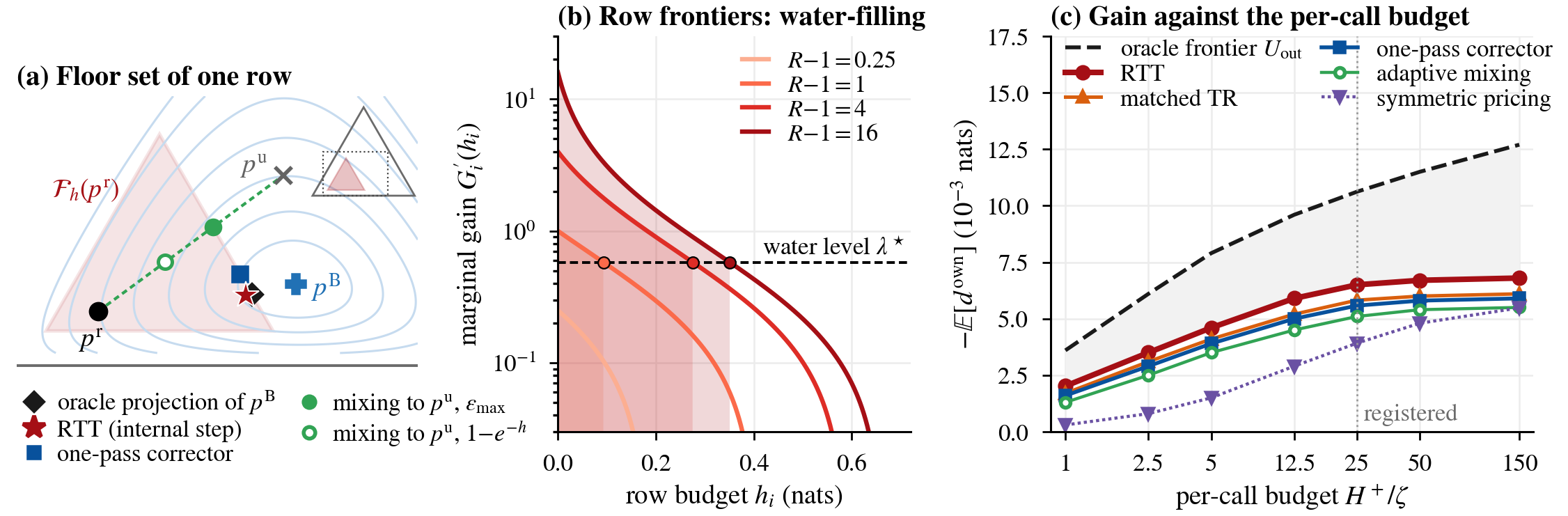}
\caption{Floor geometry and its empirical frontier. (a) The row floor at $h=\aDemoH$: the Bayes-posterior projection gains \aDemoGorc\ nats; mixing toward the raw update gains \aDemoGmix\ at its largest feasible weight and \aDemoGmixu\ at the universal weight $1-e^{-h}$. The RTT point is schematic, not an additional measured result. (b) Water-filling equalizes marginal gains across unsaturated rows. (c) The arxiv budget sweep compares RTT, one-pass correction, two-rollout rules, matched TR, and symmetric pricing; the dashed frontier uses the cross-fitted posterior.}\label{fig:floor}
\end{figure}

\subsection{Where the gain comes from and what each mechanism contributes}\label{sec:mechanisms}
The row-group analysis in Figure~\ref{fig:where}(d) tests the separation mechanisms rather than only the aggregate score. Near-cancellation rows make up \vSepShareCancel\ of the audit mixture: the raw and incumbent outputs differ by less than $10^{-3}$ in $\ell_1$ distance despite internal candidate movement. RTT contributes \vSepRttCancel\ to mixture gain on these rows, while the strongest corrector contributes at most \vSepOpcCancel; inert rules have negligible reported gain. The empirical tolerance is not the exact equality assumed by Proposition~\ref{prop:separation}(b). On the \vSepShareFar\ of rows whose tangent influence is predominantly beyond two hops, RTT contributes \vSepRttFar\ against the corrector's \vSepOpcFar. On the remaining rows, the corrector leads, \vSepOpcOther\ versus \vSepRttOther. Four- and eight-hop correctors obtain 5.3 and 5.0 overall. These results support a regime-dependent internal advantage rather than universal superiority. Applying the corrector to RTT's unused certified budget increases the mean to \vRttcI.

\begin{figure}[t]\centering
\includegraphics[width=\linewidth]{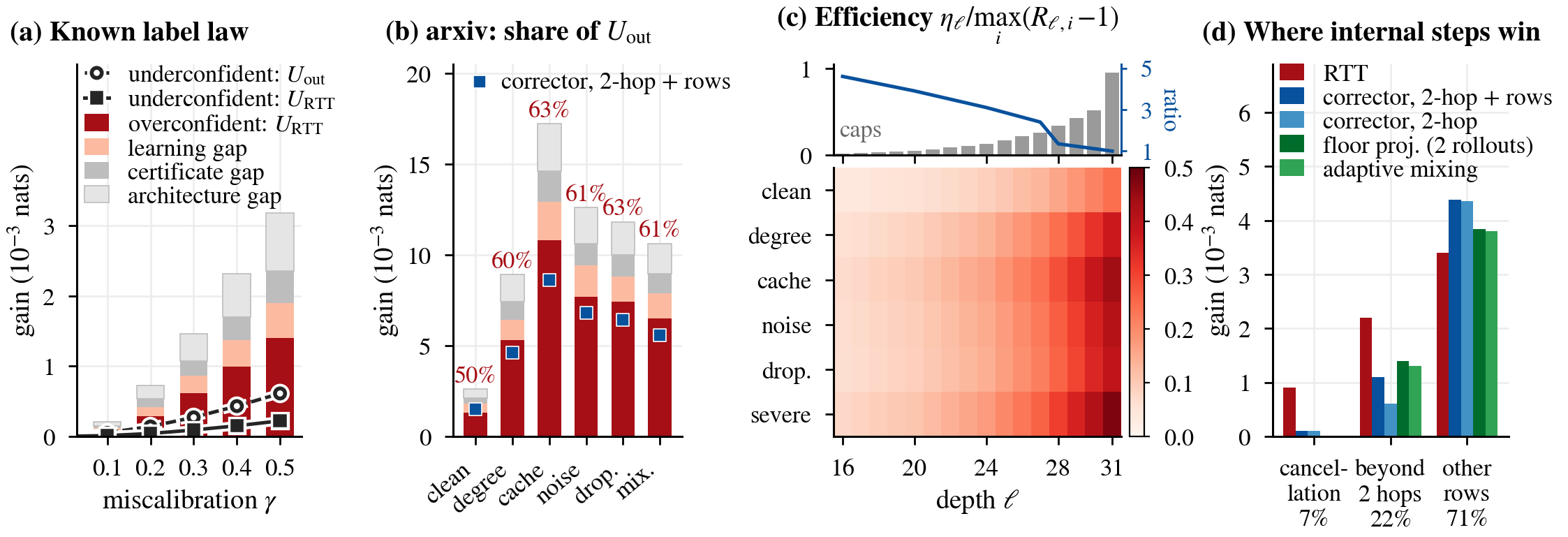}
\caption{Mechanism diagnostics. (a) Planted partitions with a known label law: the analytic output frontier and the reported numerical reachability, certification, and policy levels. The numerical internal levels are optimization diagnostics, not proofs that their global suprema were attained. (b) Posterior-based frontier shares on arxiv. (c) Architecture-constrained efficiency by depth, together with cap placement and certified-to-exact price ratios. (d) Contributions from near-cancellation, beyond-two-hop, and remaining rows. Appendix~\ref{app:gap} defines the groups and the limitations of the numerical gap calculation.}\label{fig:where}
\end{figure}

With the known planted label law, the calibrated incumbent admits no conditional gain, as Proposition~\ref{prop:price}(i) predicts. At miscalibration $\gamma=0.3$, RTT reaches \vPPshare\ of the output-space frontier. The reported numerical ordering suggests that architectural restriction is the largest source of shortfall, followed by the learned decision rule; it does not establish an exact additive decomposition of globally optimized gaps. On arxiv, estimated shares range from \vShareClean\ on clean calls to \vShareCache\ under the real-cache shift. Efficiency concentrates at late depths, and the median certified-to-exact price ratio reaches \vCertRatioMax\ at earlier contract depths. This identifies certificate conservatism as a constraint on useful early intervention.

The one-at-a-time mechanism swaps are collected in Table~\ref{tab:swaps}. One-sided rather than symmetric pricing changes the reported gain from 3.92 to 6.50 at the same 0.05-nat budget; interval rather than global factors changes it from 4.12 to 6.50. Replacing spend-based caps by certified-efficiency placement adds approximately \vPlaceD, while the row floor costs approximately \vHrowCost\ of gain. Sequential rather than independent per-depth radii costs approximately \vTrajCost\ and supplies the fixed-member trajectory guarantee, subject to the selection distinction in Section~\ref{sec:assurance}. The smaller effects of two-branch optimization, support-function features, and full-loss slope targets are also reported. These swaps are conditional comparisons around one configuration, not additive contributions to a unique total gain.

\subsection{Call size, reliability, and computational cost}\label{sec:deploymentresults}
Figure~\ref{fig:deploy} presents the deployment trade-off. The probability floor is row-specific and independent of call size, but the distribution of average call damage is not. On the development half, the registered member's regression-rate bound increases from \vBatchUlarge\ at 4,096 nodes to \vBatchUsmall\ at 64 nodes. Reselecting a member for 64-node calls restores \vBatchUresel\ with gain \vBatchGresel. At $H^+=\zeta$, the call-level regression event is ruled out deterministically and RTT still gains \vRttzI. The separate node-level audit remains necessary because absence of tolerance exceedance is not absence of negative classification flips.

\begin{figure}[t]\centering\resizebox{\linewidth}{!}{\FigDeploy}
\caption{Deployment trade-offs. (a--b) Final-audit mean gain against the call-level regression bound and relative latency. Equal-bound markers are offset for visibility; Table~\ref{tab:frontier} gives exact values. The tolerance point is a deterministic statement. (c--d) Development-set effects of call size and row budget, including members reselected per call size and observed fallback rates.}\label{fig:deploy}
\end{figure}

The final-audit release ledger contains 2,035 first-pass releases, six releases after tighter certification, and seven whole-call fallbacks out of 2,048 calls. Thus \vFailRate\ fail the first pass and \vFbRate\ ultimately fall back (Table~\ref{tab:audit}). The cost comparison includes proposal computation, heads, short reference tails, solver, provisional bounds, checker, and fallback under one accounting convention. For the native in-network proposal, RTT costs \vLatOwn$\times$ incumbent latency and 1.36$\times$ its FLOPs, versus 2.18$\times$ for two sequential rollouts. Two rollouts on two devices cost 1.18$\times$ latency but use twice the hardware. For full-network proposals RTT can be slower than two sequential rollouts: with the retrained incumbent, the reported values are 2.24$\times$ and 2.00$\times$. Appendix~\ref{app:cost} provides the full accounting and numerical-checker diagnostics; no universal speed advantage is claimed.

\subsection{Proposal breadth, other tasks, and transfer}\label{sec:breadth}
Across the eight previously evaluated proposals, RTT exceeds the matched one-pass corrector by $0.5$--$0.9$ in units of $10^{-3}$ nats (Table~\ref{tab:class}). The set covers GPR-GNN, Co-GNN, AMP, GTrans, Matcha, retraining, and a LoRA-16 update, in addition to the native proposal. TSA and STEM extend this comparison set under the separate history-aware protocol of Section~\ref{sec:recentmethods}; they are not pooled into the eight-proposal main audit. The appropriate comparison includes the proposal's own cost once; it is not legitimate to omit a full adaptation network when comparing RTT with an output rule.

\begin{table}[t]\centering
\caption{Proposal breadth on arxiv. Gain is in $10^{-3}$ nats and $u$ is the call-level regression-rate upper bound. Raw proposals are norm-trimmed but not admitted by RTT. The corrector receives each proposal's rows; paired intervals compare RTT with that corrector. Latencies include the proposal and are relative to one incumbent pass.}\label{tab:class}
\TabClass
\end{table}

Training-time and test-time neighbors answer a different protection question. Weight interpolation and LoRA scaling selected by learn-then-test obtain 3.4 and 3.2 at one-network cost, while EATA and periodic reset obtain 2.6 and 2.2; the reported configurations fail the statistical release criterion and do not enforce G1--G3 (Table~\ref{tab:neighbors}). In the separate batch-shift stream, raw GTrans has mean damage $+38.0\times10^{-3}$ nats per shifted call, whereas RTT-gated serving reduces it to $+0.4\times10^{-3}$ with 6.1\% fallback. A positive residual damage is consistent with the stated budget and is not described as a no-regression result.

Table~\ref{tab:breadth} extends the evaluation to APPNP, GCNII, GRAND, and shallow GraphSAGE/GCN incumbents; GOOD-Arxiv degree and time shifts; Twitch, Facebook-100, and Elliptic shifts; ogbn-products; ImageNet-C; and Folktables \citep{good,eerm,twitch,fb100,elliptic,ogb,imagenetc,folktables}. On the graph configurations, RTT reaches 60\%--73\% of the estimated oracle frontier, with call-level bounds up to 1.30\%. Its lead on shallow incumbents shrinks to about $0.2\times10^{-3}$ nats, consistent with a shorter continuation offering less propagation advantage. On ogbg-molpcba, the custom reserved-scaffold protocol improves the deployed GIN-style incumbent from 27.7 to 28.6 AP; the separately audited official test result is 28.5 AP (Table~\ref{tab:mol}; \citealp{xu2019gin}). On ImageNet-C, adapting the last four ResNet-50 blocks with Tent proposals increases top-1 accuracy from 40.1\% to 45.6\% under the floor (Table~\ref{tab:breadth}; \citealp{resnet,tent}).

The reserved temporal cohort is evaluated directly rather than certified from a label-free shift estimate. Its call-level bound is 1.18\%, with a betting lower bound of $1.7\times10^{-3}$ nats on mean gain (Table~\ref{tab:stats}). Figure~\ref{fig:statistics} separates pointwise, simultaneous, node-level, and conditional-divergence results. Across three incumbents and eight training seeds each, the reported selection procedure passes 22 of 24 times for RTT and 24 of 24 for the corrector; the failed selections revert to the incumbent (Figure~\ref{fig:robust}). Separate drift streams evaluate monitoring and recalibration (Figure~\ref{fig:service}), while Figure~\ref{fig:mechanism} compares certificate tightness and verifier cost. Together, these experiments delimit where RTT's extra computation yields useful certified gain and where simpler correction is competitive.
\FloatBarrier

\subsection{Two recent conference methods: matched TSA/STEM evaluation}\label{sec:recentmethods}
\textbf{Comparison question.} We evaluate two additional proposals: TSA \citep{tsa} and STEM \citep{stem2026}. The comparison tests whether RTT retains useful structural and history-dependent graph adaptation while enforcing the same incumbent-relative protection. All values in this comparison refer to the common evaluation law specified here, rather than to leaderboard results from the cited papers.

\textbf{Complementary mechanisms.} TSA addresses structural distribution shift through uncertainty-aware neighborhood weighting, signal-to-noise-based self--neighbor balancing, and boundary refinement. It tests whether useful structure-aware changes survive the floor constraint. STEM generates domain-specific adapter parameters for a frozen base GNN using a recurrent state-space controller and regularizes parameter evolution. It tests whether useful history-dependent updates can be admitted without treating the controller's stability objective as an incumbent-relative, per-call certificate. These are distinct mechanisms for producing proposals; the same RTT certificate evaluates both.

\textbf{Native fidelity and matched admission.} Each method's documented native configuration uses a development-selected variant and retains its full output path. The matched comparison uses the same incumbent, label set, calls, and proposal features for the full native output, its floor projection, the exported-state proposal without RTT, matched TR, the strongest matched one-pass corrector, and RTT using that proposal. Head capacity, fitting labels, optimization effort, row-budget allocation, and access to proposal/history information are matched between the learned admission alternatives. The native adaptation procedure receives no deployment labels. The controlled extension uses the registered arxiv population and budgets; a compatible GCN/GraphSAGE replication checks that a deep-diffusion port has not created an artificial advantage. The replication specification records ports, omitted native components, and export discrepancies. Native task accuracy/AUC/F1 and negative flips are distinct evaluation quantities and are not inferred from cross-entropy gain.

\begin{table}[!htbp]\centering
\caption{TSA/STEM mean gains in $10^{-3}$ nats per scored call under the common evaluation law specified here. These are results of the present comparison, not results quoted from the cited papers. ``Native'' retains the complete method; ``floor'' projects its output onto RTT's budgets. TR, corrector, and RTT receive the same exported proposal. $\Delta$ is RTT minus the matched corrector. Native gain can exceed certified gain: protection is not claimed to be free.}\label{tab:recentgain}
\small\setlength{\tabcolsep}{5pt}
\begin{tabular*}{\linewidth}{@{\extracolsep{\fill}}lrrrrrr@{}}\toprule
Proposal & \shortstack{Native\\full} & \shortstack{Native\\+ floor} & \shortstack{Matched\\TR} & \shortstack{Matched\\corrector} & \shortstack{RTT\\derived} & $\Delta$\\\midrule
TSA & \pTsaNative & \pTsaFloor & \pTsaTR & \pTsaCorr & \pTsaRTT & \pTsaDelta\\
STEM & \pStemNative & \pStemFloor & \pStemTR & \pStemCorr & \pStemRTT & \pStemDelta\\\bottomrule
\end{tabular*}
\end{table}

\textbf{Independent histories versus continual streams.} To preserve a call-level audit for the stateful proposal, each independent episode starts from the same frozen source initialization, processes a predeclared unlabeled context prefix, and scores one subsequent call. Prefixes are independently drawn under a fixed generator and supplied to every comparator; prefix length and all reset rules are frozen before audit. The same setting is used for TSA. Its numerical outcomes are therefore not pooled with the earlier no-prefix audit. A separate native continual evaluation carries STEM's state across domains and measures retention and adaptation on fixed domain orders. Its correlated calls are not treated as independent binomial trials: resampling and comparisons use independent streams as clusters, with no borrowed call-level Clopper--Pearson claim. RTT's deterministic certificate remains call-wise valid in either mode when its checker is sound.

\textbf{Calibration and inference.} Adapters and heads use the existing fitting split, variants are selected on development data, and each frozen extended policy uses selection-valid calibration. The two paired gain comparisons against the matched corrector form a separate two-test Holm family, specified before the extension audit. The main audit's eight-secondary-comparison claim is unchanged and does not include these additions. Reused cohorts constitute reanalysis; a new population-generalization claim requires an untouched cohort. Fresh episode draws can audit a fixed conditional generator, but do not erase earlier access to its underlying graph or labels. For this comparison, Table~\ref{tab:recentaudit} reports event counts and the call-level bound $u$; we do not report confidence intervals, adjusted $p$-values, betting bounds, or node-level bounds.

\begin{table}[!htbp]\centering
\caption{Event counts and pre-specified criteria for the two RTT-derived policies. Each audit contains $N=2{,}048$ independent scored episodes; $u$ is the one-sided 95\% Clopper--Pearson bound from the regression count. The lower panel lists pre-specified thresholds; the intervals, bounds, and cost measurements needed to evaluate them are not reported for this comparison.}\label{tab:recentaudit}
\small\setlength{\tabcolsep}{5pt}
\begin{tabularx}{\linewidth}{Ycc}\toprule
Quantity & TSA-derived + RTT & STEM-derived + RTT\\\midrule
Observed regressions $N_\zeta$ & \pTsaEvents & \pStemEvents\\
Pointwise call bound $u$ & \pTsaUpper & \pStemUpper\\
First-pass / re-certified releases & \pTsaRone\ / \pTsaRtwo & \pStemRone\ / \pStemRtwo\\
Whole-call fallbacks / fraction & \pTsaFallback\ / \pTsaFallbackRate & \pStemFallback\ / \pStemFallbackRate\\\midrule
Per-node negative-flip criterion & \val{$\le0.60\%$} & \val{$\le0.60\%$}\\
Adjusted paired gain criterion & \val{lower CI endpoint $>0$} & \val{lower CI endpoint $>0$}\\
Additional RTT machinery cost criterion & \val{$\le0.35\,t_{\rm inc}$} & \val{$\le0.35\,t_{\rm inc}$}\\\bottomrule
\end{tabularx}
\end{table}

\textbf{Gain retention and interpretation.} Table~\ref{tab:recentgain} reports native-gain retention of \val{90.0\%} for TSA and \val{88.1\%} for STEM, while the unprotected native methods remain ahead in mean gain. These ratios compare identical populations with positive native gain; they are descriptive and do not establish statistical significance. The paired mean differences against the matched corrector are \pTsaDelta\ and \pStemDelta\ (in $10^{-3}$ nats) for TSA and STEM. Because we do not report their confidence intervals or adjusted $p$-values, we make no significance claim. Component- and seed-level comparisons, including cases where the corrector wins, and the near-cancellation and long-range diagnostics remain separate analyses: neither mechanism is presumed to explain a new gain. A native method's low regression rate is a favorable baseline, not a condition that it must fail. The comparison concerns the gain--protection--cost trade-off, not improvement at every operating point.

The cost criterion in Table~\ref{tab:recentaudit} concerns RTT's added machinery only. Total latency includes native adaptation, context processing, state export, short reference-tail evaluations, solver, checker, and expected fallback; cold-start and amortized continual costs are separate quantities. We do not report timing, FLOP, or memory measurements for this comparison. Appendix~\ref{app:recentprotocol} records the native implementations and replication protocol. These history-aware comparisons remain separate from the abstract's eight-proposal main audit and the corresponding frontier plots.
\FloatBarrier

\Needspace{23\baselineskip}
\section{Conclusion}\label{sec:conclusion}
Reference-Tail Trust turns an internal learned update into a budgeted deployment decision. By pricing each displacement through the frozen incumbent's continuation and validating the executed trajectory before release, RTT attaches a per-row probability floor and a call-level loss-increase bound to the prediction actually served. Whole-call fallback preserves this contract when certification fails. The analytical floor projection and water-filling frontier provide a common basis for evaluating internal admission and output correction, while the separation results identify the information restrictions under which an incumbent tail can make internal intervention useful.

The reported experiments show gains over a matched one-pass corrector across the studied proposal and incumbent families, with benefits concentrated in near-cancellation and long-range-evidence regimes. The TSA and STEM comparisons extend the evaluation to structural and continual adaptation under a separate history-aware protocol; their descriptive gains and event ledgers are reported separately from the main audit. The existing experiments also expose the costs: earlier interventions can be limited by conservative continuation bounds, full-network proposals can make RTT slower than two rollouts, and a short incumbent tail reduces its propagation advantage. Deterministic protection survives arbitrary proposal quality when the checker is sound; statistical regression and calibration statements remain tied to their sampling and selection assumptions. These distinctions position RTT as a controlled mechanism for using learned updates, not as an unconditional replacement for output correction or a guarantee of improvement under unaudited distribution shift.

\section*{Reproducibility and Code Availability}
\CodeAvailability\ The implementation description, release logic, primitive contracts, and training targets are given in Appendices~\ref{app:model}--\ref{app:training}; proofs are in Appendix~\ref{app:proofs}; the audit protocol and numerical diagnostics are in Appendices~\ref{app:stats}--\ref{app:cost}. Table~\ref{tab:factors} records parameters and measured checkpoint properties, Table~\ref{tab:costs} reports label and compute accounting, and Table~\ref{tab:notation} defines the notation. The experimental protocol fixes the pool manifest, selection order, node reservation, and seeds before the final audit; the associated per-episode records and integer event ledger are the basis for reproducing the statistical results.

\par Reproduction of the TSA/STEM extension requires pinned upstream revisions, the declared state-export adapters, independent calibration and audit manifests, and event, timing, and prediction logs. Tables~\ref{tab:recentgain}--\ref{tab:recentaudit} give the gains and event counts for this comparison; its intervals, bounds, and resource measurements are not reported.

\section*{Use of AI Tools}
We used a large language model to improve the English of this manuscript, including grammar, wording, and readability. The authors reviewed all AI-assisted text and take full responsibility for the content of this paper.

\FloatBarrier

\clearpage
\appendix
\section{Model, families, budgets and the release procedure}\label{app:model}
\textbf{Reference families.} Every incumbent is written as primitives whose tube factors and envelopes are known (Table~\ref{tab:families}). The deep tanh diffusion of Section~\ref{sec:rtt} uses interval factors and second-order window envelopes. APPNP propagates linearly after an MLP, so its tail is linear, the first-order contract is exact up to the triangle inequality across rows, the logits are affine in $t$ at every depth and the exact envelope~\eqref{eq:cplus} applies everywhere. GCNII applies $\mathrm{ReLU}(\tilde X\tilde W)$ with $\tilde W=(1-\beta)I+\beta W$ to $\tilde X=(1-\alpha)PX+\alpha H_0$; it enters through Lemma~\ref{lem:relu}, which encloses the Clarke Jacobian of \emph{every} downstream row on the tube, so kinks crossed anywhere in the tail are covered, and its global factor is $(1-\alpha)\|\tilde W\|_2$, not one. GRAND is a smooth attention-free diffusion discretized on the committed horizon. Shallow GraphSAGE and GCN incumbents have $T\le3$: their whole tail is the window and the last depth, so every charge is an envelope. For ResNet-50 the last four bottleneck blocks are the intervened depths, with $P=I$ denoting the absence of graph transport and ReLU handled by Lemma~\ref{lem:relu}; the convolution, residual, and pooling operations still belong to the continuation being enclosed; attention references are covered at first order on bounded tubes \citep{kim2021,dasoulas2021}.

\textbf{Proposals, adapters and heads.} Our proposal returns two candidates per node at every open depth: $H^{{\rm c},1}_\ell$ from a bank prior, a learned convex combination of 16 bank rows conditioned on the current state, and $H^{{\rm c},2}_\ell$ from a signed one-hop attention branch. A published module runs alongside the incumbent on the same call, and its depth-$\ell$ state $\tilde H_\ell$ is mapped into the incumbent's state space by a frozen linear adapter $\Omega_\ell$ fitted by least squares on $\le$2017 episodes, $H^{{\rm c},1}_\ell=\tilde H_\ell\Omega_\ell$, with its one-hop message as the second candidate. GTrans transforms the call's features and edges at test time and its candidates are the incumbent's states on the transformed call; the retrained incumbent and the LoRA-16 delta share the architecture, so their states map exactly. The heads are the node-weight head ($\vartheta_i\in[0,1]$, which scales a row's candidates), the field head $\hat g_\ell$, the covariance head $\hat\Sigma$ and the posterior head $\hat p^{\rm B}$, which estimates $\pB$ for the field's inputs; all read the candidates only through the support-function features below. \emph{Matched TR} (a learned trust rule) replaces the field, the robust model and the solver by a learned trust scale in $[0,1]$, computed from the same features, that scales the proposal's step inside the same candidate hull, budgets and certificate.

\begin{table}[!htbp]\centering
\caption{Primitive decomposition of every implemented incumbent family: the transport, where it is certified, how the window and the last depth are charged, and the factor used when a tube saturates.}\label{tab:families}
\TabFamilies
\end{table}

\subsection{Replication specification for the TSA/STEM comparison}\label{app:recentprotocol}
This subsection specifies the TSA/STEM replication protocol. The native TSA implementation is available at \url{https://github.com/Graph-COM/TSA}; it exposes variants with different boundary-refinement components. The native baseline retains the complete prediction path of the development-selected variant, with the variant and upstream revision recorded. The native STEM implementation is available at \url{https://github.com/miaomiao1220/stem}; its node-level pipeline carries a source-initialized controller state across target domains. Controller initialization, graph order, parameter-update/reset rules, and the source-training budget are part of the replication record. The conference references, rather than the repositories or preprints, are the bibliographic citations \citep{tsa,stem2026}.

For each method, the replication record contains the full native prediction, the exported-state prediction before and after trimming, and every accepted RTT displacement. A pre-audit manifest records node correspondence, depth mapping, adapter dimensions and fitting split, input normalization, self-loops, class order, temperature, and all classifier/output changes. Source graphs used by native initialization must stay within the fitting/development allocation; target labels are inaccessible to adaptation. Unit tests compare the reproduced native output with the upstream implementation and verify that disabling all admissions returns the exact incumbent. A port to the 32-step diffusion is labeled a port and accompanied by a compatible-backbone check; a failed fidelity check is reported rather than hidden by retuning on the audit.

The fixed-history audit freezes the complete proposal state before each scored call. Admission does not feed back into the external proposal controller in the matched comparison: all admission rules see the same candidate record. The native continual experiment is separate and preserves the method's documented evolution. Admission-dependent feedback defines another policy requiring its own calibration and audit, not an automatic extension of these statistics. For both modes, proposal runtime is accounted for separately from RTT overhead, and the accounting includes all label uses and all selected/failed seeds. The values in Table~\ref{tab:recentgain} are reporting outcomes, not stopping rules or tuning objectives.

\textbf{Interval factors.} At depth $j$ the pre-activation of row $i$ is $x_{j,i}=(PH_j)_i$ with units $z_{j,iu}=x_{j,i}W_{:,u}$, and a state in the tube moves it by at most $\varrho_{j,i}=(Pr_j)_i$.
\begin{lemma}[Interval factors]\label{lem:factors}
Let $\bar s_{j,iu}=\tanh'(\max\{0,|z_{j,iu}|-\varrho_{j,i}\|W_{:,u}\|_2\})$.
Every state in the tube satisfies $\|D\varphi\|_2\le\|W\diag(\bar s_{j,i})\|_2\le\|W\|_2$.
Write $W^\top W=U\Sigma U^\top$, with top-$k$ factors $U_k,\Sigma_k$ and $(k+1)$-st eigenvalue $\sigma_{k+1}^2$. Then
\[
 \|W\diag(\bar s)\|_2^2
 \le\lambda_{\max}\!\left(\Sigma_k^{1/2}U_k^\top\diag(\bar s^2)U_k\Sigma_k^{1/2}\right)
      +\sigma_{k+1}^2\max_u\bar s_u^2.
\]
Thus $L_{j,i}$ can be the minimum of $\|W\|_2$ and the square root of this bound. A valid curvature factor is
\[
 \mu_{j,i}=\|W\|_2^2\max_u\sup\left\{|\tanh''(y)|:
       \big||y|-|z_{j,iu}|\big|\le\varrho_{j,i}\|W_{:,u}\|_2\right\}.
\]
\end{lemma}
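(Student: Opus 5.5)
The map is $\varphi(x)=\tanh(xW)$ with $x=x_{j,i}=(PH_j)_i$. Its derivative in the row direction is $D\varphi[\delta]=\delta W\diag(\tanh'(z))$, so $\|D\varphi\|_2=\|W\diag(\tanh'(z))\|_2$ at the perturbed pre-activation $z$. I will prove the three claims in order: a per-unit monotonicity bound, a spectral refinement, and the curvature factor.

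\emph{Derivative bound.} For a state in the tube, the row input changes by at most $\varrho_{j,i}=(Pr_j)_i$. This holds because $P$ is row-stochastic and nonnegative, so the triangle inequality gives $\|(PX)_i-(PH_j)_i\|\le\sum_kP_{ik}\|X_k-H_{j,k}\|\le(Pr_j)_i$. By Cauchy--Schwarz, unit $u$ then moves by at most $\varrho_{j,i}\|W_{:,u}\|_2$. Hence the perturbed unit satisfies $|z'_u|\ge\max\{0,|z_{j,iu}|-\varrho_{j,i}\|W_{:,u}\|_2\}$. Since $\tanh'$ is even and decreasing in $|y|$, this gives $0\le\tanh'(z'_u)\le\bar s_{j,iu}\le1$.

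Write the true derivative as $W\diag(s')$ with $0\le s'\le\bar s$ entrywise. For any vector $a$, the quantity $\|W\diag(s')a\|_2^2=a^\top\diag(s')W^\top W\diag(s')a$ is not monotone in $s'$ in general. I will instead write $\diag(s')=\diag(\bar s)\diag(c)$ with $0\le c\le1$. Then $\|W\diag(s')\|_2\le\|W\diag(\bar s)\|_2\,\|\diag(c)\|_2\le\|W\diag(\bar s)\|_2$, where a zero $\bar s_u$ forces $s'_u=0$ and so causes no issue. Repeating the argument with $\diag(\bar s)$ itself as the contraction gives $\|W\diag(\bar s)\|_2\le\|W\|_2$.

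\emph{Spectral refinement.} I will use $\|W\diag(\bar s)\|_2^2=\lambda_{\max}(\diag(\bar s)W^\top W\diag(\bar s))$. Split $W^\top W=U_k\Sigma_kU_k^\top+E$ with $0\preceq E\preceq\sigma_{k+1}^2 I$. Weyl's inequality bounds the maximum eigenvalue by $\lambda_{\max}(\diag(\bar s)U_k\Sigma_kU_k^\top\diag(\bar s))+\lambda_{\max}(\diag(\bar s)E\diag(\bar s))$. For the first term, $BB^\top$ and $B^\top B$ share nonzero eigenvalues with $B=\diag(\bar s)U_k\Sigma_k^{1/2}$, which turns it into the stated $k\times k$ matrix. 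The second term is at most $\sigma_{k+1}^2\max_u\bar s_u^2$. Taking the minimum with $\|W\|_2$ then gives a valid $L_{j,i}$.

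\emph{Curvature factor.} The second derivative is $D^2\varphi[\delta,\delta]_u=\tanh''(z'_u)(\delta W_{:,u})^2$. So $\|D^2\varphi[\delta,\delta]\|_2\le\max_u|\tanh''(z'_u)|\,\|\delta W\|_2\,\|\delta W\|_\infty$, since $\sum_u(\delta W_{:,u})^4\le\|\delta W\|_\infty^2\|\delta W\|_2^2$. Both norms are at most $\|W\|_2\|\delta\|$, and $z'_u$ lies where $||z'_u|-|z_{j,iu}||\le\varrho_{j,i}\|W_{:,u}\|_2$. Because $|\tanh''|$ is even, the supremum over that set of $y$ covers it, which yields $\mu_{j,i}$.

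The main obstacle I expect is the first step, because entrywise domination of the diagonal scaling does not by itself give operator-norm domination. The factorization $s'=\bar s\odot c$ is what resolves it, so I will write that step carefully and check the degenerate case $\bar s_u=0$.
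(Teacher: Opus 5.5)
Your proof is correct and follows the paper's route. The paper's ingredients are the evenness and monotonicity of $\tanh'$ in $|y|$, the entrywise monotonicity of $\|W\diag(s)\|_2$ for $s\ge0$ (which your factorization $\diag(s')=\diag(\bar s)\diag(c)$ proves cleanly where the paper only asserts it), and Weyl's inequality on the rank-$k$ split. Your curvature-factor argument via $\sum_u(\delta W_{:,u})^4\le\|\delta W\|_\infty^2\|\delta W\|_2^2$ fills in a step the paper leaves implicit, and the degenerate case $\bar s_u=0$ you plan to check cannot arise, since $\tanh'>0$ everywhere.
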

\begin{lemma}[Piecewise-linear transports]\label{lem:relu}
Let $\varphi(x)=\mathrm{ReLU}(x\tilde W)$ and let the tube move the pre-activation of unit $u$ of row $i$ by at most $\varrho_{j,i}\|\tilde W_{:,u}\|_2$. Let $\bar s_{j,iu}=0$ if $z_{j,iu}+\varrho_{j,i}\|\tilde W_{:,u}\|_2<0$ and $\bar s_{j,iu}=1$ otherwise. Then every Clarke generalized Jacobian of $\varphi$ at every state of the tube satisfies $\|\partial\varphi\|_2\le\|\tilde W\diag(\bar s_{j,i})\|_2\le\|\tilde W\|_2$, the first-order contract and the tube hold with $L_{j,i}=\|\tilde W\diag(\bar s_{j,i})\|_2$ for every row, downstream rows included, and the terminal-logit change lies in an interval enclosure of $\overline{\mathrm{conv}}\{J v:J\in\mathcal J_\ell\}$, where $\mathcal J_\ell$ covers generalized Jacobians of the entire downstream tail on the validated segments. This first-order enclosure replaces a smooth second-order window bound when activation boundaries can be crossed.
\end{lemma}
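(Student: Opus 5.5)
The plan is to reduce every claim to a per-unit statement about the Clarke generalized Jacobian of a single ReLU layer, and then propagate it through the recursions~\eqref{eq:contracts} exactly as in the smooth case of Lemma~\ref{lem:factors}.

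\textbf{Step 1: one layer.} Fix row $i$ at depth $j$ and a tube state $X$ with $\|X_i-H_{j,i}\|\le r_{j,i}$. The pre-activation $x_i=(PX)_i$ differs from $(PH_j)_i$ by at most $\varrho_{j,i}=(Pr_j)_i$, since $P$ is row-stochastic and the triangle inequality applies. By Cauchy--Schwarz, unit $u$ then satisfies $|x_iW_{:,u}-z_{j,iu}|\le\varrho_{j,i}\|\tilde W_{:,u}\|_2$. The Clarke Jacobian of $\varphi$ at $x_i$ is the convex hull of limits of $\tilde W\diag(s)$, where each $s_u\in[0,1]$ lies in the Clarke subdifferential of ReLU at the unit's pre-activation.

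If $z_{j,iu}+\varrho_{j,i}\|\tilde W_{:,u}\|_2<0$, the pre-activation is strictly negative on a neighborhood of the whole interval, so $s_u=0$. Otherwise $s_u\in[0,1]=[0,\bar s_{j,iu}]$. Hence every element has the form $\tilde W\diag(s)$ with $0\le s\le\bar s$ entrywise.

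The norm bound $\|\tilde W\diag(s)\|_2\le\|\tilde W\diag(\bar s)\|_2$ follows by writing $\diag(s)=\diag(\bar s)\diag(s')$ with $0\le s'\le1$. This is valid because $\bar s\in\{0,1\}$, and $\|\diag(s')\|_2\le1$. Convex hulls preserve the bound because the norm is convex. Finally, $\|\tilde W\diag(\bar s)\|_2\le\|\tilde W\|_2$ since $\|\diag(\bar s)\|_2\le1$.

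\textbf{Step 2: contracts and tube.} Lipschitz continuity of ReLU gives the Clarke mean-value inclusion on segments. For any two points in the tube, $\varphi(a)-\varphi(b)\in\overline{\mathrm{conv}}\,\partial\varphi([a,b])(a-b)$; both endpoints lie in the row's convex tube ball, so the whole segment does too. This replaces Taylor's theorem in the proofs of Lemma~\ref{lem:tube} and Proposition~\ref{prop:contracts}. The forward tube recursion then holds with $L_{j,i}$ in place of the smooth factor, and the backward $\Lambda$ recursion follows by the chain rule for Clarke Jacobians, which holds as an inclusion for Lipschitz compositions. Because the interval determination is made for every row from its own tube radius, downstream rows are covered, including rows whose kinks are crossed only along counterfactual continuations.

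\textbf{Step 3: enclosure of the terminal logit change.} Apply the mean-value inclusion to the entire tail $Z\circ F^{\rm r}_{T-1}\circ\cdots\circ F^{\rm r}_{\ell+1}$ along the validated segment $\bar H_{\ell+1}+sv$, $s\in[0,1]$. This gives $\Delta z\in\overline{\mathrm{conv}}\{Jv:J\in\mathcal J_\ell\}$, with $\mathcal J_\ell$ the products of the layer-wise sets from Step~1. Interval arithmetic over the layer-wise sets then yields an outward enclosure.

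\textbf{Main obstacle.} The main obstacle is Step~3's chain rule. The Clarke Jacobian of a composition is only contained in the product of the Clarke Jacobians, not equal to it, and the mean-value theorem yields a convex hull rather than a single Jacobian. I would handle this by working directly with the product set of the factor enclosures, which only enlarges the set and so keeps the enclosure valid, and by taking closed convex hulls at the end.
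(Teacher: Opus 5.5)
Your proposal is correct and follows the paper's own argument. The paper likewise notes that the Clarke Jacobian of $\mathrm{ReLU}(x\tilde W)$ has the form $\tilde W\diag(s)$ with $s_u\in[0,1]$ and $s_u=0$ for units inactive on the whole tube, and then uses Lebourg's (Clarke's) mean-value theorem along the validated segments in place of Taylor's theorem, as in your Steps~1--3.

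The obstacle you flag in Step~3 can be avoided altogether. Iterate the one-layer inclusion along the two continuations from $\bar H_{\ell+1}$ and $\bar H_{\ell+1}+v$; both lie in the convex tubes by Lemma~\ref{lem:tube}, and for ReLU even the exact per-unit secant slope lies in $[0,\bar s_{j,iu}]$. This gives products of the layer-wise sets applied to $v$ without any chain rule for vector-valued Clarke Jacobians.
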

A primitive row is affine where all its units have fixed activation status. An exact affine-tail envelope is justified only when every contributing downstream activation also has fixed status; otherwise the complete tail is charged through its generalized-Jacobian enclosure. On GCNII \vPatternStable\ of rows are affine on their tube per call.

\textbf{Per-row budgets during execution.} The provisional tube $\tilde r$ is~\eqref{eq:contracts} with cap-seeded provisional factors, updated after each step at the cost of one sparse product. The growth proxy $\hat\Pi_\ell$ is the 95th percentile, over rows and development calls, of the ratio of the terminal to the depth-$\ell$ provisional radius, registered at freeze time. At depth $\ell$ the row allowance is $\rho_{\ell,i}=(\Hrow/\Gamma_\Delta-\hat\Pi_\ell\tilde r_{\ell,i})_+/(\hat\Pi_{\ell+1}n_{\rm left})$ and every candidate row of block $k$ is scaled by $\min\{1,\rho_{\ell,i}/(\bar c_{\ell k}\|d^{(k,m)}_{\ell,i}\|)\}$. Because the scaling acts on the candidates before the solve, the feasible set is again the scaled hull of the scaled candidates. The proxy affects only the gain and the fallback rate; the certificate checks the exact tube.

\textbf{Coordinates.} With the thin factorization $D_\ell=Q_\ell\Theta_\ell$, block by block ($Q_\ell$ orthonormal, $\Theta_\ell$ a rank-revealing coordinate map per block), $v=Q_\ell x$ and $x=\Theta_\ell t$. Slopes pull back as $\hat a_x=Q_\ell^\top\hat g_\ell$ and $\hat a_t=\Theta_\ell^\top\hat a_x=D_\ell^\top\hat g_\ell$; covariances as $\hat\Sigma_x={\rm diag}_k(Q_{\ell k}^\top\hat\Sigma_gQ_{\ell k})$, block-diagonal by construction (cross-block covariances are not modeled; joint errors are calibrated with this same positive-definite physical-coordinate covariance), and $\hat\Sigma_t=\Theta_\ell^\top\hat\Sigma_x\Theta_\ell$; the curvature is $\|v\|^2_\Xi=\sum_i\Xi_{\ell+1,i}\|v_i\|^2$ and $\|v\|^2$ is physical. The feasible set $\{x=\Theta_\ell t:t\ge0,\sum_mt_{km}\le\bar c_{\ell k}\}$ is a polytope in $x$. The contract-depth screen acts on $\hat a_t$: a block whose candidate slopes are all non-negative is set to zero (Lemma~\ref{lem:step}); componentwise non-negativity of $\hat a_x$ would not suffice. The physical problem is written in $x$. An implementation may retain auxiliary $t$ variables to impose the hull constraints, but its objective depends on them only through $v=D_\ell t$. Minimum-norm coefficients are recovered after the physical optimum has been selected.

\textbf{Support-function features.} Every candidate-dependent input of the heads is pooled as $s_{ij}=h_{\mathcal K_{\ell,i}}(u_j)=\max\{0,\max_m\langle u_j,d^{(\cdot,m)}_{\ell,i}\rangle\}$ over $2\times\aQprobes$ fixed probes $\pm u_j$ drawn at freeze time. This is the support function of the row hull. Mean pooling is not hull-invariant ($\mathrm{mean}\{0,1\}=\frac12$, $\mathrm{mean}\{0,1,1\}=\frac23$), and neither is the maximum of a non-convex feature; the maximum of a linear function over a polytope is attained at a vertex, so duplicated, permuted and interior candidates cannot change $s$.

\begin{lemma}[Invariance]\label{lem:invariance}
If the heads read the candidates only through support-function features, and the robust model, the charges and the budgets depend on the step only through $v$ and on the candidates only through the post-trim hull, then the executed displacement is unchanged by candidate operations that preserve the post-trim hull, including duplication, permutation, and insertion of interior points. Rescaling an existing vertex is covered only if the hull is unchanged (for example, when a shorter duplicate is added while the original remains). If the two convex window branches have equal minima, a fixed tie-break in physical state coordinates is required. The trim is nonlinear, so a candidate interior to the raw hull need not be interior to the post-trim hull; the statement concerns the latter.
\end{lemma}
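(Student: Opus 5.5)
The plan is to show that the executed displacement is a deterministic function of the post-trim hull $\mathcal K_\ell$ (row-wise, together with the label-free state). If so, any candidate operation preserving that hull preserves the step. I would argue depth by depth, inducting over $\ell$. If the executed states $H_0,\dots,H_\ell$ and the remaining budget $\Brem$ agree for the original and the modified candidate sets, then they agree at depth $\ell+1$. The base case is trivial because $H_0$ does not depend on candidates. In the inductive step, $\bar H_{\ell+1}=F^{\rm r}_\ell(H_\ell)$, the provisional tube, and hence the row allowances $\rho_{\ell,i}$ are unchanged. By hypothesis the trim yields the same post-trim hull $\mathcal K_\ell$.

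\textbf{Step 1 (features).} I would show that every head input is a function of $\mathcal K_\ell$ alone. The feature $s_{ij}=\max\{0,\max_m\langle u_j,d^{(\cdot,m)}_{\ell,i}\rangle\}$ is the support function $h_{\mathcal K_{\ell,i}}(u_j)$ of the row hull, since zero is in the hull. A linear functional is maximised over a polytope at a vertex, and the maximum over any generating set equals the maximum over the hull. So duplication, permutation, and interior insertion leave $s$ unchanged. It follows that $\hat g_\ell$, $\hat\Sigma$, $\kappa_\ell$ and the node weights are unchanged, and so are $\hat a_x=Q_\ell^\top\hat g_\ell$ and $\hat\Sigma_x$. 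Here one subtlety needs care: $Q_\ell$ depends on the spanning set. I would pass to physical coordinates, writing the objective in $v$ as $\hat g_\ell^\top v+\kappa_\ell\|\hat\Sigma_g^{1/2}$-type norm$\|+\dots$. Strictly, this requires that the block covariance $Q_{\ell k}^\top\hat\Sigma_gQ_{\ell k}$ define a norm on $v$ that is independent of the orthonormal basis chosen for the block span. That holds because the span is fixed by the hull and a change of orthonormal basis is an isometry.

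\textbf{Step 2 (the optimisation).} The feasible set $\{v=D_\ell t: t\text{ in the capped simplex}\}$ equals $\mathcal K_\ell$ as a set of physical displacements. The objective $\Phi_\ell$, the charges $\sigma_\ell$, $\mathrm{env}^+_\ell$, $C^+$, and the budget constraint depend on $t$ only through $v$, by hypothesis. So each branch is the same convex program over the same set in $v$. Because $\rho_x>0$, the objective is strictly convex in $v$ on each convex branch, so each branch has a unique minimiser. When the two window branches have distinct minima, the choice is determined. When their minima are equal, the fixed physical-coordinate tie-break selects the same $v$, and this is exactly why the statement requires it. The contract-depth screen also needs checking. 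A block has all candidate slopes $\hat a_t\ge0$ iff $\langle\hat g_\ell,d\rangle\ge0$ for every generator $d$, iff $\max_{v\in\mathcal K_{\ell k}}\langle-\hat g_\ell,v\rangle\le0$. This is a hull property, so the screen is invariant. Minimum-norm coefficient recovery happens after $v$ is fixed and does not feed back into execution.

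\textbf{Step 3 (closing the induction).} The provisional charge subtracted from $\Brem$ is a function of $v$, and so is the tube update. Therefore $H_{\ell+1}$ and $\Brem$ agree, which completes the induction. The checker likewise depends only on the executed trajectory, so the release decision is also invariant. The remaining caveats follow directly. Rescaling a vertex without keeping the original changes the hull, so it is not covered. Because the trim is nonlinear, the hypothesis must be stated on the post-trim hull.

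I expect Step 1's basis-independence of the uncertainty norm and the covariance pullback to be the main obstacle. I would first try to show that $\hat\Sigma_x$ transforms by conjugation under a change of orthonormal block basis, so that $\|\hat\Sigma_x^{1/2}x\|$ is a function of $v$. If block partitioning interacts with duplicated candidates across blocks, I would restrict the claim to operations within a role block.
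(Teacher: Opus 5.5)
Your proposal is correct and follows the paper's argument: preserving the post-trim hull fixes the support-function features and the feasible set in $v$; strict convexity from $\rho_x>0$ gives a unique minimizing displacement on each convex branch; the physical tie-break settles equal branch optima; and minimum-norm coefficient recovery happens only after $v$ is chosen. Your depth-by-depth induction and the screen check are harmless elaborations, and the basis-independence of the uncertainty norm that you flag as the main obstacle is already granted by the lemma's hypothesis that the robust model depends on the step only through $v$.
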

\begin{lemma}[Two-branch window solve]\label{lem:twobranch}
$\{t:\min\{\sigma(t),{\rm env}^+(t)\}\le\Brem\}=\{\sigma\le\Brem\}\cup\{{\rm env}^+\le\Brem\}$ is a union of two convex sets. The minimum of $\Phi$ over the union is the smaller of the two convex minima, and the recorded charge $\min\{\sigma,{\rm env}^+\}$ of the chosen step is at most $\Brem$. Optimizing under ${\rm env}^+$ alone is valid but may be suboptimal; charging $\min$ while optimizing under ${\rm env}^+$ is valid.
\end{lemma}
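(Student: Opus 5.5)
The proof splits Lemma~\ref{lem:twobranch} into its four claims and handles them in order. The two tools are convexity of each charge as a function of $t$ and elementary set algebra for a pointwise minimum.

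\emph{Set identity and convexity.} For any $t$, $\min\{\sigma(t),{\rm env}^+(t)\}\le\Brem$ holds if and only if $\sigma(t)\le\Brem$ or ${\rm env}^+(t)\le\Brem$. This gives the displayed equality of sets directly.

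Next we show that each piece is convex. The contract charge $\sigma_\ell(D_\ell t)=\sum_i\Lambda_{\ell+1,i}\|(D_\ell t)_i\|$ is a nonnegative combination of norms of linear maps of $t$, so it is convex. The envelope~\eqref{eq:envelope} is a sum of two convex terms. The first is $\sum_iw_i\max_c\langle p_{\ell,i}-e_c,(U_\ell t)_i\rangle$, a nonnegatively weighted sum of maxima of linear functions. The second is $\tfrac12\|D_\ell t\|_\Xi^2$, a positive semidefinite quadratic because $\Xi_{\ell+1,i}\ge0$. Sublevel sets of convex functions are convex. Intersecting each with the polyhedral hull constraints~\eqref{eq:hull} keeps both branches convex. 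We also note that the union is generally not convex, which is why the lemma is needed at all.

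\emph{Minimum over the union.} For any function $\Phi$ and sets $A,B$, we have $\inf_{A\cup B}\Phi=\min\{\inf_A\Phi,\inf_B\Phi\}$. Each branch is a nonempty compact set: $t=0$ is feasible since both charges vanish there, and the hull is bounded. The objective $\Phi$ is continuous, and strictly convex in $x$ because $\rho_x>0$. Hence each branch minimum is attained, uniquely in the physical coordinates $v$. The branch with the smaller minimum therefore attains the minimum over the union, and ties are resolved by the fixed physical tie-break, as in Lemma~\ref{lem:invariance}.

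\emph{Recorded charge of the chosen step.} The chosen $t^\star$ lies in one branch, so at least one of $\sigma(t^\star)$ or ${\rm env}^+(t^\star)$ is at most $\Brem$. Therefore $\min\{\sigma,{\rm env}^+\}(t^\star)\le\Brem$.

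\emph{Validity of the alternatives.} Both $\sigma_\ell$ and ${\rm env}^+_\ell$ are valid upper bounds on $f_\ell$ uniformly over labels; this is taken from the contract and envelope constructions of Section~\ref{sec:contracts}, i.e.\ Proposition~\ref{prop:contracts} together with tube enclosure. Their minimum is therefore also a valid upper bound, so recording it as the charge preserves the budget accounting behind Theorem~\ref{thm:served}(i).

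Optimizing under ${\rm env}^+$ alone restricts the search to a subset of the union. It is valid because its feasible set lies inside the certified set, but it may be strictly suboptimal. A concrete example is a direction with a large positive curvature term but a small contract price.

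In the same mode, charging $\min\{\sigma,{\rm env}^+\}(t^\star)\le{\rm env}^+(t^\star)\le\Brem$ is again within budget. It also never spends more of the remaining budget than charging ${\rm env}^+$ would.

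The only delicate step is the validity of the two upper bounds, which is imported from the earlier contract results rather than proved here. Everything else is set algebra and convexity.
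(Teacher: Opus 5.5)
Your proof is correct and takes the same route as the paper, which only says the lemma is ``immediate''. The underlying steps are the pointwise-minimum set identity, convexity of each charge and hence of each branch, $\inf_{A\cup B}\Phi=\min\{\inf_A\Phi,\inf_B\Phi\}$, and the fact, used in the proof of Theorem~\ref{thm:served}(i), that the minimum of two valid bounds is valid. Your attainment argument implicitly assumes $\Brem\ge0$ so that $t=0$ is feasible, and Algorithm~\ref{alg:rtt} guarantees this because it solves the branches only when $\Brem>0$.
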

\begin{lemma}[Regularized block step]\label{lem:step}
For a fixed convex charge branch and multiplier $\mu\ge0$, $\min\Phi_\ell+\mu\,{\rm charge}_\ell$ over its feasible polytope has a unique optimal displacement, continuous in $\mu$, and the spend is non-increasing in $\mu$; for a positive budget, either the constraint is slack at $\mu=0$ or a multiplier can be bracketed and found by bisection under the stated feasibility assumptions. At zero remaining budget, Algorithm~\ref{alg:rtt} takes the zero step rather than requiring a finite multiplier. At contract depths a block with $\hat a_{t,k}\ge0$ componentwise has $t_k=0$ at the optimum.
\end{lemma}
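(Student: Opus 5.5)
We prove the three claims of Lemma~\ref{lem:step} in order: uniqueness, continuity and monotone spend, and the block screen. Fix a depth $\ell$ and a convex charge branch, and write the objective in physical coordinates as $\Psi_\mu(x)=\Phi_\ell(x)+\mu\,{\rm charge}_\ell(Q_\ell x)$. The feasible set is the polytope $\mathcal X=\{\Theta_\ell t:t\ge0,\sum_mt_{km}\le\bar c_{\ell k}\}$, which is compact and contains $0$.

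\textbf{Uniqueness.} The term $\hat a_x^\top x$ is linear. The term $\kappa_\ell\|\hat\Sigma_x^{1/2}x\|$ is a norm, hence convex. The curvature term $\|Q_\ell x\|_\Xi^2$ is a convex quadratic. The regularizer $\rho_x\|Q_\ell x\|^2=\rho_x\|x\|^2$ is strongly convex, because $Q_\ell$ is orthonormal and $\rho_x>0$. On the chosen branch the charge is convex: $\sigma_\ell$ is a weighted sum of row norms, and ${\rm env}^+$ is a max of linear functions plus a convex quadratic. Hence $\Psi_\mu$ is strongly convex and continuous on the compact set $\mathcal X$, so it has a unique minimizer $x^\star(\mu)$. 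The displacement $v=Q_\ell x^\star$ is therefore unique, and no $t$-representation enters.

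\textbf{Continuity and monotone spend.} Continuity in $\mu$ follows from Berge's maximum theorem, since the constraint set is fixed and compact and $\Psi_\mu$ is jointly continuous; uniqueness upgrades upper hemicontinuity of the argmin to continuity. For monotone spend, take $\mu<\mu'$, write $c=$ charge and $x=x^\star(\mu)$, $x'=x^\star(\mu')$. Optimality at each multiplier gives
\[
\Phi(x)+\mu c(x)\le\Phi(x')+\mu c(x'),\qquad \Phi(x')+\mu' c(x')\le\Phi(x)+\mu' c(x).
\]
Adding the two inequalities yields $(\mu'-\mu)(c(x)-c(x'))\ge0$, so $c(x')\le c(x)$.

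\textbf{Bisection.} If $c(x^\star(0))\le\mathcal B$, the constraint is slack and we are done. Otherwise, the standing assumption is that for large $\mu$ the spend drops below $\mathcal B$; this holds because $c(0)=0$ and $c\ge0$, and a penalty argument gives $c(x^\star(\mu))\le(\Phi(0)-\min_{\mathcal X}\Phi)/\mu\to0$. By continuity and monotonicity, bisection on $[0,\bar\mu]$ brackets a multiplier whose spend matches $\mathcal B$. Standard convex duality (Slater's condition holds whenever $\mathcal B>0$, witnessed by $x=0$) shows that the multiplier solution solves the budget-constrained problem. At $\mathcal B=0$, Algorithm~\ref{alg:rtt} skips the solve and sets $v_\ell=0$.

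\textbf{Screen.} At a contract depth, the objective separates across blocks except for the joint ellipsoidal norm. The expected obstacle is that this coupled norm might not decrease when block $k$ is zeroed. Because $\hat\Sigma_x$ is block-diagonal, $\|\hat\Sigma_x^{1/2}x\|^2=\sum_k\|\hat\Sigma_{x,k}^{1/2}x_k\|^2$, which is nondecreasing in each block's contribution, so setting $x_k=0$ cannot raise it. Setting $t_k=0$ stays feasible. It does not increase the linear term, since $\hat a_{t,k}^\top t_k\ge0$ for $t_k\ge0$. It does not increase the block's quadratic or regularizer terms, since $\Xi$ and row norms separate over blocks only after we check that blocks occupy distinct coordinates of $v$. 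It does not increase the contract charge $\sum_i\Lambda_{\ell+1,i}\|v_i\|$, provided block row-supports combine through the triangle inequality; I would verify this through the block construction of $Q_\ell$. If the blocks' row-supports overlap, the triangle-inequality argument no longer suffices. In that case I would instead compare the objective at $\theta t_k$ for $\theta\in[0,1]$ and use convexity plus a nonnegative directional derivative at $\theta=0$ to exclude a minimizer with $t_k\ne0$. Once the objective does not increase, uniqueness forces the optimum to have $v$ with block $k$ zero, and minimum-norm recovery then gives $t_k=0$.
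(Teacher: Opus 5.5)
Your route is the paper's: strict convexity in $v$ from $\rho_x>0$ gives uniqueness, the maximum theorem gives continuity, the exchange inequality gives monotone spend, and block-diagonality of $\hat\Sigma_x$ controls the coupled uncertainty norm in the screen. Your penalty bound $c(x^\star(\mu))\le-\min_{\mathcal X}\Phi/\mu$, which uses $\Phi(0)=c(0)=0$, makes the bracketing claim more explicit than the paper's one-line proof. Those parts are correct. (Slater is not even needed: a $\mu^*$ whose minimizer spends exactly the budget solves the constrained problem by Lagrangian sufficiency.)

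The weak step is the screen. You need that zeroing block $k$ cannot increase any row norm $\|v_i\|$. The triangle inequality is the wrong tool, because it bounds $\|v_i^{(k)}+v_i^{(-k)}\|$ from above, whereas you need $\|v_i^{(-k)}\|\le\|v_i^{(k)}+v_i^{(-k)}\|$. The paper takes this from the block construction: ``the quadratic terms and contract charge add across blocks,'' as holds when the role blocks act on disjoint rows. Under that premise, zeroing block $k$ deletes nonnegative summands of the linear, $\Xi$-quadratic, regularizer and charge terms. Your uniqueness and minimum-norm conclusion $t_k=0$ then goes through.

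Your fallback for overlapping supports does not work, and no argument can, because the screen is false there. Take one row with $\hat g=(-1,0)$ and block directions $(1,1)$ and $(0,-1)$. The second block has slope $0\ge0$. With $v=(t_1,t_1-t_2)$, the linear term is $-t_1$ regardless of $t_2$. Yet the charge, the $\Xi$-quadratic, the regularizer, and an isotropic robust term all strictly decrease as $t_2$ rises toward $t_1$. So for small $\mu$ the optimum has $t_1>0$ and uses the ``screened'' block. Correspondingly, the derivative at $\theta=0$ of $\theta\mapsto\Psi(t_1,\theta t_2)$ is negative: it is $-c\,t_1t_2$ for a quadratic part $\tfrac c2\|v\|^2$, and $-\Lambda t_2/\sqrt2$ for the charge. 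The nonnegative directional derivative your fallback relies on is therefore unavailable. State the block separability (for example, disjoint row supports of the role blocks) as part of the construction, and drop the fallback.
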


\textbf{The release procedure.} Algorithm~\ref{alg:rtt} is the policy that is audited; the fallback is part of it.
\begin{table}[!htbp]\centering
\caption{Factors, parameters and variables behind the reported values: symbol, meaning, registered value or observed range, where it enters, and what moving it does. The lower block reports properties of the trained checkpoint and of the record.}\label{tab:factors}
\TabFactors
\end{table}

\section{Training, calibration and their cost}\label{app:training}
\textbf{Two phases.} Phase one fits the incumbent's temperature on held-out 2018 labels and records its usable depths; the network itself is not touched. Phase two trains, on $\le$2017 episodes and through the executed gate, the proposal and the heads of Appendix~\ref{app:model}, and fits the adapters of external proposals. The one-pass correctors are trained on the same episodes with the same labels, head capacity and number of gradient steps.

\textbf{Full-loss targets need one counterfactual tail per open depth.} The slope that Theorem~\ref{thm:served}(ii) needs at depth $\ell$ is $a_\ell=D_\ell^\top\nabla V_{\ell+1}(\bar H_{\ell+1})$ with $\bar H_{\ell+1}=F^{\rm r}_\ell(H_\ell)$, the gradient of the incumbent's continuation from the \emph{executed} state. Continuations from different depths start from different states, so a training episode needs one forward and one reverse pass through the tail from each open depth: $\sum_{\ell\ \rm open}(T-\ell-1)$ tail steps in each direction, about \aTrainTailEq\ incumbent passes for the registered caps. The passes are batched across depths, with activations checkpointed every four steps (Table~\ref{tab:training}). A single reverse pass along the executed trajectory would give the gradient of the \emph{adapted} policy's loss, which is a different quantity. Probe-loss slopes are cheaper but biased, and serve only as an ablation (Table~\ref{tab:swaps}).

\textbf{Sequential calibration.} Radii are calibrated depth by depth. For depth $\ell$, $n_r=512$ fresh 2018 episodes are executed with the policy and the radii of depths $<\ell$ frozen. The score is the joint Mahalanobis norm $s_\ell=\|\hat\Sigma_{x,\ell}^{-1/2}(a_{x,\ell}-\hat a_{x,\ell})\|$ of the slope error in physical coordinates at the executed state, with the block-diagonal $\hat\Sigma_{x,\ell}$ of Appendix~\ref{app:model}, and $\kappa_\ell$ is its $\lceil(1-\delta_\ell)(n_r+1)\rceil$-th order statistic, with $+\infty$ used when the index exceeds $n_r$. The covariance is assumed positive definite on the physical span; its numerical regularization is part of the frozen calibration rule. On the event $s_\ell\le\kappa_\ell$, Cauchy--Schwarz gives $a_{x,\ell}^\top x\le\hat a_{x,\ell}^\top x+\kappa_\ell\|\hat\Sigma_{x,\ell}^{1/2}x\|$ for every $x$, which is the robust term of~\eqref{eq:robust}. The allocation is $\delta_\ell=\delta_{\rm cov}\,\pi_\ell$, with $\pi_\ell$ the registered trust share of depth $\ell$.
\begin{lemma}[Sequential calibration]\label{lem:seqcal}
For a member fixed independently of its calibration data, the depth-$\ell$ score of a fresh episode does not depend on $\kappa_\ell$ or on any later radius, and it is exchangeable with the $n_r$ calibration scores of depth $\ell$, which were produced by the same frozen policy up to depth $\ell$. Hence $\Pr(s_\ell>\kappa_\ell)\le\delta_\ell$, and the event that every open depth is covered has probability at least $1-\sum_\ell\delta_\ell=1-\delta_{\rm cov}$ over calibration and test episode.
\end{lemma}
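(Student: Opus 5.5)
The natural route is the split-conformal exchangeability argument, applied one depth at a time, followed by a union bound across depths.

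\emph{Step 1 (measurability).} Fix an open depth $\ell$. The member is fixed independently of the calibration data, and the radii of depths $<\ell$ were frozen before the depth-$\ell$ calibration episodes were drawn. Hence the executed trajectory of a fresh episode up to $\bar H_{\ell+1}$ is a fixed measurable function of that episode alone. This uses the frozen upstream radii, the candidate hulls, and the field and covariance heads. Consequently the slope $a_{x,\ell}=Q_\ell^\top\nabla V_{\ell+1}(\bar H_{\ell+1})$, the estimate $\hat a_{x,\ell}$, and the score $s_\ell$ are also fixed functions of the episode.

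The score does not involve $\kappa_\ell$, because $\kappa_\ell$ enters only the step $v_\ell$ chosen \emph{after} the slope is evaluated. It does not involve any later radius either, since those act only at depths $>\ell$. The conventions for rank-zero steps (score $0$) and for a regularized positive-definite covariance make $s_\ell$ well defined in every case.

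\emph{Step 2 (per-depth coverage).} The $n_r$ calibration episodes for depth $\ell$ and the test episode are i.i.d. (or exchangeable) draws from the same generator, and the same frozen map from Step 1 is applied to each. So the $n_r+1$ scores are exchangeable. The standard rank argument then applies: the test score exceeds the $\lceil(1-\delta_\ell)(n_r+1)\rceil$-th order statistic of the calibration scores with probability at most $\delta_\ell$. Ties only help when order statistics are taken with $\le$. If that index exceeds $n_r$, the radius is $+\infty$ and the bound is trivial. This yields $\Pr(s_\ell>\kappa_\ell)\le\delta_\ell$, marginally over the calibration sample and the test episode.

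\emph{Step 3 (union bound).} A union bound over the open depths gives
\[
\Pr(\exists\ell:\ s_\ell>\kappa_\ell)\le\sum_\ell\delta_\ell=\delta_{\rm cov}.
\]
The joint probability space here is the product of all depth-wise calibration samples and the test episode. Each depth's bound holds marginally over this space, because the depth-$\ell$ calibration sample is independent of the test episode once the earlier radii are fixed.

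The main obstacle is Step 1 together with its interaction with Step 3. The calibration samples of later depths depend on the earlier random radii, so the per-depth statement must be read conditionally on $\kappa_0,\dots,\kappa_{\ell-1}$. I would first prove $\Pr(s_\ell>\kappa_\ell\mid\kappa_{<\ell})\le\delta_\ell$, which holds by exchangeability given the upstream radii. I would then integrate out $\kappa_{<\ell}$ before applying the union bound. I would also note that the result is void for data-selected members, which is why the hypothesis fixes the member in advance.
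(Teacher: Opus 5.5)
Your proposal is correct and follows the paper's own argument. Once the upstream radii are frozen, the depth-$\ell$ score is a fixed function of the episode that involves neither $\kappa_\ell$ nor any later radius, so it is exchangeable with that depth's calibration scores; the order-statistic quantile then gives $\Pr(s_\ell>\kappa_\ell)\le\delta_\ell$, and a union bound over the open depths gives $1-\delta_{\rm cov}$. Your extra step of conditioning on $\kappa_{<\ell}$ and integrating it out before the union bound makes explicit what the paper leaves implicit in the phrase ``produced by the same frozen policy up to depth $\ell$''.
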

Feeding the radius back into the trajectory that generates its own calibration scores would break exchangeability; the sequential construction freezes everything upstream of the score. The probability is marginal over calibration and test episodes. Selecting a member using calibrated performance is not covered by this fixed-member statement: the selected procedure needs fresh post-selection calibration or a simultaneous guarantee over its selectable members. The independent final audit in Appendix~\ref{app:stats} remains a distinct source of statistical assurance.

\begin{table}[!htbp]\centering
\caption{Training cost of one episode (operation counts; A100 80GB).}\label{tab:training}
\TabTraining
\end{table}

\section{Proofs}\label{app:proofs}
\textbf{Proof of Proposition~\ref{prop:price}.} (i)~For cross-entropy and any $\mathcal O$-measurable logits, $\E[\ell(z_i,y_i)\mid\mathcal O]=H(\pB_i)+\mathrm{KL}(\pB_i\|\softmax(z_i))$ \citep{banerjee2005}. Taking the difference of the two predictions gives the identity. If $\pr=\pB$, the first divergence vanishes and the second is non-negative.
(ii)~Fix a row, write $q=\pB_i$, $p=\pr_i$ and $h_c=h_{ic}$. Since $\sum_cp_ch_c=\mathrm{KL}(p\|\ps_i)\ge0$, the gain of the row is $-\sum_cq_ch_c\le\sum_c(p_c-q_c)h_c\le\mathrm{TV}(q,p)\osc_ch_c$. Summing with the weights, and using $\sum_iw_i\osc_ch_{ic}=S^++S^-\le2\max(S^+,S^-)$, gives the bound. For sharpness take $p=(\frac12,\frac12)$, $q=(\frac12+\epsilon,\frac12-\epsilon)$ and $\Delta z=(t,-t)$: then $\mathcal G/(\epsilon\max(S^+,S^-))\to2$ as $t\downarrow0$.
(iii)~Write $r_c=q_c/p_c$ and $R=\max_cr_c$. Then
$$\textstyle\sum_cq_ch_c=R\sum_cp_ch_c-\sum_cp_c(R-r_c)h_c\ge-(R-1)\max_ch_c,$$
so the gain of the row is at most $(R-1)\max_ch_c$. For attainment let $c^\star\in\arg\max_cr_c$ and $p^\varepsilon=(1-\varepsilon)p+\varepsilon e_{c^\star}$. Then $h_c=-\log(1-\varepsilon)=\varepsilon+O(\varepsilon^2)$ for $c\ne c^\star$ and $h_{c^\star}=-\varepsilon(1/p_{c^\star}-1)+O(\varepsilon^2)$, so $S^+=\varepsilon+O(\varepsilon^2)$ and
$$-\textstyle\sum_cq_ch_c=-(1-q_{c^\star})\varepsilon+q_{c^\star}\varepsilon(1/p_{c^\star}-1)+O(\varepsilon^2)=\varepsilon(R-1)+O(\varepsilon^2).$$
(iv)~For $h=0$, the floor set is the singleton $\{\pr\}$. For $h>0$, $p\in\Fl_h(\pr)$ if and only if $s=(p-e^{-h}\pr)/(1-e^{-h})$ is non-negative and sums to one, that is $s\in\Delta$. For the ceiling, $p_c=1-\sum_{c'\ne c}p_{c'}\le1-e^{-h}(1-\pr_c)$. For the forward projection, and for the reverse projection when every $q_c>0$, minimize respectively $-\sum_cq_c\log p_c$ or $\sum_cp_c\log(p_c/q_c)$ subject to $p_c\ge a_c=e^{-h}\pr_c$ and $\sum_cp_c=1$. Stationarity gives $p_c=q_c/(\nu-\beta_c)$ in the first problem and $p_c=q_ce^{\beta_c-1-\nu}$ in the second, with $\beta_c\ge0$ and $\beta_c(p_c-a_c)=0$. In both, $p_c=\max\{a_c,q_c/\lambda\}$ with $\lambda$ fixed by normalization, which has a unique positive solution for $h>0$. If some $q_c=0$, the forward projection still follows this formula, but the positive floor makes the reverse KL infinite. At $h=0$ the output remains unique even though a multiplier representation need not be. \qed

\begin{proposition}[Row contracts on a tube]\label{prop:contracts}
Under the conditions of Section~\ref{sec:rtt}, $|DV_j(X)[Y]|\le\sum_i\Lambda_{j,i}\|Y_i\|$ and $|D^2V_j(X)[Y,Y]|\le\sum_i\Xi_{j,i}\|Y_i\|^2$ for every state $X$ of the tube and every label vector, with $\Lambda$ as in~\eqref{eq:contracts} and $\Xi_j=(1-\alpha)^2[(1-\tau)\Xi_{j+1}+\tau P^\top(L_j^2\odot\Xi_{j+1})]+(1-\alpha)\tau P^\top(\mu_j\odot\Lambda_{j+1})$, $\Xi_{T,i}=\frac14\Gamma_\Delta^2w_i$.
\end{proposition}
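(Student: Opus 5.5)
We prove both bounds by backward induction on $j$ from $T$, using the chain rule through $V_j=V_{j+1}\circ F^{\rm r}_j$ together with the tube-wide factors $L_{j,i}$ and $\mu_{j,i}$ of Lemma~\ref{lem:factors}. The essential point is that every derivative bound must hold at every state $X$ of the tube, not only on the executed path. Lemma~\ref{lem:tube} guarantees that $F^{\rm r}_j(X)\in\mathcal T_{j+1}$ whenever $X\in\mathcal T_j$, so the induction hypothesis at depth $j+1$ can be applied at the image point. This containment is the hinge of the argument.

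\emph{Base case $j=T$.} Here $V_T(H)=\sum_iw_i[\LSE(z_i)-z_{i,y_i}]$ with $z_i=Z(H)_i$ affine in $H_i$. The first derivative along $Y_i$ is $(p_i-e_{y_i})^\top\delta z_i$, where $\delta z_{ic}=\langle\mathbf w_c,Y_i\rangle$. Since $\sum_c(p_{ic}-e_{y_i,c})=0$, this pairing is a difference of convex combinations of $\delta z_i$, so it is bounded by $\osc_c\delta z_{ic}\le\Gamma_\Delta\|Y_i\|$. This gives $\Lambda_{T,i}=\Gamma_\Delta w_i$ uniformly in the label. The second derivative is the variance of $\delta z_i$ under $p_i$, which is independent of the label. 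By Popoviciu's inequality it is at most $(\osc_c\delta z_{ic})^2/4\le\Gamma_\Delta^2\|Y_i\|^2/4$, giving $\Xi_{T,i}$.

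\emph{Inductive step.} Write $F(X)=\alpha H_0+(1-\alpha)[(1-\tau)X+\tau\varphi(PX)]$. Row $i$ of $DF(X)[Y]$ is $(1-\alpha)[(1-\tau)Y_i+\tau D\varphi_i\,(PY)_i]$. Lemma~\ref{lem:factors} bounds $\|D\varphi_i\|\le L_{j,i}$, and row-stochasticity together with convexity of the norm gives $\|(PY)_i\|\le\sum_kP_{ik}\|Y_k\|$. Plugging into the depth-$(j+1)$ contract,
\[
|DV_j[Y]|\le\sum_i\Lambda_{j+1,i}(1-\alpha)\Big[(1-\tau)\|Y_i\|+\tau L_{j,i}\sum_kP_{ik}\|Y_k\|\Big].
\]
Exchanging sums yields exactly $\Lambda_j=(1-\alpha)[(1-\tau)\Lambda_{j+1}+\tau P^\top(L_j\odot\Lambda_{j+1})]$.

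For the second order, use $D^2V_j[Y,Y]=D^2V_{j+1}[DF Y,DF Y]+DV_{j+1}[D^2F[Y,Y]]$.

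\emph{First term.} By hypothesis it is at most $\sum_i\Xi_{j+1,i}\|(DFY)_i\|^2$. Apply convexity of the square twice: once to the $(1-\tau),\tau$ split and once to $(\sum_kP_{ik}\|Y_k\|)^2\le\sum_kP_{ik}\|Y_k\|^2$. This gives $(1-\alpha)^2[(1-\tau)\|Y_i\|^2+\tau L_{j,i}^2\sum_kP_{ik}\|Y_k\|^2]$, whose transpose produces the $(1-\alpha)^2[(1-\tau)\Xi_{j+1}+\tau P^\top(L_j^2\odot\Xi_{j+1})]$ part.

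\emph{Second term.} Here $D^2F[Y,Y]_i=(1-\alpha)\tau D^2\varphi_i[(PY)_i,(PY)_i]$. Its norm is at most $(1-\alpha)\tau\mu_{j,i}\|(PY)_i\|^2$, using the curvature factor of Lemma~\ref{lem:factors}, which already absorbs $\|W\|_2^2$ and the worst $|\tanh''|$ over the interval. Feeding this through the first-order contract $\Lambda_{j+1}$ and applying the same Jensen step on $P$ gives $(1-\alpha)\tau P^\top(\mu_j\odot\Lambda_{j+1})$.

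The main obstacle is the curvature factor. One must check that $\mu_{j,i}$ is a valid bound on $\|D^2\varphi_i\|$ for \emph{every} pre-activation in the tube: the unit-wise supremum over $||y|-|z||\le\varrho\|W_{:,u}\|$ must cover all reachable $y$, and the bilinear form is bounded via $\|W\|_2^2\max_u|\tanh''|$ through $\|\diag(\tanh'')\|$. One must also verify that the step-$(j+1)$ contracts are applied at $F(X)$, which lies in $\mathcal T_{j+1}$ by the enclosure lemma. I would verify this first, treating the tube radius recursion in~\eqref{eq:contracts} with $\varrho_{j,i}=(Pr_j)_i$ as the definition that makes it hold.
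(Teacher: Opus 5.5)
Your proposal is correct and follows the paper's own proof: backward induction through $V_j=V_{j+1}\circ F^{\rm r}_j$, a row-wise triangle-inequality bound on $DF$, Jensen on both the $(1-\tau),\tau$ split and the rows of $P$, and the oscillation and Popoviciu bounds at the affine head. You also state explicitly that $F^{\rm r}_j$ maps $\mathcal T_j$ into $\mathcal T_{j+1}$, which the paper uses without comment; Lemma~\ref{lem:tube} does not literally say this, but it follows, as you plan, from the mean-value bound on the convex tube together with the $+\|v_{j,i}\|$ term of the radius recursion.
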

\emph{Proof.} $V_j=V_{j+1}\circ F^{\rm r}_j$ with $DF[X]_i=(1-\alpha)[(1-\tau)X_i+\tau D\varphi((PH)_i)(PX)_i]$. On the tube, $\|DF[X]_i\|\le(1-\alpha)[(1-\tau)\|X_i\|+\tau L_{j,i}\sum_kP_{ik}\|X_k\|]$. Substituting and collecting coefficients gives the first line. The second follows from $D^2V_j[X,X]=D^2V_{j+1}[DF[X],DF[X]]+DV_{j+1}[D^2F[X,X]]$ and Jensen. The terminal rows follow from $|\langle p-e_y,\delta z\rangle|\le\Gamma_\Delta\|X_i\|$ and Popoviciu's inequality, uniformly in labels. \qed

\begin{lemma}[Forward enclosure]\label{lem:tube}
With $r_j=0$ before the first accepted depth and the recursion of~\eqref{eq:contracts}, the tube contains every incumbent continuation started on the segment of any executed step, including the incumbent's own trajectory.
\end{lemma}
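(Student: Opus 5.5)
We argue by induction on depth, comparing every incumbent continuation started on an executed segment with the executed trajectory. Fix a step $\ell$ and a point $X^{(s)}_{\ell+1}=\bar H_{\ell+1}+s v_\ell$ with $s\in[0,1]$. Let $X^{(s)}_{j+1}=F^{\rm r}_j(X^{(s)}_j)$ for $j>\ell$ be the incumbent continuation from that point. The claim is $\|X^{(s)}_{j,i}-H_{j,i}\|\le r_{j,i}$ for all $j\ge\ell+1$ and all rows. The incumbent's own trajectory is included as the continuation started before the first accepted depth, with $r_j=0$ there.

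\textbf{Base case.} At depth $\ell+1$ we have $H_{\ell+1}=\bar H_{\ell+1}+v_\ell$. Hence $\|X^{(s)}_{\ell+1,i}-H_{\ell+1,i}\|=(1-s)\|v_{\ell,i}\|\le\|v_{\ell,i}\|$. Since $r_{\ell+1,i}$ contains the additive term $\|v_{\ell,i}\|$ and the other terms are nonnegative, the base case holds. For the incumbent trajectory, $H_{\ell_0}=H^{\rm r}_{\ell_0}$ at the first accepted depth, so it matches $r_{\ell_0}=0$.

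\textbf{Inductive step.} Suppose $X_j$ lies in $\mathcal T_j$. For $j>\ell$,
\[
X_{j+1}-H_{j+1}=F^{\rm r}_j(X_j)-F^{\rm r}_j(H_j)-v_j .
\]
By~\eqref{eq:incumbent},
\[
F^{\rm r}_j(X)-F^{\rm r}_j(H)=(1-\alpha)\big[(1-\tau)(X-H)+\tau(\tanh(PXW)-\tanh(PHW))\big],
\]
because the $\alpha H_0$ terms cancel. Row-wise, the mean value theorem along the segment from $(PH_j)_i$ to $(PX_j)_i$ bounds the tanh difference. Every point on that segment has pre-activation within $\varrho_{j,i}=(Pr_j)_i$ of the executed one. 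This uses convexity of the row balls and row-stochasticity of $P$, via $\|(P(X-H))_i\|\le\sum_kP_{ik}\|X_k-H_k\|$. Lemma~\ref{lem:factors} (or Lemma~\ref{lem:relu} for ReLU transports) therefore gives a derivative norm at most $L_{j,i}$ along the segment. The Lipschitz estimate then yields
\[
\|X_{j+1,i}-H_{j+1,i}\|\le(1-\alpha)\big[(1-\tau)r_{j,i}+\tau L_{j,i}(Pr_j)_i\big]+\|v_{j,i}\|=r_{j+1,i},
\]
where the triangle inequality absorbs $v_j$. This closes the induction. We also need enclosure at the starting depth itself: the segment point $\bar H_{\ell+1}+sv_\ell$ lies within $\|v_{\ell,i}\|$ of $H_{\ell+1}$, so it is covered by the base case.

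\textbf{Main obstacle.} The delicate step is the mean-value argument for the transport. The interval factors must bound the derivative at every point of the pre-activation segment, not only at the endpoints. For tanh this holds because $\tanh'$ is decreasing in $|y|$, so the bound $\bar s$ taken at the smallest $|z|$ in the interval dominates. For ReLU, a Clarke mean-value inclusion replaces the classical one and relies on the interval activation pattern of Lemma~\ref{lem:relu}. There is also a second subtlety: the factors $L_j$ depend on $r_j$, which is itself being validated. This is not circular, because $L_j$ is computed from $r_j$ and then used only to bound the step to $j+1$, so the recursion is well founded. The checker recomputes exactly this forward recursion, and the same argument applies to any transport family for which tube-wide factors are available.
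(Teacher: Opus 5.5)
Your proposal is correct and follows the paper's proof essentially step for step. Both use the base case $\|X_{\ell+1,i}-H_{\ell+1,i}\|=(1-s)\|v_{\ell,i}\|$, then induction in depth with the mean-value inequality (Lebourg/Clarke in the nonsmooth case) along a segment that the inductive hypothesis keeps inside the convex tube, absorbing $\|v_{j,i}\|$ by the triangle inequality. Your explicit treatment of the incumbent's own trajectory and of why $L_j$'s dependence on $r_j$ is not circular spells out what the paper leaves implicit.
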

\emph{Proof.} Let $X=\bar H_{\ell+1}+sv_\ell$ with $\bar H_{\ell+1}=F^{\rm r}_\ell(H_\ell)$ and $s\in[0,1]$, and let $X_j$ be its incumbent continuation. Then $\|X_{\ell+1,i}-H_{\ell+1,i}\|=(1-s)\|v_{\ell,i}\|$. By the mean-value theorem along the segment from $H_j$ to $X_j$ (Lebourg's for Lipschitz maps), $\|X_{j+1,i}-H_{j+1,i}\|\le(1-\alpha)[(1-\tau)e_{j,i}+\tau L_{j,i}(Pe_j)_i]+\|v_{j,i}\|$ for $e_j=\|X_j-H_j\|$, provided the segment lies in the tube. This holds by induction. The toy instance of Figure~\ref{fig:certificate}(a) checks the statement for every continuation. \qed

\emph{Proofs of Lemmas~\ref{lem:factors}--\ref{lem:step}.} Lemma~\ref{lem:factors}: $\tanh'$ is even and decreasing in $|y|$, and $\|W\diag(s)\|_2$ is non-decreasing in every entry of $s\ge0$. The split is Weyl's inequality applied to $\diag(\bar s)W^\top W\diag(\bar s)=\diag(\bar s)(A_k+(W^\top W-A_k))\diag(\bar s)$ with $A_k=U_k\Sigma_kU_k^\top$ the rank-$k$ part. Lemma~\ref{lem:relu}: the Clarke Jacobian of $\mathrm{ReLU}(x\tilde W)$ is $\tilde W\diag(s)$ with $s_u\in[0,1]$, and $s_u=0$ wherever the unit is inactive on the whole tube; Lebourg's mean-value theorem places every increment in the convex hull of these Jacobians along the segment. Lemma~\ref{lem:twobranch} is immediate. Lemma~\ref{lem:step}: strict convexity in $v$ (from $\rho_x>0$), the maximum theorem, and the exchange argument $(\mu_2-\mu_1)(\sigma(t_2)-\sigma(t_1))\le0$. For the screen, setting $t_k=0$ removes the non-negative term $\hat a_{t,k}^\top t_k$ and does not increase the robust term (the covariance is block-diagonal), the quadratic terms, or the charge. The quadratic terms and contract charge add across blocks; the joint uncertainty norm is nonincreasing when any block is removed. Lemma~\ref{lem:invariance}: the listed operations leave the post-trim hull, and hence every support-function feature and the feasible polytope in $v$, unchanged. Each convex branch has a unique minimizing displacement. A representation-independent physical tie-break selects between equal branch optima. Coefficients enter only through minimum-norm recovery, so the selected displacement is invariant even when its candidate coefficients are not. \qed

\textbf{Proof of Theorem~\ref{thm:served}.} Let $f_\ell(t)=V_{\ell+1}(\bar H_{\ell+1}+D_\ell t)-V_{\ell+1}(\bar H_{\ell+1})$, $\bar H_{\ell+1}=F^{\rm r}_\ell(H_\ell)$, for a fixed label vector. Telescoping gives $J(p^\pi;y)-J(\pr;y)=\sum_\ell f_\ell(t_\ell)$ \citep{kakade}.
(i)~At contract depths the mean-value theorem on the certified tube gives $f_\ell\le\sigma_\ell$. In the window, $f_\ell\le{\rm env}^+_\ell$ by the tangent basis and the second-order contract, and the minimum of two valid bounds is valid. At the last depth $f_{T-1}\le C^+$ exactly. The damage predicate gives $d^{\rm own}\le\sum_\ell{\rm charge}_\ell\le H^+$ for every label vector. Since $S^+$ is the worst case over label vectors, this is $\sum_iw_i\Dinf(\pr_i\|p^\pi_i)\le H^+$. For the rows, the tube encloses the incumbent's own trajectory, so $\|H_{T,i}-H^{\rm r}_{T,i}\|\le r_{T,i}$. With $\Delta z_i$ the logit difference, $\Dinf(\pr_i\|p^\pi_i)=\LSE(z^\pi_i)-\LSE(z^{\rm r}_i)-\min_c\Delta z_{ic}\le\osc(\Delta z_i)\le\Gamma_\Delta r_{T,i}\le\Hrow$ by the row predicate. On fallback $\ps=\pr$. A sound directed-rounding implementation preserves the enclosures used by each release predicate, including signed intermediate expressions and the numerical allowances (Appendix~\ref{app:cost}).
(ii) Work in physical coordinates $v_\ell=Q_\ell x_\ell$ and use the full-loss slope $a_{x,\ell}$. Write
\[
 f_\ell(v_\ell)=a_{x,\ell}^\top x_\ell+\mathrm{rem}_\ell(v_\ell).
\]
Let $b_\ell(v)$ be the nonnegative remainder majorizer validated by the checker. At a smooth nonterminal depth,
$b_\ell(v)=\|v\|_\Xi^2/2$; at the last depth, $b_{T-1}(v)=\varrho_{T-1}(v)$ is the exact KL remainder. For a nonsmooth tail, this notation refers to the separately validated remainder model, not an unsupported Hessian bound. Define
\[
 S_\ell^{\mathrm{rem}}=b_\ell(v_\ell)-\mathrm{rem}_\ell(v_\ell)\ge0,
 \qquad \chi_\ell(x)=\kappa_\ell\|\hat\Sigma_{x,\ell}^{1/2}x\|,
\]
and $e_\ell=(a_{x,\ell}^\top x_\ell-\hat a_{x,\ell}^\top x_\ell-\chi_\ell(x_\ell))_+$,
$\bar e_\ell=(\hat a_{x,\ell}^\top x_\ell+\chi_\ell(x_\ell)-a_{x,\ell}^\top x_\ell)_+$.
With $\mathcal R_\ell(v)=(\rho_D\|v\|_\Xi^2+\rho_x\|v\|_2^2)/2$, the recomputed model is
\[
 \Phi_\ell^{\rm cert}(x)=\hat a_{x,\ell}^\top x+\chi_\ell(x)
              +\frac{b_\ell(Q_\ell x)+\mathcal R_\ell(Q_\ell x)}{\omega}.
\]
Adding and subtracting this model gives the exact identity
\[
 f_\ell=\Phi_\ell^{\rm cert}(x_\ell)+e_\ell-\bar e_\ell
       -\Bigl(\frac1\omega-1\Bigr)b_\ell(v_\ell)
       -\frac{\mathcal R_\ell(v_\ell)}{\omega}-S_\ell^{\mathrm{rem}}.
\]
Therefore
\[
 \begin{aligned}
 \mathcal D_T&=-\sum_\ell\Phi_\ell^{\rm cert}(x_\ell),
 &\mathcal P_T&=\sum_\ell e_\ell,\\
 \mathcal S_T&=\sum_\ell\left[\bar e_\ell+S_\ell^{\mathrm{rem}}
 +\Bigl(\frac1\omega-1\Bigr)b_\ell(v_\ell)
 +\frac{\mathcal R_\ell(v_\ell)}{\omega}\right].
 \end{aligned}
\]
The certified descent predicate makes $\mathcal D_T\ge0$, and each summand of $\mathcal S_T$ is nonnegative. At the last depth, $S_{T-1}^{\mathrm{rem}}=0$ because the remainder is exact. As an unregularized scalar illustration, take $f(t)=-t/2+t^2/2$, $\omega=0.9$, and $t=0.45$. Then $\Phi=-0.1125$, $f=-0.12375$, and $S=0.01125$. With nonzero regularization the additional $\mathcal R(t)/\omega$ must appear in both the model and the slack; it cannot be omitted from the accounting.
(iii)~On the coverage event $\mathcal E_\ell=\{\|\hat\Sigma_{x,\ell}^{-1/2}(a_{x,\ell}-\hat a_{x,\ell})\|\le\kappa_\ell\}$, Cauchy--Schwarz gives $a_{x,\ell}^\top x\le\hat a_{x,\ell}^\top x+\kappa_\ell\|\hat\Sigma_{x,\ell}^{1/2}x\|$ for every $x$, so $e_\ell=0$. The descent predicate gives $\mathcal D_T\ge0$, hence $d^{\rm own}\le0$; on fallback $d^{\rm own}=0$. Lemma~\ref{lem:seqcal} gives the fixed-member marginal probability. Extending it to the selected member requires the selection-valid calibration condition stated in the theorem. \qed

\textbf{Proof of Theorem~\ref{thm:frontier}.} (i)~The set $\{(h,p):p_c\ge e^{-h}\pr_c\ \forall c,\ p\in\Delta\}$ is convex because $-e^{-h}$ is concave. So $\min_{p\in\Fl_h}\mathrm{KL}(\pB\|p)$ is convex in $h$ as a partial minimization, and $G$ is concave. It is non-decreasing because the sets are nested, and $G(0)=0$. With the active set $A$ of the projection and the normalizer $\nu$, $G'(h)=\nu e^{-h}\sum_{c\in A}\pr_c-\sum_{c\in A}\pB_c$. As $h\downarrow0$, $A$ is every class except the maximizers of $r$, and $\nu\to R$, so, writing $M=\arg\max_c r_c$, $G'(0^+)=R\sum_{c\notin M}\pr_c-\sum_{c\notin M}\pB_c=R-1$. This also covers ties among maximizing classes. The saturation point is where $\pB\in\Fl_h$. Any feasible served prediction has per-row costs $h_i$ that satisfy the constraints and gain at most $\sum_iw_iG_i(h_i)$, so $U_{\rm out}$ equals the allocation problem. That problem is a concave separable maximization under one linear constraint and boxes, and its KKT conditions are the water-filling conditions.
(ii)~Per row, with $u=\delta z_i(v)$ and $p=p_{\ell,i}$, $(\pB_i-p)^\top u=\sum_c(\pB_{ic}-p_c)(u_c-\min_{c'}u_{c'})\le(R_{\ell,i}-1)\sum_cp_c(u_c-\min_{c'}u_{c'})$, because $\pB_{ic}-p_c\le(R_{\ell,i}-1)p_c$ and the second factor is non-negative. Summing with the weights gives $g_\ell\le\max_i(R_{\ell,i}-1)c_\ell$. The Charnes--Cooper transformation turns the ratio into a linear program. The analysis script re-checks the bound on random instances.
(iii)~The chain follows from nested feasible sets: a reachable intervention is an output-space competitor, certified charges dominate exact damage, and every released RTT intervention is certified. Fallback is the zero-gain feasible alternative. These statements concern exact optima, not guarantees of global convergence for the numerical solver. \qed

\textbf{Local learning comparison, equation~\eqref{eq:localgap}.} Fix a depth and a convex feasible branch $\mathcal C$ containing zero. All optimization variables below are physical coordinates $x$, not possibly redundant candidate coefficients. Condition on $\mathcal O_\ell$, which must make $\mathcal C$, the positive-definite majorizer $M=\bar\Xi_\ell$, the learned slope, and the uncertainty penalty measurable. Assume the true loss change satisfies $f(x)\le a^\top x+\tfrac12x^\top Mx$ on $\mathcal C$. The majorizer may include the safety-factor inflation and positive physical regularizer, so that $M^{-1}$ is defined even when the candidate matrix is rank deficient.

Set $m=\E[a\mid\mathcal O_\ell]$, $\phi(x)=m^\top x+\tfrac12x^\top Mx$, and $\mathcal A(\mathcal O_\ell)=-\min_{x\in\mathcal C}\phi(x)$. Let $x_m$ minimize $\phi$ and let $x^*$ minimize the certified robust objective $\Psi(x)=\hat a^\top x+\chi(x)+\tfrac12x^\top Mx$ on that same branch. For the realized step $\hat x\in\mathcal C$, define $\Upsilon_\ell=\Psi(\hat x)-\Psi(x^*)\ge0$, including solver error and any provisional-to-certified model mismatch. The latter quantity is well-defined because the final step is checked for certified feasibility.

Strong convexity on the convex set gives
\[
 \Psi(x^*)\le\Psi(x_m)-\tfrac12\|x_m-x^*\|_M^2,
 \qquad \|\hat x-x^*\|_M\le\sqrt{2\Upsilon_\ell}.
\]
Writing $e=\|m-\hat a\|_{M^{-1}}$ and $s=\|x^*-x_m\|_M$, conditional expectation and Cauchy--Schwarz yield
\[
 \E[f(\hat x)\mid\mathcal O_\ell]
 \le-\mathcal A(\mathcal O_\ell)+\chi(x_m)+\Upsilon_\ell
      +e\sqrt{2\Upsilon_\ell}+es-\tfrac12s^2.
\]
Maximizing the last two terms over $s$ gives
\[
 \E[f(\hat x)\mid\mathcal O_\ell]
 \le-\mathcal A(\mathcal O_\ell)+\chi_\ell
      +\tfrac12\big(e+\sqrt{2\Upsilon_\ell}\big)^2,
 \qquad\chi_\ell=\chi(x_m).
\]
Assume $\mathcal O_\ell\subseteq\mathcal O$. Let $\bar m=\E[a\mid\mathcal O]$ and
$A_\ell^{\rm cert}=-\min_{x\in\mathcal C}(\bar m^\top x+\tfrac12x^\top Mx)$.
The same feasible geometry is used under both information sets. Since the negative minimum is convex in its slope argument, conditional Jensen gives
$I_\ell=\E[A_\ell^{\rm cert}\mid\mathcal O_\ell]-\mathcal A(\mathcal O_\ell)\ge0$.
Negating the previous display proves~\eqref{eq:localgap}. Summing such conditional comparisons requires consistent branch and information accounting. Minkowski bounds the summed estimation/execution penalties by $\tfrac12(\sqrt{2E}+\sqrt{2\Upsilon})^2$, where $E=\sum_\ell e_\ell^2/2$ and $\Upsilon=\sum_\ell\Upsilon_\ell$. No identity between the sum of local quadratic opportunities and the global optimum $U_{\rm cert}$ follows. In particular, strong convexity on either branch does not by itself give the distance inequality across their nonconvex union. \qed

\textbf{Proof of Proposition~\ref{prop:separation}.} (a)~is Proposition~\ref{prop:price}(iv).
(b)~Take a five-node path with self-loops, the row-normalized $P$, a scalar state equal to the class-1 logit margin, incumbent margin $-1$ everywhere, one diffusion step after the intervened depth, and scored node 2 with $\pB=0.5$. The raw update adds $+\delta$ at node 1 and $-\delta$ at node 3 ($\delta=\aSepDelta$). Its terminal effect at node 2 is $(\delta-\delta)/3=0$, so $\pu_2=\pr_2$ and every inert rule leaves node 2 unchanged. Node 3's candidate has a positive slope for the scored loss and is screened out. Admitting node 1's candidate alone moves node 2's margin by $\delta/3$ at one-sided cost \aSepH\ nats and gains \aSepGain\ nats, which is also the oracle projection's gain at that cost (Figure~\ref{fig:sepinst}).
(c)~On a directed chain of $L$ nodes the evidence is in the proposal's candidate row of node 0 at intervention depth $\ell=T-L$ (injected into $H_{T-L+1}$). A corrector with receptive field $k<L-1$ at node $L-1$ sees the same inputs with and without the evidence, so its prediction cannot depend on that evidence. If the incumbent is Bayes-optimal conditional on these local inputs, Proposition~\ref{prop:price}(i) gives zero possible conditional gain for the restricted corrector. RTT's step at node 0 is carried by the incumbent's tail to node $L-1$ and gains \aSepTgain\ nats at cost \aSepTh. \qed
\begin{remark}[Certificates compose]\label{rem:compose}
$\Dinf(p\|p'')\le\Dinf(p\|p')+\Dinf(p'\|p'')$ because $\log(p_c/p''_c)=\log(p_c/p'_c)+\log(p'_c/p''_c)$. A corrector applied to a certified adapted prediction with per-row budgets $\Hrow-\Gamma_\Delta r_{T,i}$ and per-call budget $H^+-\sum_\ell\mathrm{charge}_\ell$ therefore keeps G1--G3 relative to $\pr$. If RTT falls back, the starting prediction is $\pr$ and the available budgets reset to $\Hrow$ and $H^+$. Coverage and statistical performance must be checked separately for the composed policy.
\end{remark}

\begin{theorem}[Assurance per call]\label{thm:assurance}
(i)~For a policy frozen before $N$ i.i.d.\ episodes with $N_\zeta$ regressions, $u=\mathrm{Beta}^{-1}(1-\delta;N_\zeta+1,N-N_\zeta)$ bounds the regression rate with probability at least $1-\delta$; set $u=1$ when $N_\zeta=N$. (ii)~Testing every event of~\eqref{eq:problem}, the auxiliary accuracy and flip events, the stress test and a minimum certified mean at level $\delta$ each, and releasing only if all pass, has false-release probability at most $\delta$. Along a library ordered before the data, stopping at the first failure controls the family-wise error. (iii)~Assume $\E[|d|]<\infty$. Because $d_j\le H^+$, predictable nonnegative $c_j$ that do not depend on the candidate mean $m$ define bets $\lambda_j(m)=\min\{c_j,\frac34/(H^+-m)\}$ give wealth $\mathcal W_t(m)=\prod_{j\le t}(1+\lambda_j(m)(m-d_j))$, non-decreasing in $m$. $\mathrm{UCB}_t=\sup\{m<H^+:\mathcal W_t(m)<1/\delta\}$ then satisfies $\Pr(\exists t:\E[d]>\mathrm{UCB}_t)\le\delta$, with the upper endpoint capped at $H^+$; if $\E[d]=H^+$, then $d=H^+$ almost surely and this endpoint is valid. No lower bound on $d$ is used. (iv)~The $k$ statements hold simultaneously at $\delta/k$ each, and any reweighting $\varpi$ of the components has rate at most $\max\{\varpi^\top p:0\le p\le u^{\rm sim},\varpi_0^\top p\le u^{\rm sim}_{\rm mix}\}$. (v)~\emph{Conditional robustness}: for a prespecified grid $\lambda_1,\dots,\lambda_J>0$ and simultaneous upper confidence bounds $U_j$ on $\E_P[e^{\lambda_j(d-H^+)}]$, every law $Q$ with $\mathrm{KL}(Q\|P)\le\rho$ has $\E_Q[d]\le H^++\min_j(\rho+\log U_j)/\lambda_j$ with probability at least $1-\delta$. This is a statement about the hypothesized ball; a score-based divergence estimate does not provide the upper bound on $\mathrm{KL}(Q\|P)$ needed to place $Q$ in it.
\end{theorem}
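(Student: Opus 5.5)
I will prove the five parts separately. Each one reduces to a standard tool: Clopper--Pearson inversion, a union bound, Ville's inequality for a nonnegative supermartingale, and the Donsker--Varadhan variational formula.

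For (i), the $N$ regression indicators $\mathbf 1\{d_j>\zeta\}$ are i.i.d.\ Bernoulli($\theta$). This holds because the policy, including its fallback and the selected member, is frozen before the draws. I will use the fact that $\Pr_\theta(N_\zeta\le k)$ is nonincreasing in $\theta$ and equals $1-\mathrm{Beta}(k+1,N-k)$ evaluated at $\theta$. So whenever $\theta>u(k)$ we get $\Pr_\theta(N_\zeta\le k)<\delta$, and inverting the test gives $\Pr(\theta>u)\le\delta$. The case $N_\zeta=N$ is trivial with $u=1$.

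For (ii), release happens only if every component test passes. A false release means at least one of the requirements in \eqref{eq:problem} is actually violated. The test of that violated requirement then passes with probability at most $\delta$, so the intersection--union structure bounds the false-release probability by $\delta$ without any multiplicity correction. For the ordered library I will use the fixed-sequence argument. Let $k^\ast$ be the first member whose null is true. A false claim requires the test for $k^\ast$ to pass, which has probability at most $\delta$. Stopping at the first failure ensures no later member is tested without that test passing first.

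Part (iii) is the main step. Fix the true mean $m^\ast=\E[d]<H^+$. Since $d_j\le H^+$, we have $m^\ast-d_j\ge m^\ast-H^+$, so $1+\lambda_j(m^\ast-d_j)\ge 1-\tfrac34>0$; this is where the cap $\lambda_j\le\tfrac34/(H^+-m)$ is used. Hence $\mathcal W_t(m^\ast)$ is nonnegative. It is a martingale because $\lambda_j$ is predictable and $\E[m^\ast-d_j\mid\mathcal F_{j-1}]=0$; this uses independence and integrability. Ville's inequality then gives $\Pr(\exists t:\mathcal W_t(m^\ast)\ge1/\delta)\le\delta$.

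Monotonicity in $m$ needs a separate check. $\lambda_j(m)$ increases with $m$ only through the cap, so for each factor I will verify that $m\mapsto\lambda_j(m)(m-d_j)$ is nondecreasing. Where the cap is inactive this is clear. Where it is active, the factor equals $\tfrac34(m-d_j)/(H^+-m)$, whose derivative is $\tfrac34(H^+-d_j)/(H^+-m)^2\ge0$. Monotonicity makes $\{m:\mathcal W_t(m)<1/\delta\}$ an interval. So on the event where $\mathcal W_t(m^\ast)<1/\delta$ for all $t$, we get $m^\ast\le\mathrm{UCB}_t$. The boundary case $\E[d]=H^+$ forces $d=H^+$ almost surely because $d\le H^+$, and then the endpoint cap makes the bound hold trivially. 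I expect this bookkeeping (monotonicity and the cap) to be the delicate part. The whole argument uses only the upper bound $d\le H^+$, which matches the claim that no lower bound on $d$ is needed.

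For (iv), a union bound over the $k$ statements at level $\delta/k$ gives simultaneous validity. On that event the true vector of component rates $p$ satisfies $0\le p\le u^{\rm sim}$ and $\varpi_0^\top p\le u^{\rm sim}_{\rm mix}$. So $\varpi^\top p$ is at most the stated linear-program maximum.

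For (v), I will use Donsker--Varadhan: for any $\lambda>0$, $\lambda\E_Q[d-H^+]\le \mathrm{KL}(Q\|P)+\log\E_P[e^{\lambda(d-H^+)}]$. The moment generating function is finite because $d-H^+\le0$, which also gives $U_j\le1$ naturally. On the simultaneous event $\E_P[e^{\lambda_j(d-H^+)}]\le U_j$ for all $j$, which holds with probability at least $1-\delta$, dividing by $\lambda_j$ and minimizing over the grid gives the bound for every $Q$ in the ball. The remark about score-based divergence estimates follows from the data-processing inequality: the divergence between label-free score laws only lower-bounds $\mathrm{KL}(Q\|P)$, so it cannot certify membership in the ball.
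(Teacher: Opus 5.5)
Your proposal is correct and follows essentially the same route as the paper: Clopper--Pearson tail inversion, the intersection--union principle plus fixed-sequence testing, Ville's inequality for the one-sided wealth together with the branchwise derivative check of Lemma~\ref{lem:mono}, a union bound with a linear program, and Donsker--Varadhan with $e^{\lambda(d-H^+)}\in(0,1]$. One small tightening concerns the boundary case $\E[d]=H^+$, where validity comes not from the cap but from the following computation. With $d_j=H^+$ each factor $1-\lambda_j(m)(H^+-m)$ lies in $[\frac14,1]$, so $\mathcal W_t(m)\le1<1/\delta$ for every $m<H^+$ and hence $\mathrm{UCB}_t=H^+$. Separately, the lower bound $\frac14$ you derived at $m^\ast$ holds at every $m<H^+$, and it is this that makes the product of nondecreasing factors nondecreasing.
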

\begin{lemma}[Monotone one-sided wealth]\label{lem:mono}
If $d_j\le H^+$ and $\lambda_j(m)=\min\{c_j,\frac34/(H^+-m)\}$ with $c_j\ge0$ not depending on $m$, then each factor is at least $\frac14$ and non-decreasing in $m<H^+$.
\end{lemma}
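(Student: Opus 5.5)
The plan is to fix $j$ and treat the factor $\phi(m)=1+\lambda_j(m)(m-d_j)$ as a function of $m\in(-\infty,H^+)$ alone. The only inputs are $d_j\le H^+$ and the definition $\lambda_j(m)=\min\{c_j,b(m)\}$, where $b(m)=\tfrac34/(H^+-m)$. Two facts follow immediately. First, $\lambda_j(m)\ge0$, because $c_j\ge0$ and $b(m)>0$. Second, $b$ is positive and increasing on $m<H^+$, and $c_j$ does not depend on $m$, so $\lambda_j$ is nondecreasing in $m$.

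For the lower bound I split on the sign of $m-d_j$.
\begin{itemize}
\item If $m\ge d_j$, then $\lambda_j(m)(m-d_j)\ge0$, so $\phi(m)\ge1$.
\item If $m<d_j$, then $0<d_j-m\le H^+-m$ because $d_j\le H^+$. Using $\lambda_j(m)\le b(m)$,
\[
\lambda_j(m)(d_j-m)\le \tfrac34\,\frac{d_j-m}{H^+-m}\le\tfrac34 ,
\]
so $\phi(m)\ge\tfrac14$.
\end{itemize}
This is exactly where one-sidedness enters: no lower bound on $d_j$ is needed.

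For monotonicity I use the same split, on the two half-lines $[d_j,H^+)$ and $(-\infty,d_j]$ (the second intersected with $m<H^+$).
\begin{itemize}
\item On $m\ge d_j$, $\lambda_j(m)(m-d_j)$ is a product of two nonnegative nondecreasing functions, so it is nondecreasing.
\item On $m\le d_j$, the factor $m-d_j$ is nonpositive, so multiplying the minimum by it turns it into a maximum:
\[
\lambda_j(m)(m-d_j)=\max\{c_j(m-d_j),\ b(m)(m-d_j)\}.
\]
The first term is affine with nonnegative slope. The second equals $-\tfrac34\,(d_j-m)/(H^+-m)$. Its derivative in $m$ is $\tfrac34\,(H^+-d_j)/(H^+-m)^2\ge0$, so it is nondecreasing. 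A maximum of nondecreasing functions is nondecreasing.
\end{itemize}
Both pieces take the value $0$ at $m=d_j$, and each is continuous on its half-line. Nondecreasing on both half-lines with a common value at the junction gives nondecreasing on all of $m<H^+$. If $d_j=H^+$, only the second half-line is present.

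The main obstacle is the interaction between the $\min$ in $\lambda_j$ and the sign change of $m-d_j$. On the negative side, a naive argument that "$\lambda_j$ is nondecreasing" pushes the wrong way, since a larger bet multiplies a negative quantity. The min-to-max identity resolves this, together with the ratio derivative, whose sign is controlled precisely by $d_j\le H^+$. Degenerate cases need only a remark: $c_j=0$ gives $\phi\equiv1$.

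Finally, because each factor is at least $\tfrac14>0$ and nondecreasing, the product $\mathcal W_t(m)$ is nondecreasing in $m$. This is the property used in Theorem~\ref{thm:assurance}(iii) to make $\{m:\mathcal W_t(m)<1/\delta\}$ an interval.
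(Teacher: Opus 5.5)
Your proof is correct, and it reaches the result by a somewhat different decomposition than the paper. The paper's argument is one clause inside the proof of Theorem~\ref{thm:assurance}(iii): ``on each branch the derivative of the factor is non-negative.'' It splits on which branch of the $\min$ is active, not on the sign of $m-d_j$. Because $\tfrac34/(H^+-m)$ increases in $m$, the factor has two pieces, with the switch point where $\tfrac34/(H^+-m)=c_j$:
\begin{itemize}
\item below the switch point it equals $1+\tfrac34(m-d_j)/(H^+-m)$, with derivative $\tfrac34(H^+-d_j)/(H^+-m)^2\ge0$;
\item above it, it equals $1+c_j(m-d_j)$, with derivative $c_j\ge0$.
\end{itemize}
Continuity of the $\min$ glues the two pieces.

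That route is shorter, and it makes the ``wrong-way'' effect you flag a non-issue. On the capped branch, the growth of the bet and the sign of $m-d_j$ are handled together by a single derivative, which is nonnegative on all of $m<H^+$, not only for $m\le d_j$.

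Your sign split buys something different. It never needs to locate the switch point. The side $m\ge d_j$ needs no calculus, only nonnegativity and monotonicity of $\lambda_j$. On $m\le d_j$, the $\min$-to-$\max$ identity works for any cap whose product with $m-d_j$ is nondecreasing.

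You also prove the $\tfrac14$ bound explicitly, which the paper leaves implicit. It also follows without a case split: $\lambda_j(m)(m-d_j)\ge-\lambda_j(m)(H^+-m)\ge-\tfrac34$, using $\lambda_j\ge0$ and $d_j\le H^+$.
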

\emph{Proofs.} (i)~is binomial tail inversion \citep{clopper1934}. (ii)~is the intersection--union principle and fixed-sequence testing \citep{ltt}. (iii)~At $m=\E[d]$ the wealth is a non-negative martingale, so Ville's inequality applies \citep{wsr}. Lemma~\ref{lem:mono} makes the rejected set an interval, since on each branch the derivative of the factor is non-negative. (iv)~is the union bound and a linear program. (v)~is the Donsker--Varadhan inequality with $X_\lambda=e^{\lambda(d-H^+)}\in(0,1]$ bounded \citep{cauchois}. \qed

\begin{theorem}[Per node]\label{thm:node}
(i)~A node can flip from correct to wrong only if its predicted class changes, so the per-call negative-flip rate is at most the disagreement with the incumbent, which the offline audit records. (ii)~Let $\mathcal N$ contain $n_{\mathcal N}$ reserved nodes that are not used as scored or labeled targets by any other stage. They may occur as unlabeled graph context. Fix their predictions by calls of the registered size before drawing the inspection subset, and let a uniformly random subset of $n_{\rm cal}$ nodes be labeled afterwards, with $k$ flips. With $\hat F=\max\{F':\Pr_{\mathrm{Hyp}(n_{\mathcal N},F',n_{\rm cal})}(X\le k)\ge\delta\}$, the flip rate on the uninspected nodes is at most $(\hat F-k)/(n_{\mathcal N}-n_{\rm cal})$ with probability at least $1-\delta$.
\end{theorem}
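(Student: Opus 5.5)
For part (i), the plan is to argue nodewise. A negative flip at node $i$ means the incumbent's argmax equals $y_i$ while the served argmax does not. That event forces $\arg\max_c\ps_{ic}\ne\arg\max_c\pr_{ic}$. We fix a deterministic tie-break for the argmax, applied identically to both predictions, so that an unchanged argmax really is unchanged. The indicator of a negative flip is then pointwise at most the indicator of disagreement. Averaging over the call's scored nodes, with either uniform or the weights $w_i$, gives the per-call rate bound. The disagreement is label-free, so the offline audit records it exactly. No probability enters this part.

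For part (ii), we reduce to sampling without replacement from a fixed finite population. The key point is independence of the population from the sampling. The predictions on $\mathcal N$ are fixed before the inspection subset is drawn, and the nodes of $\mathcal N$ are not used as labeled targets anywhere else. Conditionally on the whole pipeline, the set of flipped nodes in $\mathcal N$ is therefore a fixed subset of unknown size $F$. Appearing as unlabeled context affects predictions but not the randomness of the subset. The number $X$ of flips among the $n_{\rm cal}$ uniformly chosen nodes is then exactly $\mathrm{Hyp}(n_{\mathcal N},F,n_{\rm cal})$.

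Next we invert the hypergeometric tail. The CDF $\Pr(X\le k)$ is nonincreasing in $F$, by the standard coupling in which adding a flipped node to the population can only increase $X$ stochastically. Hence $\{F':\Pr_{F'}(X\le k)\ge\delta\}$ is an initial segment, and $\hat F$ is its maximum. The event $F>\hat F(X)$ implies $\Pr_F(X'\le X)<\delta$. Its probability is at most $\Pr_F(G(X)<\delta)\le\delta$, where $G$ is the CDF at the true $F$; this is the usual validity of a one-sided exact inversion. So $F\le\hat F$ with probability at least $1-\delta$.

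On that event, the uninspected nodes contain exactly $F-k\le\hat F-k$ flips among $n_{\mathcal N}-n_{\rm cal}$ nodes, which is the stated rate. The main obstacle I expect is justifying the fixed-population claim rigorously, i.e.\ that no stage's choices depend on the reserved labels or on the subset draw. The plan is to make this an explicit conditioning argument on the sigma-field generated by everything preceding the draw. Monotonicity and the inversion are routine.
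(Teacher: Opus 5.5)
Your proposal is correct and follows the paper's proof. Part (i) is the pointwise inclusion of negative flips in argmax disagreements. Part (ii) fixes the flip vector before the uniform inspection draw, so the inspected count is hypergeometric and $\hat F$ inverts its tail. Your additions (the shared tie-break, the monotone coupling, and the explicit conditioning on everything preceding the draw) spell out what the paper's two-line argument leaves implicit.
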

\emph{Proof.} (i)~is immediate. (ii)~After the calls, the flip vector is fixed and the inspected set is a uniform subset, so $k$ is hypergeometric; $\hat F$ inverts its tail. \qed

\section{Statistical protocol}\label{app:stats}
\textbf{Four uses of labels.} Fitting labels ($\le$2017) train every network. 2018 labels fit the temperature, the radii and the selection. The 2019 development half serves every design decision, the $\zeta$ anchor, the sweeps of Figure~\ref{fig:deploy}(c,d) and Figure~\ref{fig:robust}. The reported protocol assigns the final-half audit pool, the reserved node set, and the 2020 cohort to their respective evaluation stages (Table~\ref{tab:costs}). Any later reuse must be recorded as reanalysis, not as evaluation on an untouched population. Every audited statement concerns one frozen policy on the registered generator, conditional on the graph and the label masks: the i.i.d.\ premise requires fresh, independent query and corruption draws conditional on that population. Stateful proposals or caches additionally require independently initialized histories for this audit; a continuous shared-state stream is a different sampling unit. Intervals are episode-level and paired where methods share episodes. No design effect is mixed with the betting bounds, which rest on the same i.i.d.\ premise. A node-cluster bootstrap for the population-level estimand widens intervals by a factor \vClusterDE\ and is reported separately.

\textbf{The release ledger.} Outcomes are mutually exclusive and counted as integers (Table~\ref{tab:audit}). First-pass failures are attributed to the first failing predicate in the order damage, row, descent. Re-certified releases and fallbacks partition the failures, and the analysis script asserts $0\le N_{\rm F}\le N_{\rm fail}\le N$ before any rate is printed. The release rule also requires a certified mean of at least $3.0\times10^{-3}$ nats ($1.5\times10^{-3}$ at the tolerance point), registered before the audit.

\begin{table}[!htbp]\centering
\caption{Final audit of the registered member: (a)~release ledger (integer counts of 2,048 calls); (b)~the ten statements, pointwise at $\delta$ and simultaneous at $\delta/10$, with the component means of the separate 4,096-episode audits, which are independent samples and differ from the mixture's component means within sampling error.}\label{tab:audit}
\TabAudit
\end{table}

\textbf{Selection.} Library members are ordered before any data from least to most aggressive, all with trajectory-valid radii except a per-depth member placed last. Each member reached is tested on four pools of 1,024 fresh 2018 episodes (mixture, clean, cache, noise). The path stops at the first failure, and among the passing members the one with the largest certified mean is registered: \vSelReg, with certified mean \vSelCert. The path tests five members and stops at the fifth; the per-depth member, placed sixth, gains \vPdGain\ but is not reached, so it is reported as a variant, not selected. The independent final audit validates the frozen served policy. The recorded selection procedure alone does not establish a post-selection $\delta_{\rm cov}$ guarantee; see Section~\ref{sec:assurance}.

\textbf{Betting bounds and their implementations.} The registered bets are tuned to $N$. Their values at every $t$ form an anytime-valid sequence whose last value is the fixed-$N$ bound. The reported horizon-free schedule is wider at this $N$; that empirical comparison is not a universal ordering of confidence-sequence constructions. The Gaussian first-order width $\sqrt{2\sigma^2\log(1/\delta)/N}$ is the benchmark: the registered bets come within \vBetRatio\ of it. A finite-sample bound can lie below this first-order approximation because the exact wealth depends on higher-order features of the observed contrasts; the approximation is not a lower limit on all valid bounds. Every implementation is computed from the same per-episode record (Table~\ref{tab:stats}; Figure~\ref{fig:statistics}).

\textbf{Transfer and robustness.} The member frozen after selection is audited on 2,048 episodes of the reserved 2020 cohort with its own Clopper--Pearson and betting bounds. The reported label-free divergence estimate is $\hat\rho=\vRhoHat$. Data processing makes the exact divergence between the score distributions no larger than the divergence between the full episode laws; the estimate itself also has sampling error. It is a diagnostic, not a certified upper bound on the episode-law divergence, and cannot establish membership in the ball of Theorem~\ref{thm:assurance}(v). The cohort's direct audit gives a certified mean of \vTrRttLB, whereas the conditional curve is \vRobAtRhoHat\ at $\hat\rho$. Comparing these two lower bounds does not quantify the actual shift; the direct audit carries the cohort-specific claim.

\begin{table}[!htbp]\centering
\caption{Statistics: (a)~lower bounds on the mean gain by implementation at $N=2{,}048$ against the Gaussian first-order width; (b)~transfer to the reserved 2020 cohort, audited directly (the pass/fail column tests $u\le p^\star$, not the principal audit's minimum-mean requirement); (c)~the conditional robustness curve for laws within KL radius $\rho$ of the audit law.}\label{tab:stats}
\TabStats
\end{table}

\textbf{Deployment unit, primary comparison and multiplicity.} Theorem~\ref{thm:node} concerns the registered call size. The per-call rates are reported for $n_{\rm q}\in\{64,256,1{,}024,4{,}096\}$, both for the registered member and for members re-selected per size (Figure~\ref{fig:deploy}c). The primary comparison is a one-sided paired test at level $\delta$. The eight secondary comparisons are Holm-corrected at the same level, and all pass (Table~\ref{tab:frontierfull}).

\begin{figure}[!htbp]
\centering\includegraphics[width=\linewidth]{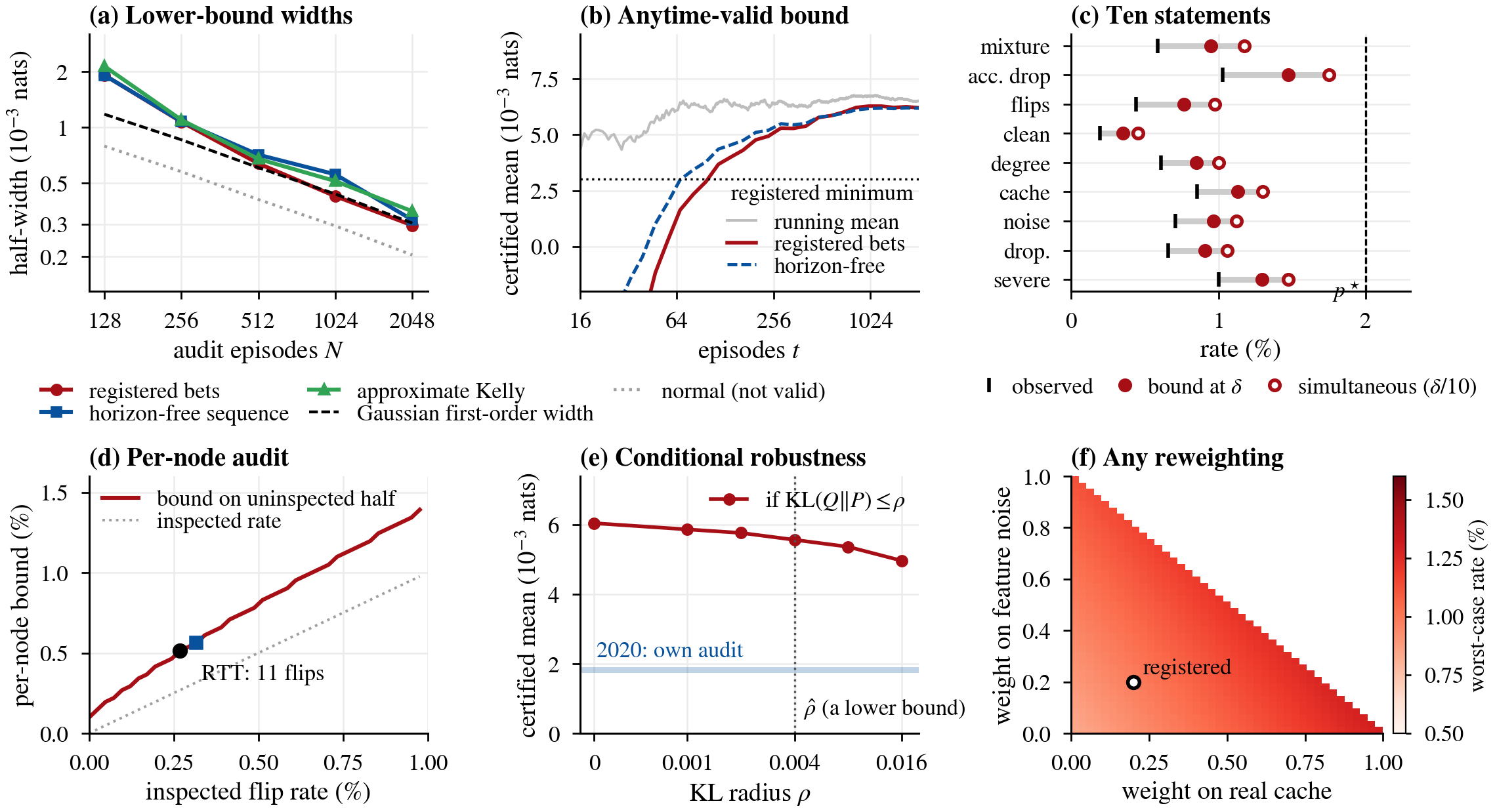}
\caption{Statistical machinery on the audit record. (a)~Half-widths of lower bounds against $N$ with the Gaussian first-order width. (b)~Running bounds: the registered bets at every $t$ (anytime-valid) and the horizon-free sequence. (c)~The ten statements with the observed rate, the bound at $\delta$ and the simultaneous bound. (d)~Per-node bound against the inspected flip rate; circle: RTT, square: the one-pass corrector. (e)~The conditional robustness curve, and the 2020 cohort's own audit (band). (f)~Worst-case regression rate over reweightings of the components.}\label{fig:statistics}
\end{figure}

\section{Where the gain goes: definitions}\label{app:gap}
\textbf{Planted partition and numerical scope.} Labels follow a known law on a graph with the arxiv architecture. At $\gamma=0$ the incumbent is calibrated by construction. A planted community containing 30\% of rows is made overconfident by drawing labels from $\softmax((1-\gamma)z^{\rm r})$, or underconfident by using $\softmax(z^{\rm r}/(1-\gamma))$, with the described global temperature refit. The label law is a function of the graph and therefore belongs to $\mathcal O$. Water-filling computes the exact output-space frontier for this known law.

The internal levels in Table~\ref{tab:gap} and Figure~\ref{fig:where} use the reported numerical procedure: the reference-tail solver is supplied with conditional-mean slopes, with endpoint damage evaluated by an additional rollout for the reachability diagnostic and certified charges for the certification diagnostic. Table~\ref{tab:gap} denotes the achieved numerical levels by $\widetilde U_{\rm reach}$ and $\widetilde U_{\rm cert}$. The shorter labels in Figure~\ref{fig:where} refer to those same computed values, not to certified global optima of Theorem~\ref{thm:frontier}. Warm-starting from the next lower level establishes the reported numerical ordering, not global optimality. If a level is the gain of a feasible trajectory, it lower-bounds its own exact supremum; a difference between two such computed levels need not bound the difference between the suprema.

Table~\ref{tab:gap} also gives local information, estimation, uncertainty, and execution terms as surrogate diagnostics. The identification $\sum_\ell A_\ell^{\rm cert}=U_{\rm cert}$ is not assumed. Relating the last column to the true gain requires verification that each term uses the same convex branch, physical-coordinate majorizer, conditioning information, and trajectory convention as~\eqref{eq:localgap}, including any branch-selection contribution. Heads that omit a planted-community identifier may have an information gap, but its positivity does not follow from omission of that identifier alone because other inputs can encode the same information.

\textbf{Real data.} $\pB$ is replaced by a cross-fitted calibrated classifier trained on the labels of the 2018 calibration episodes. $\hat R_i-1$ and $\hat U_{\rm out}$ are therefore plug-in quantities, and clipping $\pr_{ic}$ at $10^{-3}$ changes the medians by at most \val{0.2}. Theorem language is reserved for the known law.

\textbf{Efficiency and placement.} $\eta_\ell$ is the linear program of Theorem~\ref{thm:frontier}(ii) on the candidate hull, with the cross-fitted posterior, averaged over development calls; Figure~\ref{fig:where}(c) and Table~\ref{tab:factors} report it normalized by its bound $\max_i(R_{\ell,i}-1)$. Caps use a water-filling-inspired allocation heuristic based on replayed certified efficiency, the gain per certified nat, under an expected-spend budget. The exact water-filling theorem concerns the output-space frontier; it does not establish global optimality of this internal cap-placement heuristic. The final checker, not the allocation heuristic, enforces each call's budgets.

\textbf{Row groups of Figure~\ref{fig:where}(d).} A \emph{near-cancellation row} is one whose raw-update terminal prediction satisfies $\|\pu_i-\pr_i\|_1<10^{-3}$ while the proposal moved some row within its remaining-tail receptive field (at most $T-\ell-1$ transitions after injection) at an open depth. A \emph{beyond-two-hops row} is one for which more than half of the raw update's influence on its terminal logits, computed with the tangent basis, comes from rows more than two hops away. Contributions are the reported paired per-row gains summed with call weights and averaged over the mixture. The theoretical remaining-tail count and the empirical tangent-influence rule are distinct definitions; the reported group memberships follow the latter measurement protocol. A disjoint assignment rule is required before the three displayed contributions can be interpreted as a partition of the total gain.

\begin{table}[!htbp]\centering
\caption{Reported frontier and internal-optimization diagnostics ($10^{-3}$ nats per call). (a) Arxiv with a cross-fitted posterior. (b) Planted-partition diagnostics: $I$ denotes information, $E$ estimation, $\chi$ uncertainty conservatism, and $\Upsilon$ execution suboptimality. Tildes denote achieved numerical levels, hats denote empirical or posterior-based estimates, and the unmarked planted-law output frontier is analytic. These internal levels need not attain the exact suprema. The last column is the reported surrogate expression $\widetilde U_{\rm cert}-I-\chi-\frac12(\sqrt{2E}+\sqrt{2\Upsilon})^2$; without the additional identifications discussed above, it is not a proved global lower bound on RTT's gain. The rounded component values do not uniquely determine this nonlinear expression; reproducing it requires the unrounded records.}\label{tab:gap}
\TabGap
\end{table}

\begin{figure}[!htbp]
\centering\includegraphics[width=\linewidth]{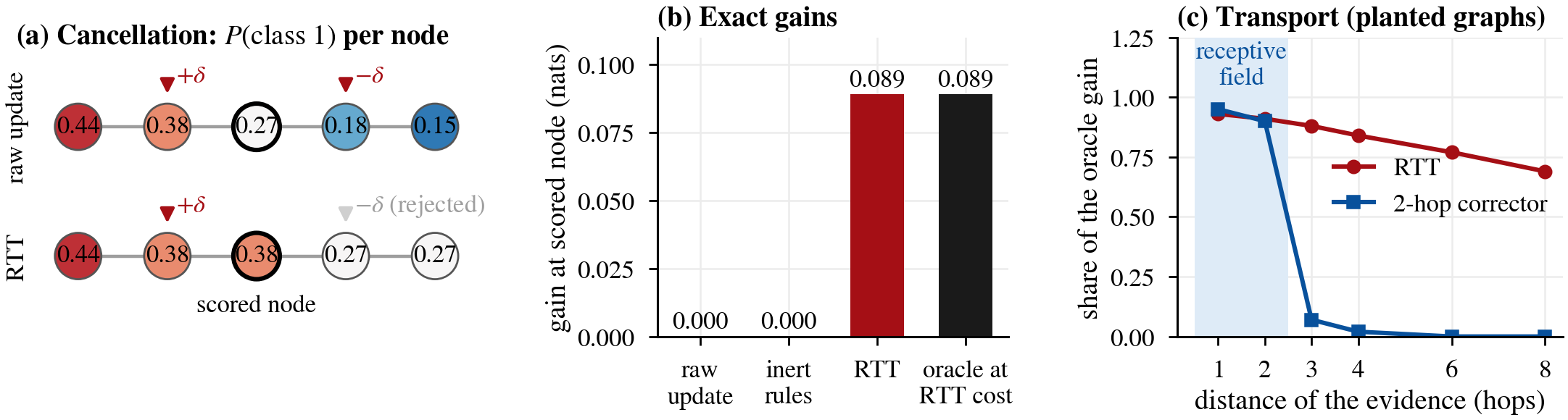}
\caption{Separation diagnostics for Proposition~\ref{prop:separation}. (a--b) The cancellation instance is computed analytically. (a)~Cancellation: probability of class 1 per node under the raw update and under RTT, which admits only the helpful candidate. (b)~Gains at the scored node. (c)~Planted graphs: share of the oracle gain against the distance of the evidence, for RTT and a two-hop corrector.}\label{fig:sepinst}
\end{figure}

\section{Extended results}\label{app:results}
\textbf{Datasets and incumbents.} Breadth uses the GOOD-Arxiv degree and time shifts \citep{good}; the Twitch, Facebook-100 and Elliptic shifts of \citet{eerm} on the data of \citet{twitch}, \citet{fb100} and \citet{elliptic}; ogbn-products and ogbg-molpcba \citep{ogb}; Folktables ACSIncome \citep{folktables} with a residual tanh MLP; and ImageNet-C \citep{imagenetc} with the last four blocks of a ResNet-50 \citep{resnet} adapted by Tent. The incumbent families are APPNP \citep{appnp}, GCNII \citep{gcnii}, GRAND \citep{grand}, shallow GraphSAGE \citep{hamilton2017} and GCN \citep{kipf2017}, and GIN-style molecular encoders \citep{xu2019gin}; Table~\ref{tab:families} lists their primitives. Every entry below is computed from the registered per-episode records by the same script as the main text.

\begin{table}[!htbp]\centering
\caption{Complete frontier on arxiv (final audit, as Table~\ref{tab:frontier}); Holm-adjusted $p$-values for the secondary comparisons.}\label{tab:frontierfull}
\TabFrontierFull
\end{table}
\begin{table}[!htbp]\centering
\caption{What each mechanism buys: one swap at a time on the registered configuration (final audit; $10^{-3}$ nats; alternative $\to$ registered). Swaps are order-dependent comparisons, not an additive decomposition.}\label{tab:swaps}
\TabSwaps
\end{table}
\begin{table}[!htbp]\centering
\caption{Neighboring methods, and test-time adaptation under a batch-shift collapse (20 streams of 600 calls, a separate experiment from Figure~\ref{fig:service}, shift at call 150; damage in $10^{-3}$ nats on the shifted segment; positive is a loss). Part (a) reports the neighboring methods themselves; proposal-specific RTT comparisons are given in Table~\ref{tab:class}. These rows do not establish an RTT wrapper experiment for each neighboring method.}\label{tab:neighbors}
\TabNeighbours
\end{table}
\begin{table}[!htbp]\centering
\caption{Breadth under one protocol (final audits, 2,048 episodes; $10^{-3}$ nats per task unit; $\zeta$ in $10^{-3}$). $\hat U_{\rm out}$: estimated oracle frontier; share: RTT gain over $\hat U_{\rm out}$; verified: calls released on the first pass; the last column gives the task metric of the incumbent and of RTT on the audited episodes.}\label{tab:breadth}
\TabBreadth
\end{table}
\begin{table}[!htbp]\centering
\caption{ogbg-molpcba on four reference families: the reserved scaffold halves define a custom finite-population protocol with exact empirical test-half AP. The separately reported official-test evaluation uses the full official test population and is not independent of any custom split drawn from that same population.}\label{tab:mol}
\TabMol
\end{table}
\FloatBarrier
\begin{figure}[H]
\centering\resizebox{\linewidth}{!}{\FigRobust}
\caption{Robustness on the development half. (a)~Three incumbents $\times$ eight training seeds: gain of every passing seed for RTT (red circles) and the one-pass corrector (blue squares); crosses: failed selections, deployed as the incumbent (gain 0, plotted at zero). (b--e)~Gain against the row trim, the safety factor, the number of support-function probes and the trajectory miscoverage budget (circle: registered value).}\label{fig:robust}
\end{figure}
\begin{figure}[H]
\centering\resizebox{\linewidth}{!}{\FigService}
\caption{Reported monitored-service traces and alarm-delay counts from the 20-stream study (held-out corruption family begins at call 200): (a)~trailing gain of the fixed and the monitored policy; (b)~cumulative regressions; (c)~alarm delays, including streams that never alarm.}\label{fig:service}
\end{figure}

\section{Cost, numerics and the checker}\label{app:cost}
\textbf{One accounting convention.} Latency is defined per fresh 4,096-node call on the full arxiv graph on one A100 80GB, relative to one incumbent pass of \vTinc\ ms. RTT's machinery (heads, window, two-branch solver, provisional contracts and tube, certificate, expected fallback) adds \vRttOverhead\ ms to the proposal's own cost. Latency ratios therefore follow $(t_{\rm inc}+t_{\rm prop}+t_{\rm RTT})/t_{\rm inc}$, and the proposal's cost is counted once. Two rollouts cost $t_{\rm inc}+t_{\rm raw}$ sequentially and $\max\{t_{\rm inc},t_{\rm raw}\}$ on two devices (Table~\ref{tab:costs}). For the native, GPR-GNN, Co-GNN, and AMP rows, the listed proposal cost is incremental to the shared incumbent computation and $t_{\rm raw}=t_{\rm inc}+t_{\rm prop}$. For GTrans, Matcha, v2, and LoRA, it is the complete proposal runtime and $t_{\rm raw}=t_{\rm prop}$. The corrector row instead reports its additional cost beyond one incumbent pass; it does not add RTT's machinery. For in-network proposals RTT uses fewer FLOPs than two rollouts. For proposals that are full networks (v2, LoRA, GTrans, Matcha) it is slower than two sequential rollouts, and its case there rests on gain and on the per-node floor.

\begin{table}[!htbp]\centering
\caption{Costs: (a)~latency ratios by proposal, with two rollouts sequential and on two devices; (b)~episode-evaluation and compute ledger of the reported study. Part (a) contains no separate FLOP column; relative FLOPs for the primary comparison are in Table~\ref{tab:frontier}.}\label{tab:costs}
\TabCosts
\end{table}

\textbf{The reference is the deployed float32 incumbent.} Theorems are stated in real arithmetic. The certificate is computed by a checker in float64 with directed rounding. It adds two allowances: the executed pass's state-update residual, and the forward error of the incumbent's own float32 continuation, using accumulation-error factors $\gamma_m=m\epsilon_{\rm m}/(1-m\epsilon_{\rm m})$, where $m$ is the relevant accumulation length, $m\epsilon_{\rm m}<1$, and $\epsilon_{\rm m}$ is the unit roundoff. These factors must multiply the corresponding absolute-sum bounds and be propagated through the remaining primitives, including a validated $\tanh$ enclosure. Both are deducted from the budgets. The checker is implemented separately from the executor. The reported tests include 10,000 planted violations whose hidden margins range from $10^{-7}$ to $10^{-3}$ nats, following SoundnessBench \citep{soundnessbench}; on 1,000 certificates re-evaluated in 50-digit arithmetic; and against corrupted factors, corrupted steps and switched-off rounding (Table~\ref{tab:checker}).

\begin{table}[!htbp]\centering
\caption{Checker and stress-test results: planted violations, high-precision re-evaluation, endpoint costs, adversarial directions, corrupted inputs, and displacement invariance. A passing finite test is evidence about the tested instances, not a proof of universal numerical soundness.}\label{tab:checker}
\TabChecker
\end{table}

\begin{figure}[!htbp]
\centering\includegraphics[width=\linewidth]{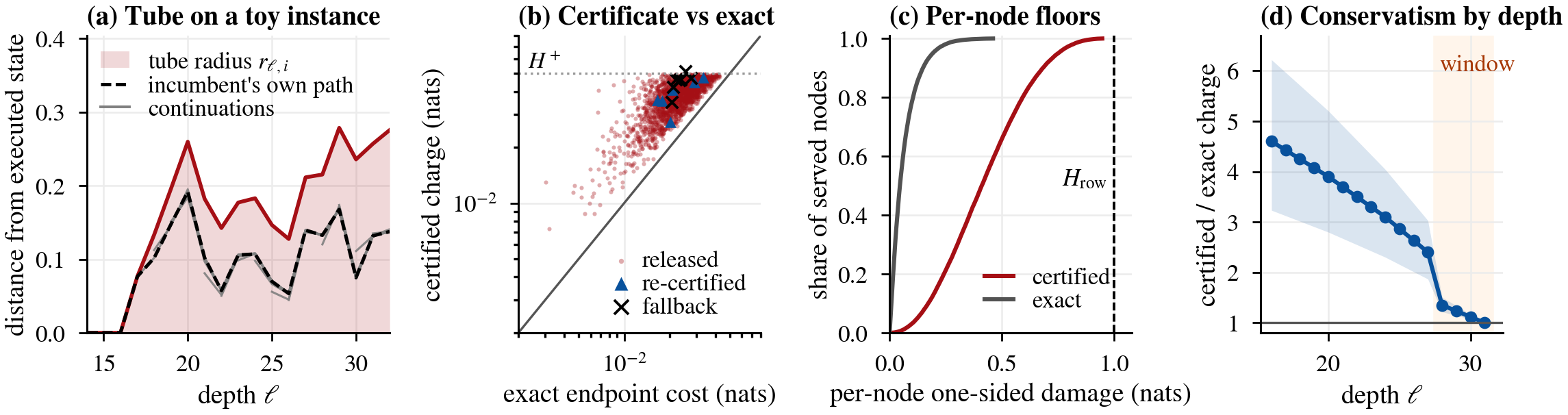}
\caption{The certificate. (a)~Toy contractive incumbent, computed exactly: distance of every incumbent continuation started on an executed segment from the executed state, against the tube radius of Lemma~\ref{lem:tube}. (b)~Certified charge against the offline endpoint cost for the 2,048 candidate trajectories, including trajectories subsequently rejected by the selector; no trajectory lies below the diagonal. A fallback serves the incumbent and therefore has zero served loss contrast. (c)~Distribution over served nodes of the certified per-node bound $\Gamma_\Delta r_{T,i}$ (red) and of the exact one-sided damage $h_i=\Dinf(\pr_i\|\ps_i)$ (grey). (d)~Certified-to-exact charge ratio by depth, median and 5--95\% band; shaded: the window.}\label{fig:certificate}
\end{figure}
\FloatBarrier
\begin{figure}[H]
\centering\includegraphics[width=\linewidth]{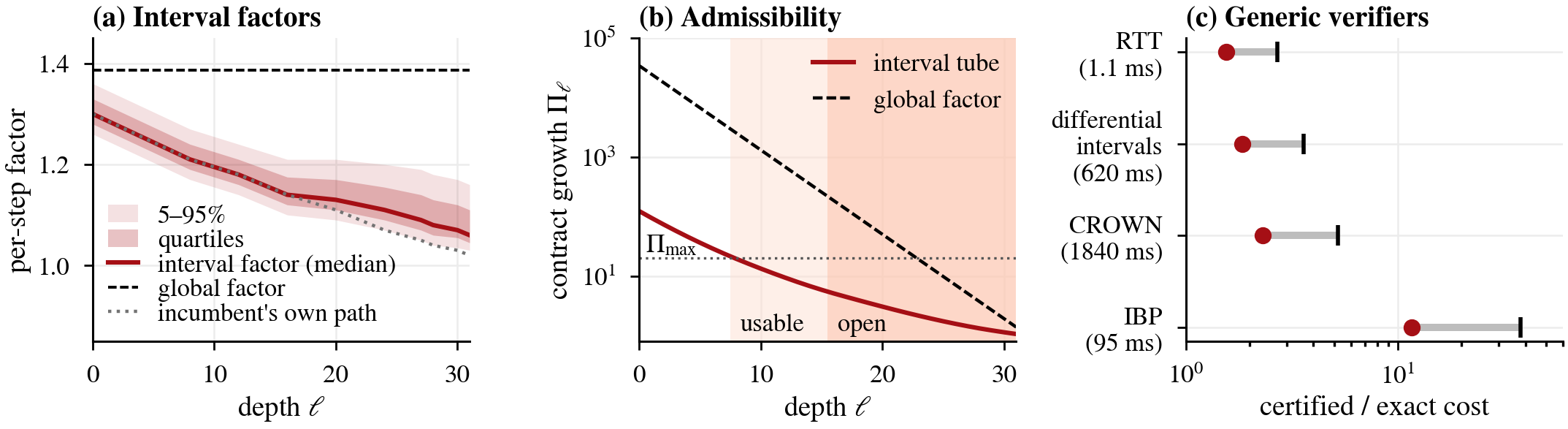}
\caption{Mechanism. (a)~Per-step factors by depth: interval factors on the tube, the factor recorded along the incumbent's own trajectory, and the global factor. (b)~Contract growth $\Pi_\ell$ with the admissibility cap $\Pi_{\max}$; the open depths must be admissible. (c)~Generic verifiers on the same executed interventions (64-node calls, last eight depths): certified-to-exact ratio and time per call.}\label{fig:mechanism}
\end{figure}

\Needspace{24\baselineskip}
\section{Positioning and notation}\label{app:repro}
\begin{table}[H]\centering
\caption{Positioning on explicit axes: when the check happens, what is changed, what is guaranteed, whether the reference prediction is computed, and which updates are supported.}\label{tab:positioning}
\TabPositioning
\end{table}
\begin{table}[H]\centering
\caption{Notation.}\label{tab:notation}
\TabNotation
\end{table}
\end{document}